\documentclass{article} 
\usepackage{iclr2025_conference,times}


\usepackage{amsmath,amsfonts,bm}









\def\eqref#1{equation~\ref{#1}}









\def\1{\bm{1}}








\def\vone{{\bm{1}}}



\def\mB{{\bm{B}}}

\def\mD{{\bm{D}}}

\def\mP{{\bm{P}}}

\DeclareMathAlphabet{\mathsfit}{\encodingdefault}{\sfdefault}{m}{sl}
\SetMathAlphabet{\mathsfit}{bold}{\encodingdefault}{\sfdefault}{bx}{n}


\def\gD{{\mathcal{D}}}

\def\gL{{\mathcal{L}}}

\def\gN{{\mathcal{N}}}

\def\gR{{\mathcal{R}}}

\def\gT{{\mathcal{T}}}

\def\gW{{\mathcal{W}}}
\def\gX{{\mathcal{X}}}
\def\gY{{\mathcal{Y}}}
\def\gZ{{\mathcal{Z}}}











\newcommand{\norm}[1]{\left\lVert#1\right\rVert}


\usepackage{hyperref}
\usepackage{url}

\usepackage[utf8]{inputenc} 
\usepackage[T1]{fontenc}    
\usepackage{hyperref}       
\usepackage{url}            
\usepackage{booktabs}       
\usepackage{amsfonts}       
\usepackage{nicefrac}       
\usepackage{microtype}      
\usepackage{xcolor}         
\usepackage{graphicx} 
\usepackage{cleveref}
\usepackage{amsmath}
\usepackage{amssymb}
\usepackage{mathtools}
\usepackage{amsthm}
\usepackage{algorithm}
\usepackage[noend]{algorithmic}
\usepackage{subcaption}
\usepackage{cancel}
\usepackage{wrapfig}
\usepackage{multirow}
\usepackage{colortbl}

\usepackage{thm-restate}
\theoremstyle{plain}
\newtheorem{theorem}{Theorem}[section]

\newtheorem{lemma}[theorem]{Lemma}

\theoremstyle{definition}

\theoremstyle{remark}


\newcommand{\defeq}{\vcentcolon=}

\newcommand{\data}{\gD}

\definecolor{mygray}{gray}{0.7}
\definecolor{mygreen}{RGB}{34,139,34}
\definecolor{tgreen}{RGB}{147,197,114}
\definecolor{myblue}{RGB}{46,139,192}

\title{Learning Structured Representations by Embedding Class Hierarchy with Fast Optimal Transport}


\author{Siqi Zeng$^{1}$ \quad Sixian Du$^{2}$ \quad Makoto Yamada$^{3}$ \quad Han Zhao$^{1}$\\
$^{1}$University of Illinois, Urbana-Champaign \quad $^{2}$Stanford University \quad \\ $^{3}$Okinawa Institute of Science and Technology\\
\texttt{\{siqi6, hanzhao\}@illinois.edu; dusixian@stanford.edu,} \\
\texttt{makoto.yamada@oist.jp}
}

%

\iclrfinalcopy 
\begin{document}

\maketitle

\begin{abstract}
To embed structured knowledge within labels into feature representations, prior work~\citep{zeng2022learning} proposed to use the \textbf{C}o\textbf{p}henetic \textbf{C}orrelation \textbf{C}oefficient (CPCC) as a regularizer during supervised learning. This regularizer calculates pairwise Euclidean distances of class means and aligns them with the corresponding shortest path distances derived from the label hierarchy tree. However, class means may not be good representatives of the class conditional distributions, especially when they are multi-mode in nature. To address this limitation, under the CPCC framework, we propose to use the Earth Mover's Distance (EMD) to measure the pairwise distances among classes in the feature space. We show that our exact EMD method generalizes previous work, and recovers the existing algorithm when class-conditional distributions are Gaussian. To further improve the computational efficiency of our method, we introduce the \textbf{O}ptimal \textbf{T}ransport-CPCC family by exploring four EMD approximation variants. Our most efficient OT-CPCC variant, the proposed Fast FlowTree algorithm, runs in linear time in the size of the dataset, while maintaining competitive performance across datasets and tasks. The code is available at \url{https://github.com/uiuctml/OTCPCC}.
\end{abstract}

\section{Introduction}
Supervised classification problems have been a cornerstone in machine learning. While traditional classification methods treat class labels as independent entities, there has been a growing interest in leveraging semantic similarity among classes to improve model performance while following existing label dependency. This is particularly important in fine-grained classification tasks where labels are inherently living on a hierarchy, such as in the CIFAR~\citep{Krizhevsky2009LearningML} and ImageNet~\citep{5206848} datasets. To capture the hierarchical relationship between classes, \citet{zeng2022learning} made a significant contribution by embedding the tree metric of a given class hierarchy into the feature space. They introduced a regularizer, the Cophenetic Correlation Coefficient (CPCC), to correlate the tree metric with the Euclidean distance in the class-conditioned representations. This approach of encoding hierarchical relationship in the latent space not only led to more interpretable features compared to other hierarchical classification methods~\citep{shen2023hierarchical}, but also improved some generalization performance across multiple datasets. 

Despite the value of this framework for embedding class hierarchies, it presents a notable limitation. Specifically, \citet{zeng2022learning} uses the $\ell_2$ norm between two class Euclidean centroids, or equivalently, maximum mean discrepancy with the linear kernel, as the distance of two class distributions. However, this approach overlooks the difference of samples within the same class, and thus cannot fully capture the global structural information given pairs of class-conditional distributions, especially when they are multi-modal. On the other hand, the use of Optimal Transport (OT) as an alternative to Euclidean centroids has first gained attention in the context of clustering, where Euclidean centroids neglect the intrinsic geometry of data distributions~\citep{Cuturi_Doucet_2014}. For the same motivation, we propose addressing the limitation of $\ell_2$-CPCC \citep{zeng2022learning}, by integrating the OT distance, also known as the Wasserstein distance, and its variants, as a more effective metric for measuring the distance between class distributions in the problem of embedding hierarchical representations.

Our contributions are as follows: (i) We first generalize the $\ell_2$-CPCC with EMD, and propose a family of OT-CPCC algorithm to address the limitation of using $\ell_2$ centroids that misrepresents class relationships. We provide a detailed comparative analysis of $\ell_2$-CPCC versus OT-CPCC about their forward and backward computation. (ii) We propose a novel linear time OT approximation algorithm, FastFT, and apply it in the CPCC framework to learn hierarchical representations that is significantly more efficient than the cubic time exact EMD solution. (iii) We empirically analyze the advantage of OT-CPCC over $\ell_2$-CPCC across a wide range real-world datasets and tasks. 

\section{Preliminaries}
\paragraph{Notation and Problem Setup}
Throughout the paper we focus on the supervised $k$-class classification problem. We denote a sample of input $x$ with label $y$ living in the space of $\gX \subseteq \mathbb{R}^p$ and $\gY = [k] \defeq \{1\ldots, k\}$. The dataset $\data$ consists of $N$ data pairs $\{(x_i, y_i)\}_{i=1}^N$. 

Given the label space $\gY$, we have a corresponding \textit{label tree} $\gT$ that characterizes the relationship between different subsets of $\gY$. In particular, each node of $\gT$ is associated with a subset of $\gY$: each leaf node $v$ is associated with a particular class in $\gY$; any internal node corresponds to the subset that is the union of its leaf children classes; the root node of $\gT$ corresponds to the full $\gY$. The metric function of $\gY$ is called \emph{tree metric} $t : \gY \times \gY \rightarrow \mathbb{R}_{+}$ which measures the similarity of two class labels $u, v$ based on the shortest path distance of two \emph{leaf nodes} on $\gT$. Intuitively, the tree $\gT$ could be understood as a dendrogram of $\gY$ that characterizes the similarity of different classes.

Let $\gZ \subseteq \mathbb{R}^{d}$ be the representation/feature space. A deep network $h$ is the composition of two functions $h_{w,\theta} = g_{w} \circ f_\theta : \gX \rightarrow \Delta_k$ where $\Delta_k$ is a $k-1$ dimensional probability simplex. The feature encoder $f_\theta$ contains all layers until the penultimate layer of a neural network. On top of that, $g_{w} : \gZ \rightarrow \Delta_k$ is a linear classifier with weights $w$. Let $q$ be the one-hot encoding vector of $y$. The parameters $\theta$ and $w$ are learned by minimizing $\gL$ combining the cross entropy loss $\ell_\text{CE}$ with an optional regularization term $\gR$:
\begin{equation}
    \gL(x, y; w, \theta) = \sum\limits_{i=1}^{k} - q_i\log h_{w,\theta}(x)_i + \lambda \gR(x, y, \theta).
    \label{eq:cross-entropy}
\end{equation}
The hyperparameter $\lambda$ is the regularization factor which controls the strength of $\gR$. The goal of the problem is to embed the class hierarchical relationship in the feature space. At a high level, the distance of class-conditioned representations should approximate the tree metric of classes in $\gT$. This extra constraint can be quantitatively characterized by CPCC as the regularizer term $\gR$.

\subsection{Cophenetic Correlation Coefficient}
The Cophenetic Correlation Coefficient (CPCC)~\citep{CPCC} was introduced as a regularization term in~\citet{zeng2022learning} to embed the class hierarchy $\gT$ in the feature space. CPCC has the same formulation as Pearson's correlation coefficient that measures the linear dependence between two metrics $t$ and $\rho$:
\begin{equation}
    \text{CPCC}(t, \rho) = \frac{\sum\limits_{u < v} (t(u, v) - \bar{t})(\rho(u, v) - \bar{\rho})}{(\sum\limits_{u < v} (t(u, v) - \bar{t})^2)^{-1/2}(\sum\limits_{u < v} (\rho(u, v) - \bar{\rho})^2)^{-1/2}},
    \label{eq:cpcc}
\end{equation}
where $\rho$ is a distance function between $u, v$ on feature space $\gZ$, and $\bar{t},\bar{\rho}$ are the average of each metric. Let $\data_u = \{(x^u_i, u)\}_{i = 1}^{m}, \data_v = \{(x^v_i, v)\}_{i = 1}^{n} \subseteq \gD$ be datasets with the same class label $u,v$. When $\ell_2$-CPCC is used as a regularizer in Eq.~\ref{eq:cross-entropy}, 
\begin{equation*}
    \rho(u, v) \defeq \left\|\frac{1}{m}\sum_{(x^u_i, u) \in \data_u}f(x^u_i) - \frac{1}{n}\sum_{(x^v_j,v) \in \data_v}f(x^v_j)\right\|_2,
\end{equation*}
which is the $\ell_2$ distance of the mean of the two class-conditional features. We call $z_i^u$ as the $i$-th feature vector with label $u$, when the context is clear, we drop the superscript $u$ for convenience. By maximizing CPCC ($\lambda < 0$), $\rho$ follows the relation in $t$: classes sharing a close common ancestor will be grouped together while classes with a higher common ancestor are pulled away with respect to Euclidean distance.

\subsection{Exact Earth Mover's Distance}
\textbf{Earth Mover's Distance} (EMD), or Optimal Transport distances, aims to find the optimal transportation plan between two discrete measures. For $\gD_u$, let $a = (a_1,\dots,a_{m}) \in \Delta_{m}$ which is not necessarily uniform. Let $b$ defined similarly for $\gD_{v}$, and $\delta_{z}$ is the Dirac delta at $z$. Define two discrete measures as $\mu = \sum_{i=1}^{m} a_i\delta_{z_i}, \nu = \sum_{j=1}^{n}b_j\delta_{z_j}$. Then, EMD between probability distributions defined over $\gZ$, is the objective value of the following linear program:
\begin{equation}
    \min\limits_{\mP} \ \langle \mP, \mD \rangle,\quad
    \mathrm{\, s.t.\ } \mP\vone_{n} = a,~\mP^\top\vone_{m} = b.
\label{eq:emd}
\end{equation}
In Eq.\ref{eq:emd}, $\langle\cdot,\cdot\rangle$ is the Frobenius inner product, $\vone_{N}$ is the $N$-dimensional all ones vector, $\mD$ is the $m \times n$ pairwise distance matrix where $\mD_{ij} = \norm{z_i - z_j}$, and $\mP$ is called the flow matrix in the computer science literature, or optimal transport plan in the OT literature, the same size with $\mD$ with nonnegative entries. Intuitively, this is a minimum cost flow problem. Both source and target flow sum to $1$, and we want to find the optimal transportation plan $\mP$ with the minimum cost with respect to ground similarity $\mD$. The norm $\lVert \cdot \rVert$ depends on the ground metric of $\mD$. We use the $\lVert \cdot \rVert_2$ norm as the ground metric throughout this paper.

When the dimension of the feature is 1, fast computation of the EMD distance exists~\citep{houry2024fast}. Instead of solving a linear program in Eq.\ref{eq:emd},  when $m = n$, $a_i = b_i = 1/n$ for all $i$, the closed form solution for \textbf{1$\textbf{d}$ EMD} is: $\text{EMD}_{\text{1d}}(\mu, \nu) = \frac{1}{n}\sum_{i = 1}^n\|z_{(i)} - z'_{(i)}\|$, and $z_{(i)}$ is the $i$-th order statistic of the sequence $\{z_1, \cdots z_n\}$ after sorting. When $m \neq n$, $\text{EMD}_{\text{1d}}$ generalizes to solving inverse CDF of $\mu, \nu$ \citep{Peyre_Cuturi_2020} which requires approximation. To avoid this approximation, \emph{after sorting}, we can compute the flow matrix $\mP$ \citep{Hoffman_1963} efficiently via a greedy algorithm in linear time (Alg.\ref{alg:FFTCPCC-short}) and use the inner product objective of EMD in Eq.\ref{eq:emd} to compute the result. This greedy flow matching algorithm that generates the optimal transport plan can be used for any $m, n$, including the special case when $m = n$, and fit any non-uniform weights of $a, b$. 
\begin{figure}[tb]
    \centering
\vspace{-1.4cm}
\begin{minipage}[t]{0.4\textwidth}
\begin{algorithm}[H]
    \caption{Greedy Flow Matching}
    \label{alg:FFTCPCC-short}
    \begin{algorithmic}[1]
        \REQUIRE{number of features $m, n$, 
        \\weights $a = ({a_1}, \ldots, {a_m})$, \\  $b = ({b_1}, \ldots, {b_n})$ }
        \STATE $\mP = \mathrm{Array}[m\times n]$
        \STATE $i = 0$
        \STATE $j = 0$
        \WHILE{\textcolor{myblue}{$i < m$ \textbf{and} $j < n$}}
        \STATE \textcolor{myblue}{$\mP[i,j] \leftarrow \min (a[i], b[j])$}
        \STATE \textcolor{myblue}{$a[i] \mathrel{-}= \mP[i,j]$}
        \STATE \textcolor{myblue}{$b[j] \mathrel{-}= \mP[i,j]$}
        \STATE{\textbf{if} $a[i] == 0$ \textbf{then} \\ $\quad i \mathrel{+}= 1$}
        \STATE{\textbf{if} $b[j] == 0$ \textbf{then} \\ $\quad j \mathrel{+}= 1$}
        \ENDWHILE
        \ENSURE $\mP$
    \end{algorithmic}
\end{algorithm}
        \end{minipage}
\hspace{1mm}
        \begin{minipage}[t]{0.58\textwidth}
            \begin{algorithm}[H]
\caption{Bottom-up Tree Flow Matching}
\label{alg:flow_tree_recursion}
\begin{algorithmic}[1]
\REQUIRE{$m, n, a, b$, flow tree $\mathcal{T}$}
\STATE{$\mP = \mathrm{Array}[m\times n]$}
    \FOR {height $h$ from $0$ to $\text{height}(\gT)$} 
        \FOR {each node $v$ at height $h$}
            \IF{$v$ is a leaf} 
                \STATE{Pass its flow to its parent}
            \ELSE
                \STATE $C_a(v) = \{ z : a(z) > 0\}, C_b(v) = \{ z : b(z) > 0\}$, $z$ is the leaf children of $v$
                \WHILE{\textcolor{myblue}{$\exists z_a \in C_a(v)$ \textbf{and} $z_b \in C_b(v)$}}
                    \STATE \textcolor{myblue}{$\eta = \min(a(z_a), b(z_b))$} 
                    \STATE \textcolor{myblue}{$\mP(z_a, z_b) \mathrel{+}= \eta, a(z_a) \mathrel{-}= \eta, b(z_b) \mathrel{-}= \eta$}
                \ENDWHILE               
                \STATE Pass non-zero unmatched flows in either $a$ or $b$ to its parent
            \ENDIF
        \ENDFOR
    \ENDFOR
\ENSURE $\mP$
\end{algorithmic}
\end{algorithm}
\end{minipage}
\end{figure}

\section{CPCC with Optimal Transport}
\subsection{OT-CPCC Family}
\label{sec:forward_generalize}
Despite its simplicity, \emph{the use of class centroids $\ell_2$ distance as a measure of class distances has two limitations.} First, since there is no guarantee of the distribution of class conditional features, the class mean difference may not accurately reflect the average pairwise Euclidean distance between all the pairs of sample. For example, in Fig.\ref{fig:EMD_motivation}, class $v$ is a non-unimodal (bi-modal in this example) distribution, which is very likely to happen when class $v$ is an abstract concept, such as coarse classes like household furniture in CIFAR100 \citep{Krizhevsky2009LearningML} or the collection of diverse dog breeds in ImageNet \cite{5206848}. The blue $\ell_2$ line can be a biased estimation of the average of red lines. We will see later (c.f. Sec.\ref{sec:differentiable}) that the direction of blue vector plays an important role in the feature geometry in $\ell_2$-CPCC, and therefore we misrepresent the class distribution in $\ell_2$-CPCC. Second, it is known \citep{365733} that, for certain trees such as a basic star graph with three edges, it is impossible to embed tree metric $t$ into $\ell_2$ exactly, no matter how large feature dimension $d$ is. It implies if we use the Euclidean \emph{centroids} to represent a class, we can never reach the optimal CPCC value $1$ during training, which always cause some loss of hierarchical information in the learning of structured representation. 

\begin{wrapfigure}{r}{0.4\textwidth}
\vspace*{-2em}
    \centering
    \includegraphics[width=0.48\linewidth]{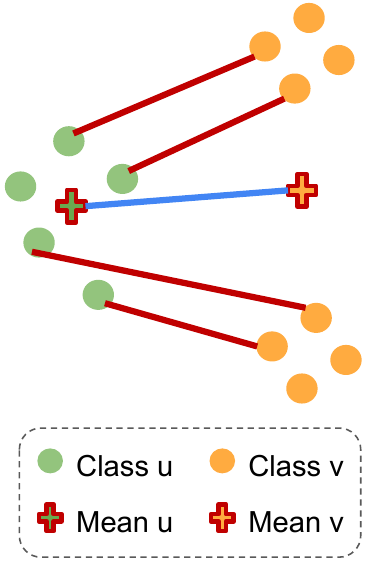}
    \vspace{1em}
    \caption{Comparison between EMD (weighted sum of red lines) and the class mean $\ell_2$ distance (blue line).}
    \label{fig:EMD_motivation}
\vspace*{-2em}
\end{wrapfigure}
How to overcome the above limitations? Instead of finding a one-to-one correspondence of a label and a point estimate of a class, we represent a label class with all \emph{samples} that belong to the class, which leads to EMD-CPCC. 
We call the set of data-weight pairs containing $m$ class-$u$ conditioned features as $\mu = \{(z_{i}, a_i)\}_{i=1}^{m}$ and define $\nu$ similarly for the dataset containing $n$ class-$v$ conditioned features. The weight $a_i$ can be set as uniform ($1/m$) by default, or non-uniform if external prior knowledge is available. We propose EMD-CPCC by implementing $\rho(u,v) \defeq \text{EMD}(\mu, \nu)$. In Eq.\ref{eq:emd}, $\mP$ depends on $\mD$ derived from each \emph{sample} in two datasets. Besides, EMD is actually a generalization of $\ell_2$ method:
\begin{restatable}[EMD reduces to $\ell_2$ between means of Gaussian with same covariance]{proposition}{reduction}
    \label{prop:emd_gaussian}
    $\mathrm{EMD}(\gN(\mu_z, \Sigma), \gN(\mu_{z'}, \Sigma)) = \|\mu_z - \mu_{z'}\|$.
\end{restatable}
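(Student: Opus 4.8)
The plan is to prove the identity by a two-sided sandwich: I will lower-bound the EMD by $\|\mu_z - \mu_{z'}\|$ using convexity alone, and then exhibit an explicit coupling that attains exactly this value. Since the proposition concerns the population (continuous) Gaussians rather than the discrete measures of Eq.~\ref{eq:emd}, I work with the Kantorovich formulation
\begin{equation*}
\mathrm{EMD}(\gN(\mu_z,\Sigma),\gN(\mu_{z'},\Sigma)) = \inf_{\pi \in \Pi} \mathbb{E}_{(Z,Z')\sim\pi}\big[\|Z - Z'\|\big],
\end{equation*}
where $\Pi$ is the set of joint laws whose marginals are $\gN(\mu_z,\Sigma)$ and $\gN(\mu_{z'},\Sigma)$, consistent with the Euclidean ground metric $\mathbf{D}_{ij}=\|z_i-z_j\|$ used throughout the paper.

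For the lower bound, fix any $\pi \in \Pi$. Because the Euclidean norm is convex, Jensen's inequality gives $\mathbb{E}_\pi[\|Z - Z'\|] \ge \|\mathbb{E}_\pi[Z - Z']\|$. The marginal constraints force $\mathbb{E}_\pi[Z] = \mu_z$ and $\mathbb{E}_\pi[Z'] = \mu_{z'}$ irrespective of which coupling is chosen, so the right-hand side equals $\|\mu_z - \mu_{z'}\|$. Taking the infimum over $\pi$ yields $\mathrm{EMD} \ge \|\mu_z - \mu_{z'}\|$.

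For the upper bound, I construct the translation coupling: draw $Z \sim \gN(\mu_z,\Sigma)$ and set $Z' \defeq Z + (\mu_{z'} - \mu_z)$. Here the equality of covariances is what makes the argument work, since a constant shift leaves the covariance unchanged, so $Z'$ has law $\gN(\mu_{z'},\Sigma)$ \emph{exactly} and $(Z,Z')$ is a valid element of $\Pi$. Its transport cost is the deterministic constant $\mathbb{E}[\|Z - Z'\|] = \|\mu_{z'} - \mu_z\|$, giving $\mathrm{EMD} \le \|\mu_z - \mu_{z'}\|$. Combining the two bounds produces the claimed equality.

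The argument is short precisely because the convexity lower bound and the attaining coupling coincide; the sole place that uses the hypothesis is the upper bound, where a pure translation is an admissible transport map only when the two covariances match. I expect the only points needing care to be the passage from the discrete linear program in Eq.~\ref{eq:emd} to the continuous Kantorovich problem, and the observation that it is the lower-bound step which forces any optimal plan to move mass by the mean displacement and no more — were the covariances unequal, the translation would fail to land on the correct target law and the inequality $\mathrm{EMD} \ge \|\mu_z-\mu_{z'}\|$ would generally be strict.
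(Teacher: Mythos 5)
Your proof is correct, and it shares its first half with the paper's: both pass from the discrete program to the Kantorovich form of $\gW_1$ and obtain the lower bound $\mathrm{EMD} \geq \norm{\mu_z - \mu_{z'}}$ by Jensen's inequality together with the marginal constraints. Your upper bound, however, takes a genuinely different route. The paper bounds $\gW_1 \leq \gW_2$ (a second application of Jensen, via convexity of the square) and then invokes the Dowson--Landau closed form $\gW_2^2 = \norm{\mu_z - \mu_{z'}}^2 + \mathrm{tr}\bigl(\Sigma_z + \Sigma_{z'} - 2(\Sigma_z \Sigma_{z'})^{1/2}\bigr)$, whose trace term vanishes when $\Sigma_z = \Sigma_{z'} = \Sigma$. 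You instead exhibit the translation coupling $Z' = Z + (\mu_{z'} - \mu_z)$, whose cost is the deterministic constant $\norm{\mu_z - \mu_{z'}}$. Your route is more elementary and fully self-contained, requiring no external closed-form result; it is also strictly more general, since the identical argument shows $\mathrm{EMD}(\mu, \nu) = \norm{c}$ whenever $\nu$ is the translate of $\mu$ by $c$ for \emph{any} distribution $\mu$ with finite first moment --- Gaussianity enters only to certify that the shifted law is exactly $\gN(\mu_{z'}, \Sigma)$. What the paper's detour through $\gW_2$ buys is quantitative contact with the unequal-covariance case: the Bures trace term gives an explicit upper bound on the excess of $\gW_1$ over $\norm{\mu_z - \mu_{z'}}$ when $\Sigma_z \neq \Sigma_{z'}$, which your closing remark about strictness only gestures at. Your hedge ``generally strict'' there is appropriately cautious, since the equality case of Jensen for the Euclidean norm requires $Z - Z'$ to be almost surely a nonnegative multiple of a fixed direction, not necessarily constant, so strictness is not automatic from unequal covariances alone; since that remark is not load-bearing, this does not affect the proof.
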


Although EMD has many appealing properties, the computational cost of EMD is high, where the time complexity of linear programming is cubic time with respect to the data size. Many following work provides approximation methods to reduce the computational cost, such as Sinkhorn \citep{Cuturi_2013}, Sliced Wasserstein Distance (SWD) \citep{sliced2015rabin}, Tree Wasserstein Distance (TWD) \citep{yamada2022approximating, Le_Yamada_Fukumizu_Cuturi_2019,takezawa2021supervised}, and FlowTree (FT) \citep{Backurs_Dong_Indyk_Razenshteyn_Wagner_2020} (detailed discussion of approximation methods in App.\ref{sec:approximation}). Similarly, we can replace $\rho$ with all other EMD approximation methods, and we call all of them as the \textbf{OT-CPCC family}. As a side remark, since all approximation methods provide a $\mP$ satisfying constraints in Eq.\ref{eq:emd} and since EMD is lower bounded by $\ell_2$ (see proof of Prop.\ref{prop:emd_gaussian} in App.\ref{app:TWD_sumloss}), all OT-CPCC methods are still lower bounded by $\ell_2$. 

\subsection{Fast FlowTree}
\label{sec:fft}
\paragraph{FlowTree}~\citep{Backurs_Dong_Indyk_Razenshteyn_Wagner_2020} is a tree-based method where the structure of the tree $\gT$ is learned using QuadTree \citep{samet1984quadtree} to approximate the Euclidean space. Once the tree has been built, it firsts compute the optimal flow matrix $\mP_t$ using $t$ as the ground metric with a linear time bottom-up as shown in Alg.\ref{alg:flow_tree_recursion} \citep{Kalantari_Kalantari_1995} (see App.Fig.\ref{fig:flow-tree-example} in for visualization). Then, it estimates EMD with $\langle \mP_t, \mD_\text{$\ell_2$}\rangle$. Given the tree, the time complexity of computing the FT is $O((m+n)d\log(d\Phi))$ where $\Phi$ is the maximum Euclidean distance of any two feature vectors.
\vspace*{-1em}
\paragraph{Our Fast FlowTree}
The main computational bottleneck of FT is the construction of the tree. However, in our case, we have access to $\gT$ given as prior knowledge, and thus we can skip the tree construction step. To apply tree-based EMD approximation methods, instead of building a QuadTree in FT, we first construct an \emph{augmented tree} by extending the leaves of the original label tree $\gT$ with samples. Specifically, since each leaf node in $\gT$ corresponds to a class, we extend each leaf node by adding all samples from the corresponding class, as demonstrated in Fig.\ref{fig:sample_tree}. We set all extended edges to have edge weight $1$, and assign the instance weight $a_i$ for each data point. The resulting algorithm that applies Alg.\ref{alg:flow_tree_recursion} to the \emph{augmented} label tree is called \textbf{Fast FlowTree} (FastFT). 

\paragraph{Key Insight} that allows FastFT to be more efficient than FT is that the tree metric between any two data samples from different classes is always identical in the augmented tree $\gT$, which allows us to reduce the computation of $\mP_t$ to 1d OT greedy flow matching using Alg.\ref{alg:FFTCPCC-short}. Now we can prove that Alg.\ref{alg:FFTCPCC-short} and Alg.\ref{alg:flow_tree_recursion} are equivalent given our constructed augmented tree in Fig.\ref{fig:sample_tree}, and the time complexity of FastFT is $O((m+n)d)$, which is optimal among all OT variants. 

\begin{restatable}[Correctness of Fast FlowTree]{theorem}{FFT}
    \label{prop:fft_alg} 
     For any augmented label tree $\gT$, Alg.~\ref{alg:FFTCPCC-short} and Alg.~\ref{alg:flow_tree_recursion} return the same EMD approximation. The Fast FlowTree can be computed in $O((m+n)d)$.
\end{restatable}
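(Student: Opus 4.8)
The plan is to prove the two assertions separately: the equivalence of the two flow-matching procedures first, then the running-time bound. The whole equivalence argument rests on one structural feature of the augmented tree. \textbf{Structural observation.} Fix the pair of classes $u,v$ and restrict attention to the part of the augmented tree $\gT$ spanned by their samples. Any path from a class-$u$ sample leaf $z_a$ to a class-$v$ sample leaf $z_b$ has the form $z_a \to u \to \cdots \to \mathrm{LCA}(u,v) \to \cdots \to v \to z_b$; the two appended sample edges have weight $1$ and the middle segment is the original tree distance $t(u,v)$, so $t(z_a,z_b) = 2 + t(u,v)$ is a constant $c$ that does not depend on the chosen samples. Hence the tree cost matrix equals $c\,\vone_m\vone_n^\top$, and every admissible plan $\mP$ with $\mP\vone_m=a$, $\mP^\top\vone_n=b$ has the identical tree cost $c$; in particular the plan produced by Alg.~\ref{alg:flow_tree_recursion} is tree-optimal, consistent with FlowTree.

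\textbf{Equivalence of the two matchings.} Because the $a$-mass is supported only on class-$u$ leaves and the $b$-mass only on class-$v$ leaves, I would argue by induction on height that every internal node strictly below $\mathrm{LCA}(u,v)$ satisfies $C_a=\emptyset$ or $C_b=\emptyset$, so the inner while loop of Alg.~\ref{alg:flow_tree_recursion} does nothing and all mass is passed upward unmatched until it reaches the LCA. At the LCA the accumulated $u$-sample masses are greedily matched against the $v$-sample masses via $\eta=\min(a(z_a),b(z_b))$, which is exactly the greedy sweep of Alg.~\ref{alg:FFTCPCC-short} (indeed the blue-highlighted loops of the two algorithms are the same code). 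Running both under the same sample ordering therefore yields the identical flow matrix $\mP_t$, whence $\langle \mP_t,\mD_{\ell_2}\rangle$ agrees and the two procedures return the same EMD approximation.

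\textbf{Complexity.} In Alg.~\ref{alg:FFTCPCC-short} each iteration of the while loop zeroes at least one of $a[i],b[j]$ and advances the corresponding index, so there are at most $m+n-1$ iterations and at most $m+n-1$ nonzero entries in $\mP_t$; crucially, no sorting is needed because all cross-class tree distances coincide, which is what removes the logarithmic factor present in FT. Each nonzero entry contributes a term $\mP_t[i,j]\,\|z_i-z_j\|_2$ to $\langle \mP_t,\mD_{\ell_2}\rangle$, and each Euclidean distance costs $O(d)$, giving $O((m+n)d)$ overall; this is optimal since merely reading the $m+n$ feature vectors already takes $\Omega((m+n)d)$.

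\textbf{Main obstacle.} The delicate step is the equivalence, not the arithmetic. I must establish rigorously the invariant that no matching occurs below the LCA and that the LCA-level sweep of Alg.~\ref{alg:flow_tree_recursion} coincides entry-for-entry with Alg.~\ref{alg:FFTCPCC-short}, so that the \emph{matrices}, and not merely their tree costs (which agree trivially for every plan), are identical. This care is genuinely necessary because $\langle\mP_t,\mD_{\ell_2}\rangle$ is plan-dependent: distinct tree-optimal plans give distinct Euclidean costs, so equivalence holds only because the two algorithms realize exactly the same greedy choices.
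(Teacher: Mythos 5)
Your proposal is correct, and its skeleton matches the paper's proof: matching cannot begin below the lowest common ancestor because every node strictly under it sees mass from only one class, so Alg.~\ref{alg:flow_tree_recursion} collapses to the single greedy sweep of Alg.~\ref{alg:FFTCPCC-short}, and the complexity bound follows from the at most $m+n-1$ loop iterations, each charging one $O(d)$ Euclidean distance for a nonzero entry of $\mP$. Where you genuinely diverge is in justifying the greedy matching itself: the paper routes through the Monge (north-west-corner) property, proving via a six-case analysis that sorted $1$d sequences satisfy $\mD_{i_1,j_1}+\mD_{i_2,j_2}\le \mD_{i_1,j_2}+\mD_{i_2,j_1}$, and then remarks that the augmented tree satisfies this condition \emph{vacuously} since all cross-class tree distances coincide; you instead observe directly that the tree cost matrix is $c\,\vone_m\vone_n^\top$ with $c=2+t(u,v)$, so \emph{every} feasible plan has tree cost $c$ and the greedy plan is trivially tree-optimal. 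Your route is more elementary and self-contained for this specific setting, while the paper's buys the general connection to $1$d optimal transport, explaining why sorting is required in the generic FlowTree/$1$d case but can be dropped here. Your closing observation is also sharper than the paper's treatment: since the reported estimate $\langle\mP_t,\mD_{\ell_2}\rangle$ is plan-dependent, equality of the two \emph{matrices} (not merely of their tree costs, which agree for all feasible plans) is what must be shown, and both your argument and the paper's implicitly assume the two while loops traverse samples in the same order to get entry-for-entry agreement. One small wording fix: distinct tree-optimal plans \emph{may} give distinct Euclidean costs, not necessarily do; the plan-dependence point survives unchanged.
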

Please refer to App.\ref{app:FFT-algo} for proof details of \Cref{prop:fft_alg}. Note that all OT-CPCC methods inherently need $\Omega((m+n)d)$ because we access each of $m+n$ data point once and look at each entry in $d$-dimensions. Additionally, FastFT can be applied to any label tree with no restriction on structure or edge weights. To control the importance of data points, the flexibility of FastFT can be achieved by tuning $a_i$.

\begin{figure}[tb]
    \centering
    \begin{minipage}{0.57\textwidth}
        \centering
            \begin{tabular*}{\linewidth}{@{}cccc@{}}
        \toprule
        Method          & Time Complexity & $\frac{\partial \gL}{\partial \theta}$ exact & Use Tree \\ \midrule
        $\ell_2$                &   $O((nk+k^2)d)$      &    $\checkmark$    &   N/A        \\
        EMD                &     $\tilde{O}(n (d + n^2\log n))$            &    $\checkmark$    &           \\
        Sinkhorn           &     $\tilde{O}(n^2(d+I))$            &    $\checkmark$      &          \\
        TWD                &     $\tilde{O}(n^2dI)$            &        &  $\checkmark$            \\
        SWD                &     $\tilde{O}(np(d+\log n))$            &        &            \\
        FT           &     $\tilde{O}(nd\log(d\Phi))$        &        &  $\checkmark$            \\\midrule
        FastFT      &    $\tilde{\Theta}(nd)$        &        &   $\checkmark$           \\ \bottomrule
    \end{tabular*}
        \captionof{table}{Time complexity comparison of different CPCC methods. Let $I$ be the number of iterations for iterative methods, $p$ be the number of projections for SWD, $\Phi$ be the maximum Euclidean distance of any two feature vectors. We use $\tilde{\cdot}$ to represent a factor of $k^2$. For simplicity we assume $m=n$. In the batch learning setting, $d$ can be much larger than $n$ particularly for the fine-grained classification problems. See App.\ref{app:time-complexity} for detailed analysis.}
        \label{tab:ot_comparison}
    \end{minipage}
    \hspace{2mm}\begin{minipage}{0.4\textwidth}
        \centering
        \includegraphics[width=\textwidth]{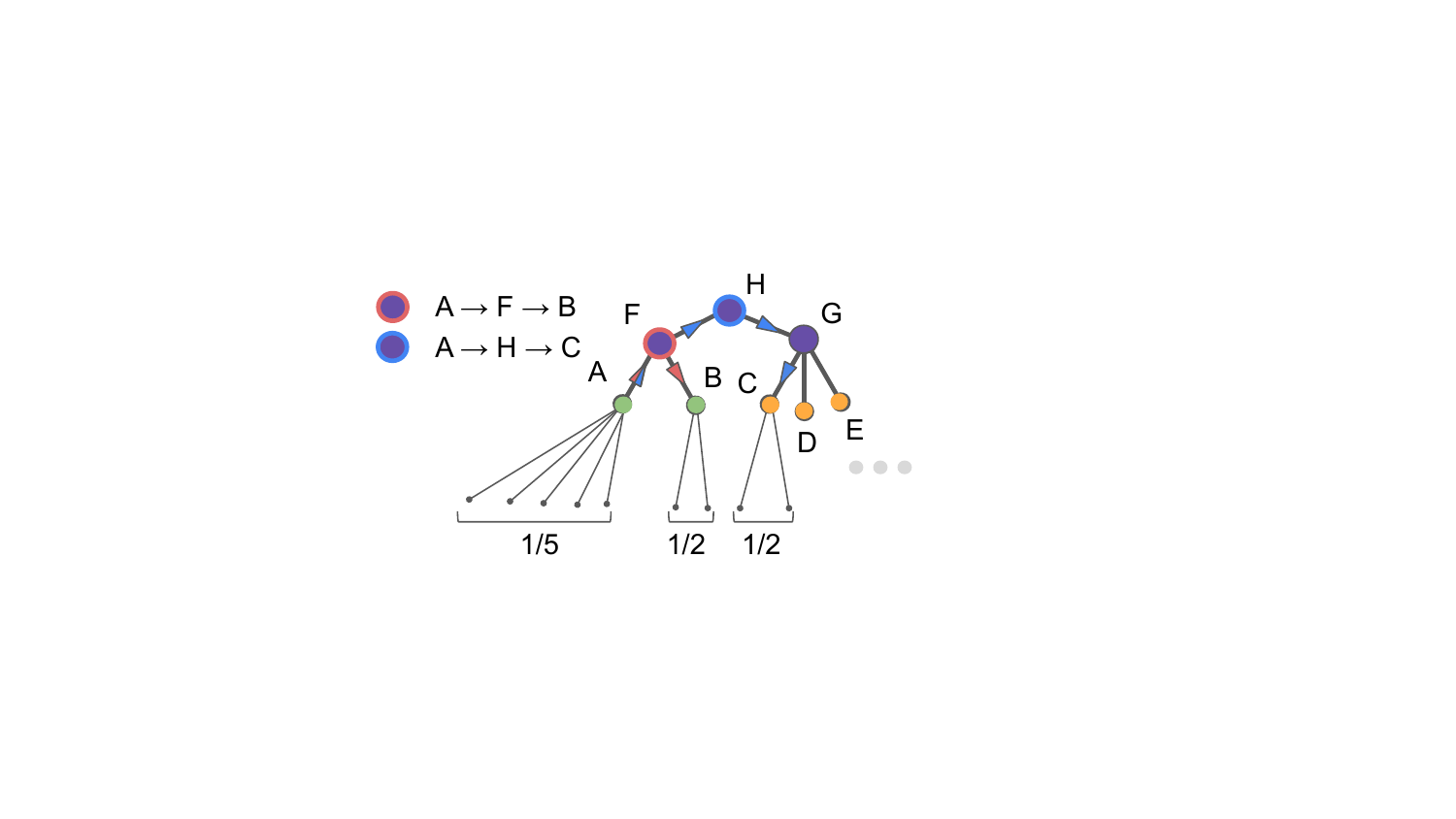}
        \caption{An example of augmented $\gT$. Leaves become samples of each class, and each sample is assigned some weight (ex., uniform) within a class. Whenever we call FastFT-CPCC, only a subtree rooted at the lowest common ancestor of a pair of class label will be used. For example, we use subtree rooted at $F$ to compute optimal flow of samples with label $A,B$, and subtree rooted at $H$ for samples with label $A,C$.}
        \label{fig:sample_tree}
    \end{minipage}
\end{figure}

\vspace*{-0.5em}
\paragraph{Comparison of OT-methods on Synthetic Dataset} To further understand the behavior of different OT approximation methods, we present experiments on synthetic datasets in Fig.\ref{fig:synthetic} (see App.\ref{app:synthetic} for more details). We first measured the computation time for different OT methods across datasets of varying sizes. Second, we evaluated the error of different OT approximation methods, along with the $\ell_2$ distance of the mean of two datasets, in approximating EMD. Two different scenarios were considered, either both distributions were Gaussian or non Gaussian. Under Gaussian distribution, the OT methods' results were closely aligned with the $\ell_2$ method, except for SWD, which supports Prop.\ref{prop:emd_gaussian} and implies the similarity in performance between OT and $\ell_2$-CPCC under Gaussian distributions. In non-Gaussian distribution scenario, notable differences emerged between OT methods and $\ell_2$, where SWD is closer to $\ell_2$ measurement. FastFT demonstrated exceptional advantage as an approximation method, outperforming Sinkhorn in both computational speed and precision.

\vspace*{-1em}
\begin{figure}[!ht]
    \centering
    \begin{minipage}{0.33\textwidth}
        \centering
        \includegraphics[width=\textwidth]{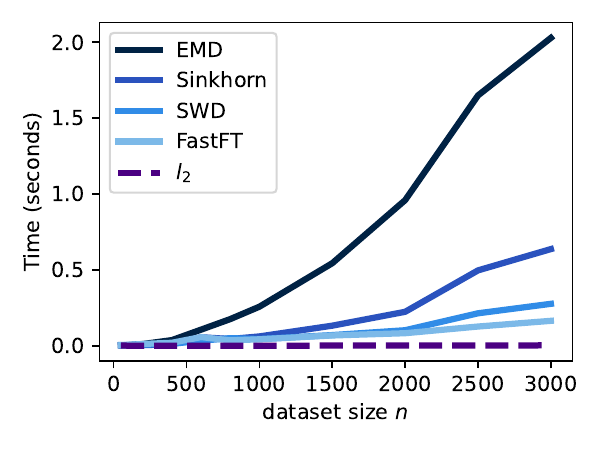}
    \end{minipage}
    \hspace{2mm}
    \begin{minipage}{0.5\textwidth}
    \vspace{-2mm}
        \centering
        \includegraphics[width=\textwidth]{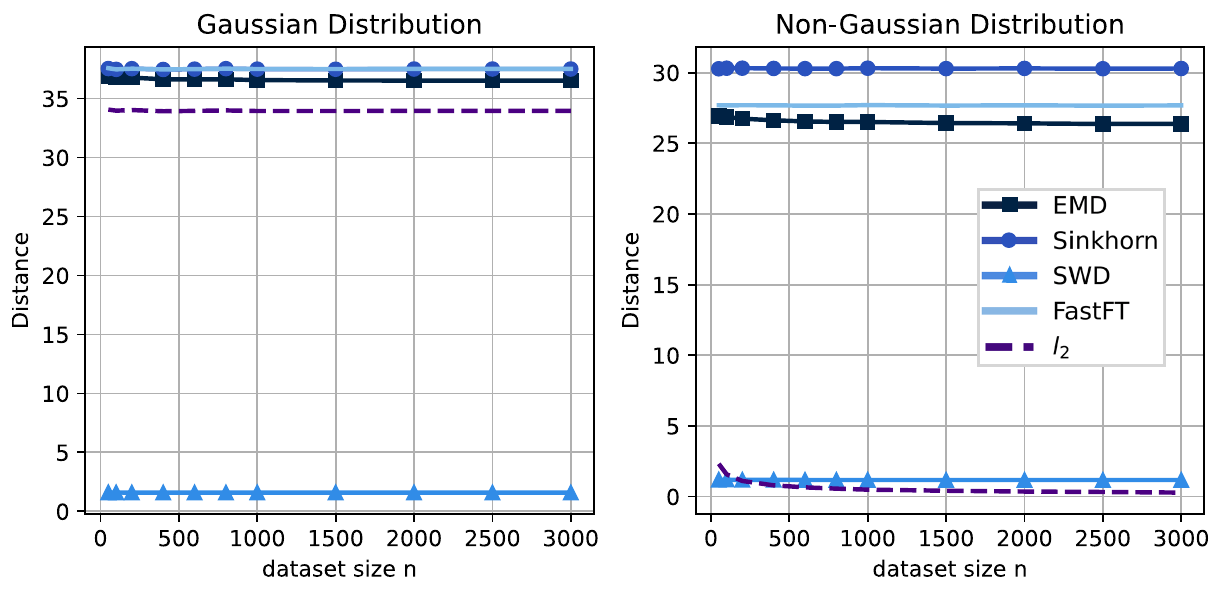}
    \end{minipage}
\caption{Efficiency and approximation error comparison of OT methods on synthetic datasets.}
\label{fig:synthetic}
\end{figure}

\subsection{Differentiability Analysis of OT-CPCC}
\label{sec:differentiable}
Since we use some non-differentiable algorithms during training (ex., linear programming in EMD, sorting in SWD, tree construction in TWD, greedy algorithm in FT and FastFT), it is unclear how does the backpropagation algorithm work exactly in this problem. Thanks to the Danksin's theorem~\citep{210b8709-0258-37ec-92d7-002e2b673206} and the structure of the underlying optimization problem, we can still efficiently compute the gradients of OT-CPCC. For simplicity, let us consider the case of EMD-CPCC, i.e., $\rho = \text{EMD}$ exactly. The other approximate variants can be analyzed in a similar way.

The underlying challenge of the problem is to take derivatives of the optimal value of a minimization problem. We state the differentiability section of Danskin's theorem formally and prove a lemma:

\begin{theorem}[\citet{210b8709-0258-37ec-92d7-002e2b673206}]
    Let $V(x) = \max_{z \in \mathcal{C}}f(x, z)$, and $Z_0(x)$ is the set of $z$ that achieve the maximum of $V$ at $x$. If $f : \mathbb{R}^n \times \mathcal{C} \rightarrow \mathbb{R}$ is continuous and $\mathcal{C}$ is compact, then the subgradient of $V$ is the convex hull of set $\{\frac{\partial f(x,z)}{\partial x}: z \in Z_0\}$. Particularly, if $Z_0 = \{z^*\}$, $\frac{\partial V}{\partial x} = \frac{\partial f(x,z^*)}{\partial x}$.
\end{theorem}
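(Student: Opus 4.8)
The plan is to prove both assertions — the subdifferential formula and the differentiability in the singleton case — by first computing the one-sided directional derivative of $V$ and then recognizing it as the support function of the claimed set. Before starting I would flag that the statement implicitly needs each $f(\cdot, z)$ to be continuously differentiable in $x$ (mere continuity of $f$ does not make $\partial f/\partial x$ meaningful), so I would add that as a standing assumption. With it, compactness of $\mathcal{C}$ and continuity of $f$ guarantee, via Berge's maximum theorem, that $Z_0(x)$ is nonempty, compact, and upper hemicontinuous, and that $V$ is continuous. These topological facts are exactly what will make the limiting arguments below valid.

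Next I would establish the lower bound on the directional derivative. Fixing a direction $d$ and any maximizer $z^* \in Z_0(x)$, the inequality $V(x + t d) \ge f(x + t d, z^*)$ together with $V(x) = f(x, z^*)$ gives, after dividing by $t > 0$ and letting $t \to 0^+$,
\[
\liminf_{t \to 0^+} \frac{V(x + t d) - V(x)}{t} \ge \langle \nabla_x f(x, z^*), d\rangle .
\]
Taking the supremum over $z^* \in Z_0(x)$ yields the ``$\ge$'' half of the identity $V'(x; d) = \max_{z \in Z_0(x)} \langle \nabla_x f(x, z), d\rangle$.

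For the matching upper bound I would, for each small $t > 0$, pick a maximizer $z_t \in Z_0(x + t d)$ so that $V(x + t d) = f(x + t d, z_t)$, and then use $V(x) \ge f(x, z_t)$ to obtain $V(x + t d) - V(x) \le f(x + t d, z_t) - f(x, z_t)$; the mean value theorem bounds the right-hand side by $t\,\langle \nabla_x f(\xi_t, z_t), d\rangle$ for some $\xi_t$ on the segment $[x, x + t d]$. The hard part — and the main obstacle of the whole argument — is passing to the limit, since the selected maximizers $z_t$ move as $t$ shrinks. Here I would invoke compactness of $\mathcal{C}$ to extract a subsequence $z_t \to \bar z$, use upper hemicontinuity of $Z_0$ to conclude $\bar z \in Z_0(x)$, and use joint continuity of $\nabla_x f$ (with $\xi_t \to x$) to get $\limsup_{t \to 0^+}(V(x + t d) - V(x))/t \le \langle \nabla_x f(x, \bar z), d\rangle \le \max_{z \in Z_0(x)} \langle \nabla_x f(x, z), d\rangle$. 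Combining the two bounds shows the directional derivative exists and equals the stated maximum.

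Finally I would translate this into the subgradient statements. The map $d \mapsto \max_{z \in Z_0(x)} \langle \nabla_x f(x, z), d\rangle$ is precisely the support function of the compact set $S \defeq \{\nabla_x f(x, z) : z \in Z_0(x)\}$, hence also of $\mathrm{conv}(S)$; since a set is determined by its support function and $\mathrm{conv}(S)$ is the unique compact convex set carrying this support function, we identify $\partial V(x) = \mathrm{conv}(S)$, which is the first claim. When $Z_0(x) = \{z^*\}$ is a singleton, $S$ collapses to the single point $\{\nabla_x f(x, z^*)\}$, so the directional derivative is linear in $d$; this forces $V$ to be Gateaux differentiable, and continuity of the gradient upgrades this to Fréchet differentiability, with $\nabla V(x) = \nabla_x f(x, z^*)$. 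Everything except the upper-bound limit is a direct consequence of writing $V$ as a pointwise maximum, so I expect the compactness-plus-upper-hemicontinuity step to be where the real care is needed.
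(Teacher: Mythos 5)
The paper never proves this statement: it is Danskin's classical theorem, quoted with a citation exactly so that Lemma~\ref{lemma:danskin} can invoke it, so there is no in-paper proof to compare against. Judged on its own, your argument is the standard proof and is essentially correct. Your opening flag --- that one must add continuous differentiability of $f(\cdot,z)$ in $x$, with $\nabla_x f$ jointly continuous --- is right and matches Danskin's actual hypotheses, which the paper's informal statement suppresses. The two-sided estimate of the directional derivative $V'(x;d)$ (lower bound from a fixed maximizer $z^*\in Z_0(x)$; upper bound from maximizers $z_t \in Z_0(x+td)$, the mean value theorem, compactness of $\mathcal{C}$, and closedness of the graph of $Z_0$ from Berge's theorem) is exactly the textbook route, and you correctly locate the only delicate step in the compactness-plus-upper-hemicontinuity limit.

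Two caveats, neither fatal. First, the final identification $\partial V(x) = \mathrm{conv}\{\nabla_x f(x,z): z\in Z_0(x)\}$ from the support-function equality is only literally valid when $V$ is convex (e.g., when each $f(\cdot,z)$ is convex in $x$), because the duality ``$\partial V(x)$ is the compact convex set whose support function is $V'(x;\cdot)$'' presupposes convexity of $V$; in the general merely-continuous setting the same formula holds for the Clarke subdifferential (pointwise maxima of smooth functions are locally Lipschitz and Clarke regular), so you should state which notion of ``subgradient'' you mean --- the paper is silent, and in its application the inner function is linear in $\mD$, so convexity holds there anyway. Second, in the singleton case your justification ``continuity of the gradient upgrades Gateaux to Fr\'echet'' is mis-attributed: $Z_0$ is assumed to be a singleton only at the point $x$, so no gradient of $V$ on a neighborhood is available. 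The correct upgrade is that $V$ is locally Lipschitz (the continuous $\nabla_x f$ is bounded on compact sets, uniformly over $z\in\mathcal{C}$), and a locally Lipschitz function on $\mathbb{R}^n$ that is Gateaux differentiable at a point is Fr\'echet differentiable there; alternatively, rerun your upper-bound argument with varying directions $d_t \to d$ to get the $o(t)$ estimate uniformly over the unit sphere. Both repairs are one line each.
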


\begin{lemma}
    Denote $\mP^*$ as the optimal flow of EMD problem, then $\frac{\partial \rho_\text{EMD}}{\partial \theta} = \mP^{*}\cdot\frac{\partial \mD}{\partial \theta}$.
    \label{lemma:danskin}
\end{lemma}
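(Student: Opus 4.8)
The plan is to apply Danskin's theorem after first rewriting the EMD as the optimal value of a \emph{maximization} problem, since Eq.~\ref{eq:emd} is stated as a minimization whereas the theorem above is stated for a max. Writing the objective as $f(\theta, \mP) \defeq \langle \mP, \mD \rangle$ with $\mD = \mD(\theta)$ the pairwise distance matrix of the features $z_i = f_\theta(x_i)$, we have $\rho_\text{EMD}(\theta) = \min_{\mP \in \mathcal{C}} f(\theta, \mP) = -\max_{\mP \in \mathcal{C}}\big(-f(\theta, \mP)\big)$, where $\mathcal{C} = \{\mP \geq 0 : \mP\vone_m = a,\ \mP^\top\vone_n = b\}$ is the transportation polytope. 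Setting $V(\theta) \defeq \max_{\mP \in \mathcal{C}}\big(-f(\theta, \mP)\big) = -\rho_\text{EMD}(\theta)$ puts the problem exactly in the form covered by the theorem, with $x = \theta$ and $z = \mP$.

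Next I would verify the two hypotheses. For continuity, note that $-f(\theta, \mP) = -\langle \mP, \mD(\theta) \rangle$ is jointly continuous in $(\theta, \mP)$: the map $\theta \mapsto \mD(\theta)$ is continuous because $f_\theta$ is a continuous (indeed a.e.\ differentiable) network and the Euclidean norm is continuous, and the Frobenius inner product is bilinear, hence continuous in both arguments. For compactness, the feasible set $\mathcal{C}$ is closed, being the intersection of the closed nonnegative orthant with the affine marginal constraints, and bounded, since every entry satisfies $0 \le \mP_{ij} \le 1$ once the marginals $a, b$ are fixed; thus $\mathcal{C}$ is compact. Both hypotheses of the theorem are therefore met.

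With the hypotheses in place, Danskin's theorem identifies the (sub)gradient of $V$ with the convex hull of $\{\partial(-f(\theta,\mP))/\partial\theta : \mP \in Z_0(\theta)\}$, where $Z_0(\theta)$ is the set of optimal plans. When the optimal transport plan is unique, $Z_0(\theta) = \{\mP^*\}$, and the theorem yields $\partial V/\partial\theta = -\,\partial f(\theta, \mP^*)/\partial\theta$ with $\mP^*$ held fixed. Undoing the sign change gives $\partial \rho_\text{EMD}/\partial\theta = \partial f(\theta, \mP^*)/\partial\theta = \langle \mP^*, \partial \mD/\partial\theta \rangle = \mP^* \cdot \partial \mD/\partial\theta$, which is the claimed identity. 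The conceptual content is that the implicit dependence of $\mP^*$ on $\theta$ contributes nothing to first order, so one may differentiate through $\mD$ alone while treating the solver's output $\mP^*$ as a constant.

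I expect the main obstacle to be the uniqueness of the optimal plan: the EMD linear program can admit an entire face of optimal solutions (for example under degenerate marginals), in which case $Z_0(\theta)$ is not a singleton and $\rho_\text{EMD}$ is only subdifferentiable. Here I would fall back on the convex-hull conclusion of Danskin's theorem and observe that any vertex $\mP^*$ of the optimal face, in particular the basic feasible solution returned by the LP solver, is a valid element of the subdifferential, so $\mP^* \cdot \partial\mD/\partial\theta$ remains a correct (sub)gradient for backpropagation. A secondary and more routine caveat is the non-differentiability of $\mD_{ij} = \|z_i - z_j\|$ when $z_i = z_j$, which occurs on a measure-zero set and is handled by the usual subgradient convention.
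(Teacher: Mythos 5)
Your proof is correct and follows essentially the same route as the paper's: rewrite the minimization as $\rho_\text{EMD} = -\max_{\mP\in\mathcal{C}}(-\langle\mP,\mD\rangle)$, apply Danskin's theorem, and conclude that the optimal plan $\mP^*$ may be treated as constant so that $\partial\rho_\text{EMD}/\partial\theta = \mP^*\cdot\partial\mD/\partial\theta$. Your explicit verification of the hypotheses (compactness of the transportation polytope, joint continuity) and your handling of the non-unique-plan and $z_i = z_j$ edge cases via the subgradient convention are more careful than the paper's brief argument, but the underlying idea is identical.
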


\begin{proof}
We rewrite the objective function as a maximization problem: $\rho_\text{EMD} = - \max_{\mP\in \mathcal{C}} - \langle \mP,\mD \rangle$. Roughly speaking, Danksin's theorem states that the derivative of maximization problems can be evaluated at the optimum. We can get (sub-)differential of $\rho_\text{EMD}$ with respect to $\theta$ with one step of chain rule:
\begin{equation*}
    \frac{\partial \rho_\text{EMD}}{\partial \theta} = \frac{\partial \rho_\text{EMD}}{\partial \mD}\cdot\frac{\partial \mD}{\partial \theta} = -\frac{\partial{-\langle\mP^*,\mD\rangle}}{\partial \mD} \cdot \frac{\partial \mD}{\partial \theta} = \mP^* \cdot \frac{\partial \mD}{\partial \theta},  
\end{equation*}
where the last term comes from the original flow of backpropagation containing the gradients with respect to network parameters. The key takeaway is the gradient of $\rho_\text{EMD}$ w.r.t. the network parameters does not require the gradient from $\mP$ in the EMD setting, although $\mP$ depends on $\theta$.   
\end{proof}
\paragraph{Differentiability for Other OT-CPCC Methods}
The application of Danskin's relies on the optimality of $\mP^*$. How about the other approximation methods? The differentiability is not a problem in Sinkhorn as the iterative algorithm only involves matrix multiplication. But for the SWD, TWD, FT, the approximate solutions (the flow matrix $\mP$) for these methods are not explicitly formulated as solutions to an optimization problem. Following EMD, except for the Sinkhorn variant, our method is to treat flow matrix $\mP$ always as a constant for these approximation methods, and stop the gradient through $\mP$. For example, in Fast FlowTree, since the tree flow matrix $\mP$ depends only on the structure of the tree based on external information and not on the model parameters $\theta$, we have $\frac{\partial\rho_\text{FastFT}}{\partial\theta} \defeq \mP_\text{FastFT}\cdot\frac{\partial \mD}{\partial\theta}$.

To provide a more fine-grained understanding of the proposed method, in what follows we analyze the gradients of the proposed CPCC variants and compare them with the $\ell_2$-CPCC. 

\paragraph{Gradient of $\ell_2$-CPCC}
If we want to know the difference between $\ell_2$-CPCC and EMD-CPCC, assume the same input and the same network parameters, ignoring the cross-entropy loss, the only difference of the gradient starts from the distance calculation between two classes.
Assume representations from two classes are $Z \in \mathbb{R}^{m\times d}, Z' \in \mathbb{R}^{n \times d}$.
\begin{align*}
    \frac{\partial{\rho_{\text{$l_2$}}}}{\partial{Z}} & = \frac{\partial}{\partial{Z}}\norm{\mu_z - \mu_{z'}}  = \frac{\mathbf{1}(\frac{1}{m}\mathbf{1}^\top Z - \frac{1}{n}\mathbf{1}^\top {Z'})}{m\norm{\frac{1}{m}Z^\top\mathbf{1} - \frac{1}{n}{Z'}^\top\mathbf{1}}}.
\end{align*}
Let $i$-th row of $Z$ be $Z_{i\cdot}$. Each row of $\frac{\partial{\rho_{\text{$l_2$}}}}{\partial{Z}}$ is the same, so 
\begin{equation*}
    \frac{\partial{\rho_{\text{$l_2$}}}}{\partial{Z_{i\cdot}}} = \frac{\mu_z - \mu_{z'}}{m\norm{\mu_z - \mu_{z'}}}.
\end{equation*}
\paragraph{Gradient of EMD-CPCC}
Let $\mD \in \mathbb{R}^{m\times n}$ be the cost matrix and $\mP$ is any constant flow matrix:
\begin{equation*}
    \rho_{\text{EMD}} = \langle \mP^*, \mD(Z, Z') \rangle = \sum\limits_{i,j}\mP^*_{ij}\norm{Z_{i\cdot} - Z'_{j\cdot}}.
\end{equation*}
Let's calculate the derivative by rows of $Z$. We only keep the terms related to $Z_{i\cdot}$.
\begin{align*}
    \frac{\partial{\rho_{\text{EMD}}}}{\partial{Z_{i\cdot}}} &= \frac{\partial}{\partial{Z_{i\cdot}}}\sum\limits_{j}\mP_{ij}^*\norm{Z_{i\cdot} - Z_{j\cdot}'} = \sum\limits_{j=1}^n \mP_{ij}^*\frac{Z_{i\cdot} - Z_{j\cdot}'} {\norm{Z_{i\cdot} - Z_{j\cdot}'}}.
\end{align*}
Note that since $\mP^*$ is the optimal solution of a linear program, it is a sparse matrix with at most $m+n-1$ non-zero entries~\citep[Proposition 3.4]{peyre2019computational}. As a comparison to the gradient of $\ell_2$-CPCC, which assigns equal importance to all data points and only depends on the direction of class centroids, the gradient of EMD-CPCC assigns different importance to each pair of data points and depends on the pairwise transport probability between data points. From this analysis, we can see that the gradient provided by EMD-CPCC is more fine-grained and informative than that of $\ell_2$-CPCC.

\section{Experiments}
\label{sec:exp}

For a fair comparison, we closely aligned our implementation with \citet{zeng2022learning} to ensure consistency. This section will cover dataset benchmarks, the multi-modality of features, downstream hierarchical classification retrieval, and interpretability analysis. A detailed description of training hyperparameters is provided in App.\ref{sec:hyperparameters}, along with additional experiments, including the effect of label hierarchy structure, label hierarchy tree weights, optimal transport flow settings (distribution of $a$ in Eq.\ref{eq:emd}), and comparisons with non-CPCC hierarchical methods, in App.\ref{sec:additonal_exp}. Throughout this section, the Flat objective is the standard cross entropy loss in Eq.\ref{eq:cross-entropy} with $\lambda = 0$ without tree information.

\begin{table}[!ht]
\scalebox{0.94}{
    \begin{minipage}[t]{0.3\textwidth}
    \centering
    \caption{Dataset Statistics.}
    \label{tab:data-stats}
    \scalebox{0.9}{
        \begin{tabular}{@{}ccc@{}}
        \toprule
        \textbf{Dataset}  & \textbf{\# train} & \textbf{\# test} \\ \midrule
        \textit{CIFAR10}  & 50K               & 10K              \\
        \textit{CIFAR100} & 50K               & 10K              \\
        \textit{INAT}      & 500K             & 100K              \\ 
        \textit{L17}      & 88K               & 3.4K             \\
        \textit{N26}      & 132K              & 5.2K             \\
        \textit{E13}      & 334K              & 13K              \\
        \textit{E30}      & 307K              & 12K              \\ 
        \bottomrule
        \end{tabular}
    }
    \vspace{-3mm}
    \begin{figure}[H]
    \centering
    \includegraphics[width=0.63\textwidth]{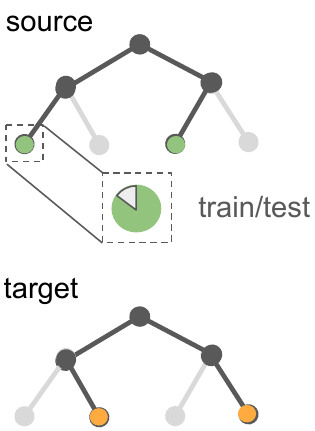}
    \caption{Hierarchical data split. Source and target dataset share the same coarse labels but different fine labels.}
    \label{fig:data_split}
    \end{figure}

    \end{minipage}
    \hspace{1mm}
    \begin{minipage}[t]{0.58\textwidth}

    \captionsetup{justification=raggedright, singlelinecheck=false}
    \caption{Preciseness of hierarchical information. }
    \label{tab:testcpcc}
        \begin{minipage}{0.3\textwidth}
            \centering
            \begin{tabular}{@{}ccc@{}}
    \toprule
    \textbf{Dataset} & \textbf{Objective} & \textbf{TestCPCC}              \\ \midrule
    \multirow{6}{*}{\textit{CIFAR10}}                                                     & Flat          & 58.83 (8.80) \\
            & $\ell_2$        & 99.94 (0.02)          \\ \cmidrule(l){2-3}
            & FastFT    & \textbf{99.95 (0.00)} \\
            & EMD       & 99.95 (0.01)          \\
            & Sinkhorn  & 99.94 (0.01)          \\
            & SWD       & 99.98 (0.00)          \\ \midrule
    \multirow{6}{*}{\textit{CIFAR100}}  & Flat & 22.13 (0.84) \\
            & $\ell_2$        & 83.08 (0.11)          \\ \cmidrule(l){2-3}
            & FastFT    & \textbf{93.85 (0.05)} \\
            & EMD       & 89.60 (0.34)          \\
            & Sinkhorn  & 93.81 (0.10)          \\
            & SWD       & 73.17 (2.66)          \\ \midrule
    \multirow{6}{*}{\textit{INAT}}  & Flat & 31.97 (0.00) \\
            & $\ell_2$        & 67.01 (0.04)          \\ \cmidrule(l){2-3}
            & FastFT    & \textbf{73.20 (0.03)} \\
            & EMD       & 65.80 (0.01)          \\
            & Sinkhorn  & 73.20 (0.04)          \\
            & SWD       & 50.88 (0.03)          \\ \bottomrule
    \end{tabular}
        \end{minipage}\hfill
        \begin{minipage}{0.3\textwidth}
            \centering
            \scalebox{0.88}{
            \begin{tabular}{@{}ccc@{}}
    \toprule
    \textbf{Dataset}                               & \textbf{Objective} & \textbf{TestCPCC}              \\ \midrule
    \multirow{5}{*}{\textit{L17}}    & Flat      & 47.63 (1.18)          \\
                                          & $\ell_2$        & 92.30 (0.35)          \\ \cmidrule(l){2-3} 
                                          & FastFT    & 91.71 (0.15)          \\
                                          & Sinkhorn  & 91.77 (0.44)          \\
                                          & SWD       & \textbf{95.10 (0.51)} \\ \midrule
    \multirow{5}{*}{\textit{N26}} & Flat      & 29.77 (0.53)          \\
                                          & $\ell_2$       & 93.03 (0.32)          \\ \cmidrule(l){2-3} 
                                          & FastFT    & 92.68 (0.21)          \\
                                          & Sinkhorn  & \textbf{93.31 (0.24)} \\
                                          & SWD       & 92.40 (0.05)          \\ \midrule
    \multirow{5}{*}{\textit{E13}}    & Flat      & 49.20 (0.86)          \\
                                          & $\ell_2$        & \textbf{92.02 (0.08)} \\ \cmidrule(l){2-3} 
                                          & FastFT    & 91.97 (0.20)          \\
                                          & Sinkhorn  & 91.30 (0.57)          \\
                                          & SWD       & 88.32 (0.50)          \\ \midrule
    \multirow{5}{*}{\textit{E30}}    & Flat      & 51.33 (0.09)          \\
                                          & $\ell_2$        & \textbf{93.37 (0.19)} \\ \cmidrule(l){2-3} 
                                          & FastFT    & 91.81 (0.44)          \\
                                          & Sinkhorn  & 91.89 (0.61)          \\
                                          & SWD       & 89.88 (0.68)          \\ \bottomrule
    \end{tabular}
    }
        \end{minipage}
    \end{minipage}
    
}    
\end{table}

\vspace{-1em}
\subsection{Datasets}
We conducted experiments across $7$ diverse datasets, \textsl{CIFAR10, CIFAR100} \citep{Krizhevsky2009LearningML}, \textsl{BREEDS} benchmarks \citep{santurkar2021breeds} including four settings LIVING17 (\textsl{L17}), ENTITY13 (\textsl{E13}), ENTITY30 (\textsl{E30}), NONLIVING26 (\textsl{N26}), and iNaturalist-mini \citep{van2018inaturalist} (\textsl{INAT}) to thoroughly evaluate the efficacy and influence of OT-CPCC methods. The dataset statistics is shown in Tab.\ref{tab:data-stats}. Note that \textsl{BREEDS} and \textsl{INAT} are larger scale datasets with high resolution images. As subsets of ImageNet, \textsl{BREEDS} inherit a more complicated label hierarchy and more realistic, possibly multi-modal class distributions. The major difference from ImageNet is that \textsl{BREEDS}'s class hierarchy is manually calibrated so that it reflects the visual hierarchy instead of the WordNet semantic hierarchy in the original ImageNet, which removes ill-defined hierarchical relationships unrelated to vision tasks. 

\subsection{Existence of Multi-mode Class-conditioned Feature Distribution} 

\begin{figure}[ht]
    \centering
\begin{minipage}{0.5\textwidth}
     \begin{minipage}[c]{0.6\textwidth}
            \centering
            \includegraphics[width=\linewidth]{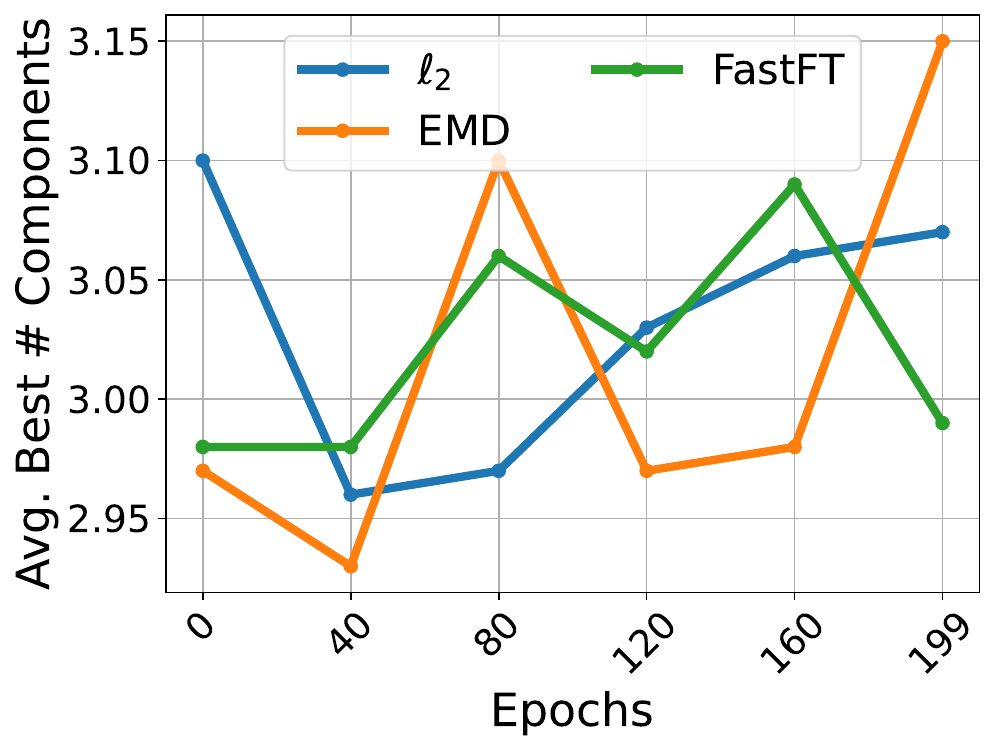} 
        \end{minipage}%
    \hspace{0cm}  
    \begin{minipage}[b]{0.2\textwidth}
        \centering
        \scalebox{0.9}{
        \raisebox{0.8em}{
        \begin{tabular}{@{}cc@{}}
        \toprule
        \textbf{Dataset}  & \textbf{\# Comp.} \\ \midrule
        \textsl{CIFAR10}  & 9.90          \\
        \textsl{CIFAR100} & 3.07          \\
        \textsl{INAT}     & 3.48          \\
        \textsl{BREEDS}   & 1.25          \\ \bottomrule
        \end{tabular}
        }
        }
    \end{minipage}%
\caption{The average best number of components of learned features (left) across training epochs on CIFAR100 for different CPCC methods (right) for the $\ell_2$-CPCC final trained features for all datasets.}
\label{fig:CIFAR100-gmm_comparison}
\end{minipage}
    \hspace{0.3cm}  
\begin{minipage}{0.43\textwidth}
    \begin{minipage}[c]{0.48\textwidth}
        \centering
        \includegraphics[width=\textwidth]{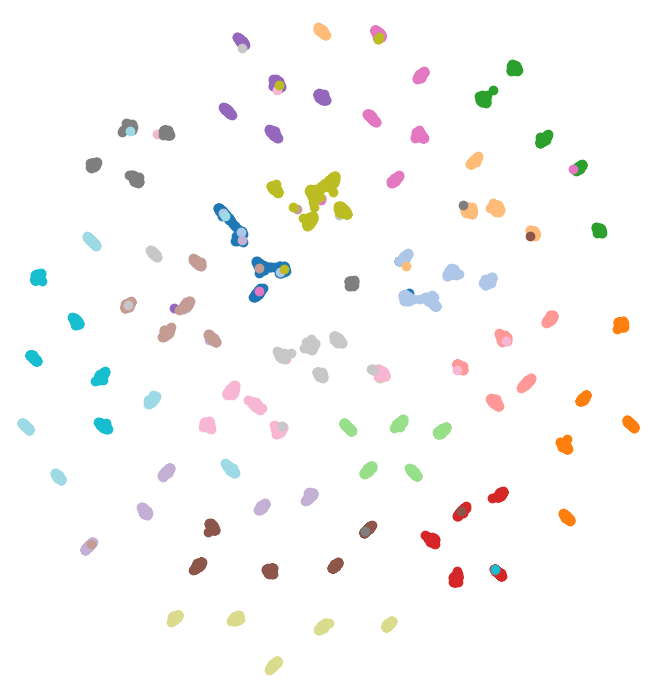} 
    \end{minipage}%
    \hspace{0cm}
    \begin{minipage}[c]{0.48\textwidth}
        \centering
        \includegraphics[width=\textwidth]{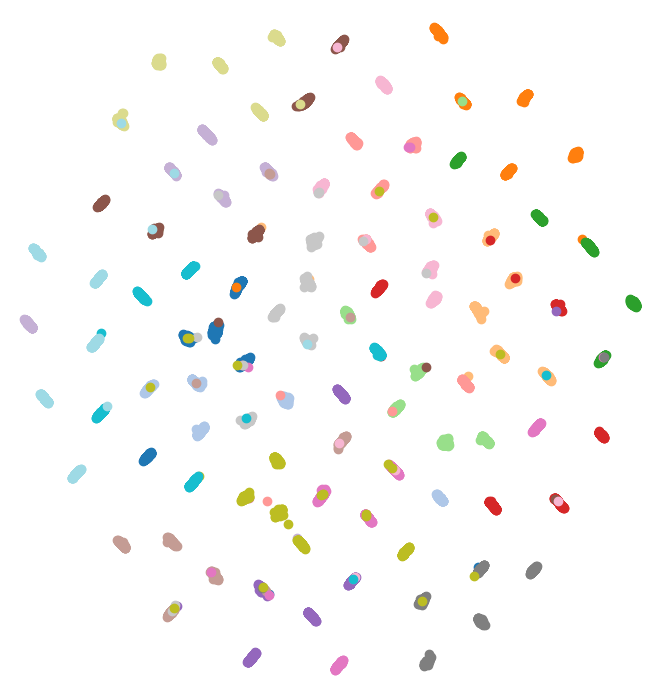} 
    \end{minipage}
    \caption{CIFAR100 Euclidean UMAP for $\ell_2$-CPCC (left) vs. EMD-CPCC (right). Each cluster represents a fine label and we color data sample by coarse labels.}
    \label{fig:umap}
\end{minipage}
    
\end{figure}

To verify our hypothesis in Fig.\ref{fig:EMD_motivation}, we aim to determine whether class-conditioned clusters exhibit a multi-modal structure in practice. We track the class-conditioned features during training: at each epoch, for each class-conditioned feature cluster, we fit a Gaussian Mixture Model (GMM) with the number of components ranging from 1 to 10. We report the optimal number of components based on the lowest Akaike Information Criterion (AIC), averaged across all fine-level class labels. \emph{If the best number of Gaussian components exceeds 1, this may indicate the presence of a multi-modal structure in the feature distributions}, assuming a sufficiently large batch size. From Fig.\ref{fig:CIFAR100-gmm_comparison}, with default training hyperparameters in App.\ref{sec:hyperparameters}, we observe that this phenomenon occurs across most datasets for both $\ell_2$-CPCC and OT-CPCC training, suggesting that using optimal transport distances in CPCC computation is more appropriate.

\subsection{Hierarchical Classification and Retrieval}
\label{sec:classification-retrieval}

\begin{table}[!ht]
    \centering
    \caption{Generalization performance on coarse level. The prefix of metric represents the split of test set: \textbf{s} = source, \textbf{t} = target. The average number of three seeds are reported. See the detailed version including standard deviation in Tab.\ref{tab:fine-detail},\ref{tab:coarse-detail} and performance on each BREEDS setting is in Tab.\ref{tab:breeds-fine},\ref{tab:breeds-coarse}. Due to the computation complexity, we do not report EMD results on training BREEDS from scratch.}
    \label{tab:coarse}
\scalebox{0.9}{
     \begin{tabular}{@{}ccccccccccc@{}}
\toprule
\textbf{Dataset} &
  \textbf{Objective} &
  \textbf{sAcc} &
  \textbf{tAcc} &
  \textbf{sMAP} &
  \textbf{tMAP} & 
\textbf{Dataset} & \textbf{sAcc} &
  \textbf{tAcc} &
  \textbf{sMAP} &
  \textbf{tMAP} \\ \midrule
\multirow{5}{*}{\textit{CIFAR10}} & Flat     & 99.58          & 87.30         & 99.22          & 89.66 & \multirow{5}{*}{\textit{INAT}} & 94.63 & 38.81 & 70.41 & 34.00        \\ \cmidrule(lr){2-6} \cmidrule(lr){8-11}
                                  & FastFT   & 99.61          & \textbf{87.79} & \textbf{99.91} & 93.04  & & \textbf{94.66} & 39.43 & 72.90 & 35.63        \\
                                  & EMD      & \textbf{99.61} & 87.45         & 99.88         & \textbf{93.07} & & 94.64 & \textbf{41.01} & 73.87 & 35.58 \\
                                  & Sinkhorn & 99.61          & 87.41          & 99.87          & 92.60    & & 94.30 & 36.94 & 68.75 & 34.92     \\
                                  & SWD      & 99.56          & 87.63          & 99.36         & 90.12  & & 94.52 & 39.38 & \textbf{75.13} & \textbf{38.11}        \\ \midrule
\multirow{5}{*}{\textit{CIFAR100}} &
  Flat &
  86.17 &
  43.09 &
  82.07 &
  42.09 & \multirow{5}{*}{\textit{BREEDS}} & 93.85 & 82.43 & 74.47 & 66.50 \\ \cmidrule(l){2-6} \cmidrule(lr){8-11}
                                  & FastFT   & \textbf{86.96} & \textbf{44.29} & 91.71          & 42.20 & & 94.25  & 82.89 & \textbf{92.84} & \textbf{79.69}       \\
                                  & EMD      & 86.93         & 44.03         & \textbf{91.82} & 42.32  & & - & - & - & -        \\
                                  & Sinkhorn & 86.93         & 43.99          & 91.61          & \textbf{42.45} & & \textbf{94.42} & \textbf{83.51} & 91.84 & 79.33 \\
                                  & SWD      & 86.42         & 43.35          & 87.24          & 42.39 & & 94.41 & 82.96 & 83.58 & 72.38        \\ \bottomrule
\end{tabular}
}
\end{table}

\begin{table}[!ht]
    \centering
    \caption{Generalization performance on fine level.}
    \label{tab:fine}
\scalebox{0.9}{
    \begin{tabular}{@{}ccccccccc@{}}
\toprule
\textbf{Dataset}                                                                      & \textbf{Objective} & \textbf{sAcc} & \textbf{tAcc} & \textbf{sMAP} & \textbf{Dataset} & \textbf{sAcc} & \textbf{tAcc} & \textbf{sMAP}  \\ \midrule
\multirow{5}{*}{\textit{CIFAR10}}                                                     & $\ell_2$                 & 96.96     & 55.71      & 99.22 &  \multirow{5}{*}{\textit{INAT}}  & 88.62       & 26.78      & 56.10     \\ \cmidrule(lr){2-5} \cmidrule(lr){7-9}
 & FastFT   & 96.90          & 55.99          & 99.24   & & 88.49 & \textbf{27.10}          & 56.21           \\
 & EMD      & \textbf{97.05} & 56.12          & 99.24  & & \textbf{88.68}         & 26.78         & \textbf{56.83}         \\
 & Sinkhorn & 96.95          & 54.89          & 99.27  & & 88.08          & 26.77       & 51.56         \\
 & SWD      & 96.96          & \textbf{59.21} & \textbf{99.45} &       & 88.46          & 26.78 & 54.19 \\ \midrule
\multirow{5}{*}{\textit{CIFAR100}}  & $\ell_2$                 & 80.98       & 23.76       & 77.52   &  \multirow{5}{*}{\textit{BREEDS}} & 82.66     & 45.95       & 62.86          \\ \cmidrule(lr){2-5} \cmidrule(lr){7-9}
 & FastFT   & 81.09         & 24.71          & 78.72 & &  82.58 & 45.75          & 63.95         \\
 & EMD      & \textbf{81.32} & 23.15         & 78.88      & & - & - & -    \\
 & Sinkhorn & 80.99          & 23.53          & 78.51 & & \textbf{82.99}          & \textbf{46.87}       & 61.54        \\
 & SWD      & 80.50          & \textbf{26.18} & \textbf{86.82} &  & 82.90          & 46.33 & \textbf{76.24} \\ \bottomrule
\end{tabular}
}
\end{table}

\paragraph{Hierarchical Dataset Split} For downstream classification and retrieval, we split each dataset into source and target subsets, with both further divided into train and test sets, as illustrated in Fig.\ref{fig:data_split}. The source and target datasets share the same level of coarse labels but differ in fine labels, where the fine level refers to the more granular class labels that are exactly one level below the coarse labels. For example, in \textsl{CIFAR10}, the source set includes the fine labels \texttt{(deer, dog, frog, horse)} and \texttt{(ship, truck)}, while the target set includes \texttt{(bird, cat)} and \texttt{(airplane, automobile)}. Both sets share the same coarse labels: \texttt{animal} and \texttt{transportation}. We include details of hierarchical construction and splits for other datasets in App.\ref{app:dataset-hierarchy}.

Given the hierarchical splits, we define two evaluation protocols. First, we pretrain on the source-train set and evaluate on the source-test set, where the train and test distributions are i.i.d. Second, we pretrain on the source-train set and evaluate on the target-test set, introducing a subpopulation shift \citep{santurkar2021breeds} between the train and test distributions. For each protocol, we conduct classification and retrieval experiments at two levels of granularity.

\paragraph{Metrics} For classification experiments, we measure performance using accuracy (Acc). At the fine level, source accuracy is computed directly using any pretrained model, while coarse source accuracy is calculated by summing the fine-level probabilities for predictions that share the same coarse parent label. Since the source and target sets share the same coarse labels, coarse target accuracy can be similarly computed using the model pretrained on the source split. For fine target accuracy, after pretraining on the source set, we train a linear classifier on top of the frozen pretrained encoder for one-shot learning on the target-train set and evaluate it on the target-test set. We include the visualization of all types of classification tasks evaluation in Fig.\ref{fig:hierarchical_classification} in the Appendix.

For retrieval experiments, we use mean average precision (MAP) as the evaluation metric. Given a model pretrained on the source set, we compute the fine or coarse class prototypes by calculating the Euclidean centroid of the class-conditioned features. Using these class prototypes, we compute the cosine similarity score between the prototypes and test points, and evaluate whether test points are most similar to the class prototype with the same label via average precision. Since the source and target sets have different fine labels, we do not include target MAP in Tab.\ref{tab:fine}.

\paragraph{Discussion} \textit{OT-CPCC outperforms $\ell_2$-CPCC consistently on fine level tasks}: the oracle for the fine level tasks should be Flat, and \citet{zeng2022learning} reported that $\ell_2$-CPCC does not perform well for several fine level tasks due to the constraint on coarse level. Tab.\ref{tab:fine} shows that OT-CPCC methods shows better fine-level performance than $\ell_2$, while maintaining hierarchical information simultaneously. Besides, \textit{OT-CPCC outperforms the Flat baseline consistently on coarse level tasks} as expected in Tab.\ref{tab:coarse}, which implies OT-CPCC methods successfully include the additional hierarchical information by optimizing the regularizer. In Fig.\ref{fig:umap}, we visualize the learned hierarchical embeddings using UMAP \citep{mcinnes2018umap}, with Euclidean distance as the similarity metric. We observe that $\ell_2$-CPCC displays a clearer coarse-level grouping pattern, likely due to the direct optimization of CPCC in Euclidean space. However, EMD-CPCC also retains a significant amount of hierarchical information and demonstrates better separation between fine-level classes, which aligns with the results observed in our classification and retrieval tasks.

\subsection{Interpretability Evaluation}
\label{sec:interpretability}
\begin{figure}[ht]
    \centering
    \begin{minipage}{0.33\textwidth}
        \centering
        \includegraphics[width=\textwidth]{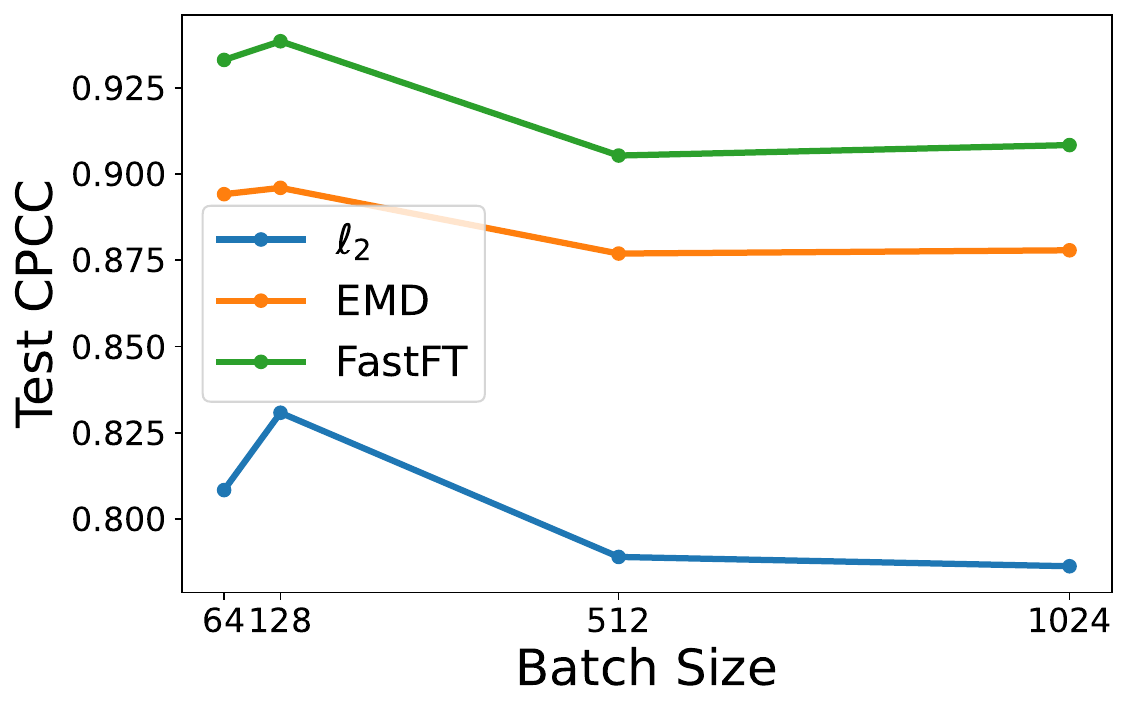}
        \label{fig:bs}
    \end{minipage}\hfill
    \begin{minipage}{0.33\textwidth}
        \centering
        \includegraphics[width=\textwidth]{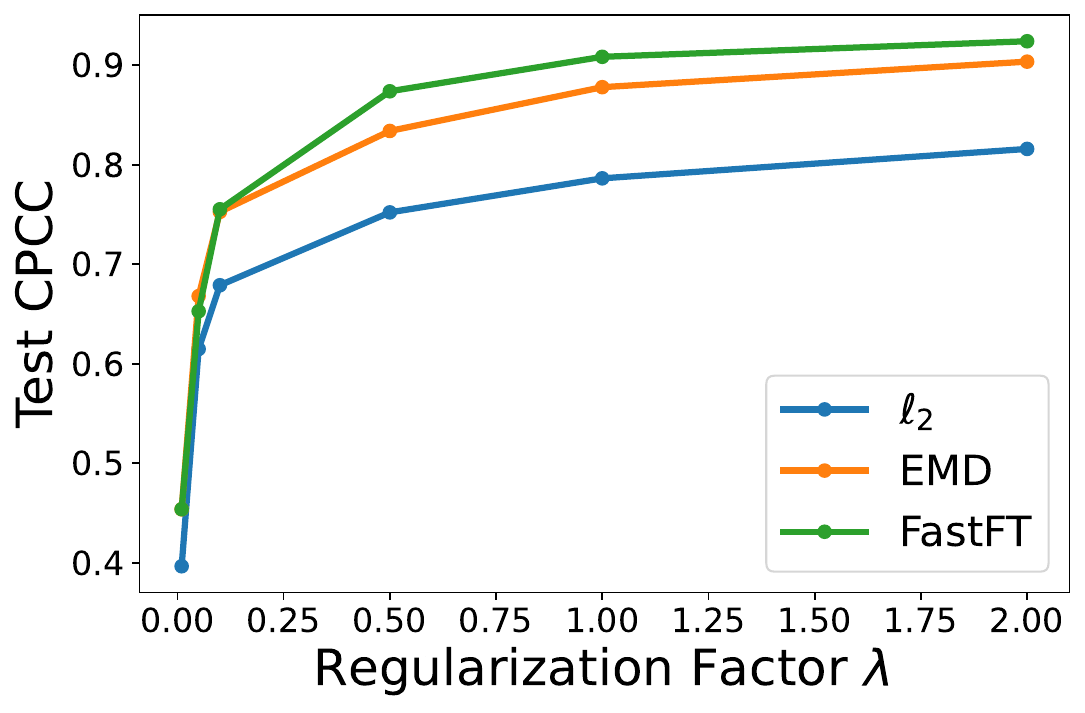}
        \label{fig:lambda}
    \end{minipage}\hfill
    \begin{minipage}{0.33\textwidth}
        \centering
        \includegraphics[width=\textwidth]{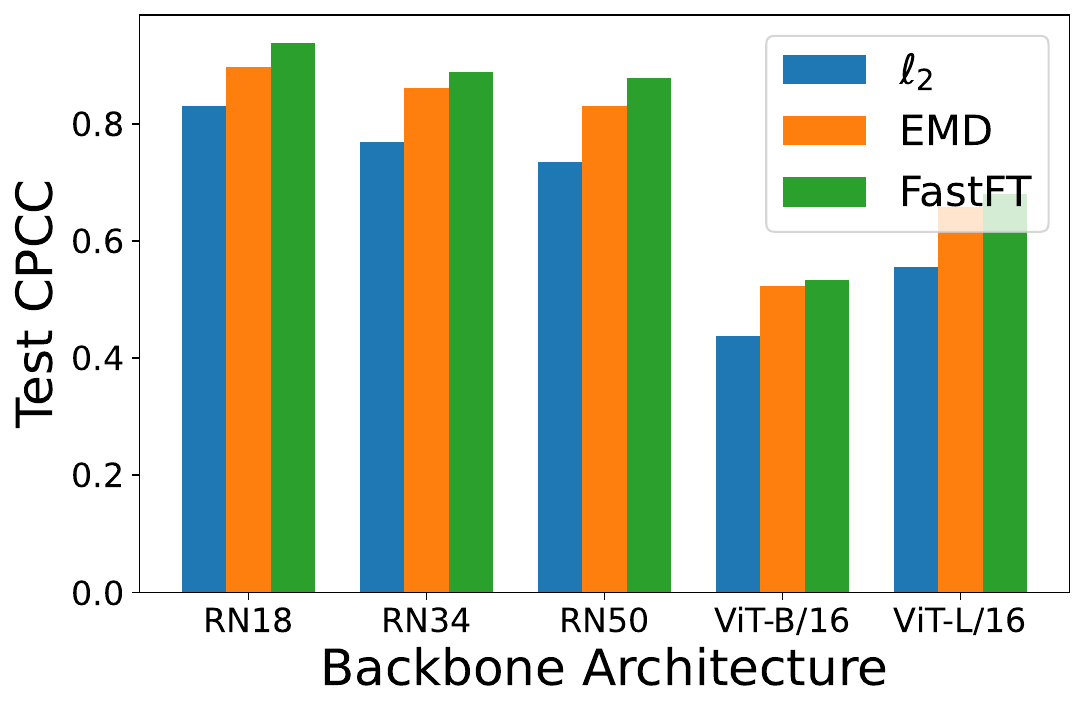}
        \label{fig:arch}
    \end{minipage}
    \caption{Test CPCC for various batch size, regularization strength, and architecture, from left to right. FastFT consistently performs the best across all settings.}
    \label{fig:cpcc_ablation}
\end{figure}

A model is more interpretable when its representations accurately capture hierarchical information. CPCC is a natural measure of the precision of the hierarchical structure embedded in corresponding metric spaces. Unlike rank-based metrics, where only the relative order of distances matters, CPCC not only requires the correct ranking of distances (i.e., fine classes sharing the same coarse class should be closer than fine classes with different coarse parents), but is also influenced by the absolute values of the node-to-node distances, because CPCC is affected by the given label tree weights. Therefore, we report CPCC, a stricter metric, on the test set. In Tab.\ref{tab:testcpcc}, we present $\ell_2$-CPCC for the Flat and $\ell_2$ objectives, and use the corresponding OT distances for the OT-CPCC experiments.

We observe that \textit{OT-CPCC preserves more hierarchical information, leading to better interpretability}. As shown in Tab.\ref{tab:testcpcc}, in 5 out of 7 datasets, an OT-CPCC method achieves the best TestCPCC score. For \textsl{E13} and \textsl{E30}, OT-CPCC methods slightly underperform compared to $\ell_2$, likely due to the BREEDS representations being less multi-modal than those of other datasets, as indicated in Tab.\ref{fig:CIFAR100-gmm_comparison}.

\paragraph{Ablation Study} In our experiments with CIFAR100, shown in Fig.\ref{fig:cpcc_ablation}, we investigated the impact of batch size, the regularization factor $\lambda$ in Eq.\ref{eq:cross-entropy}, and architecture on TestCPCC. Given CIFAR100’s large number of fine classes, the average sample size per class is relatively small when applying CPCC with a standard batch size of 128. Consequently, there is minimal difference between using the centroid of a few data points, as in $\ell_2$-CPCC, and using pairwise distances for each data point in OT-CPCC. Therefore, larger batch sizes allow OT-CPCC to better capture class distributions, potentially leading to a more significant advantage in the quality of the learned hierarchy. We compared TestCPCC across various settings, including batch sizes of [64, 128, 512, 1024], $\lambda$ values of [0.01, 0.05, 0.1, 0.5, 1, 2], and backbones such as ResNet18, ResNet34, ResNet50 \citep{he2016deep}, ViT-B/16, and ViT-L/16 \citep{dosovitskiy2020image}, using $\ell_2$, EMD, or FastFT CPCC. Across all comparisons, we observe that FastFT > EMD > $\ell_2$ in terms of the quality of the learned hierarchy, highlighting the advantage of optimal transport-based methods.

\section{Related Work}
\paragraph{Learning Hierarchical Representations}
We discuss methods that operate on feature representations, thereby directly influencing the geometry of the representation space when hierarchical information is provided. DeViSE \citep{Frome_Corrado_Shlens_Bengio_Dean_Ranzato_Mikolov_2013} is an early multimodal work to align image and text embedding evaluated on a hierarchical precision metric to reflect semantic knowledge in the representation. \citet{Ge_Huang_Dong_Scott_2018, Zhang_Zhou_Lin_Zhang_2016} modify triplet loss to capture class similarity for metric learning or image retrieval. \citet{Barz_Denzler_2019, Garg_Sani_Anand_2022} embed classes on hypersphere based on hierarchical similarity, and similar ideas have been explored in the hyperbolic space \citep{Kim_Jeong_Kwak_2023} where tree can be embedded with low distortion. \citet{Yang_Bastan_Zhu_Gray_Samaras_2022} provides a multi-task proxy loss for deep metric learning. \citet{Nolasco_Stowell_2022} proposes a ranking loss to make Euclidean distance of feature close to a target $d$-dim sequence determined by the rank of tree metric. We work on the extension of \citet{zeng2022learning} which is a special type of hierarchical methods where a regularizer is used. For a broader discussion on various hierarchical methods, we direct readers to the summary provided by \citet{Bertinetto_Mueller_Tertikas_Samangooei_Lord_2020}.

\paragraph{OT Methods and Hierarchical Learning} The application of Optimal Transport (OT) in hierarchical learning contexts is explored in a limited number of studies. \citet{Ge_Li_Li_Fan_Xie_You_Liu_2021}, for instance, substitutes the conventional cross-entropy with a novel loss function that optimally transports the predictive distribution to the ground-truth one-hot vector, modifying the ground metric as tree metric and thus utilizing the hierarchical structure of the classes. Additionally, SEAL \citep{Tan_Wang_Zhang_2023} employs the TWD to construct a latent hierarchy, enhancing traditional supervised and semi-supervised learning methodologies. It can be demonstrated that, given a tree $\gT$, the optimization process for TWD aligns with $\ell_1$ loss optimization and is upper bounded by the SumLoss baseline introduced by \citet{zeng2022learning}. Consequently, these methodologies can be categorized as hierarchical loss functions rather than regularization methods discussed in this work.

\section{Conclusion}
Building upon \citet{zeng2022learning} using CPCC as a regularizer to learn structured representations, we have identified and targeted the limitations inherent in $\ell_2$-CPCC. Our proposed OT-CPCC family provides a more nuanced measurement of pairwise class distances in feature space, effectively capturing complex distributions theoretically and empirically. Furthermore, our novel linear FastFT algorithm itself is interesting in nature for OT communities. FastFT-CPCC can be viewed as a successful application in learning tasks, and we welcome future work to investigate more on the properties of our greedy approach.

\clearpage

\section*{Acknowledgment}
SZ and HZ are partially supported by an NSF IIS grant
No. 2416897. HZ would like to thank the support from a
Google Research Scholar Award. MY is partly
supported by MEXT KAKENHI Grant Number 24K03004
and by JST ASPIRE JPMJAP2302. The views and conclusions expressed in this paper are solely those of the authors and do not necessarily
reflect the official policies or positions of the supporting
companies and government agencies.

\bibliography{references}
\bibliographystyle{iclr2025_conference}

\clearpage
\appendix
\newpage

\section*{Appendix}

\section{Approximation Methods of Earth Mover's Distance}
\label{sec:approximation}
Most approximation methods do not rely on any tree structure. \textbf{Sinkhorn} \citep{Cuturi_2013} is a fully differentiable approximation method which leverages the entropic regularization setup of EMD and the convergence of doubly stochastic matrix. \textbf{Sliced Wasserstein Distance} (SWD) \citep{sliced2015rabin} projects data to 1 dimension through a random projection matrix and uses $\text{EMD}_\text{1d}$ for fast calculation, but the quality of approximation highly relies on the number of random projections. \textbf{Tree Wasserstein Distance} (TWD) \citep{yamada2022approximating, Le_Yamada_Fukumizu_Cuturi_2019, takezawa2021supervised} is a tree based approximation algorithm that can be written in the closed matrix form $\norm{\operatorname{diag}(\mathrm{w}) \boldsymbol{B}(a-b)}_{1}$, where $\mathrm{w}$ contains weights of $\gT$ and $\mB$ is a 0-1 internal nodes by leaf nodes matrix that represents if a node is the ancestor of a leaf node. To make sure TWD is a good approximation, an extra step of lasso regression is performed to learn $\mathrm{w}$ such that the tree metric approximates Euclidean distance between features with iterative gradient based method. In \citet{yamada2022approximating}, it requires us to learn the tree distance between all pairs of nodes, which leads to quadratic computational complexity with respect to sample size as shown in Tab.\ref{tab:ot_comparison}. 

\section{Comparison of Structured Representation Learning and Learning Label Representations}

Generally speaking, the problem of structured representation learning and the problem of learning label representations, or learning label embeddings, are two different topics. In label representation learning, each label is mapped to a high-dimensional vector for downstream tasks. For example, in standard multi-class classification, we typically use one-hot vector embeddings. Alternative approaches include using a soft label to assign probability values in (0,1) to each class \citep{nguyen2014learning}, or learning label embeddings \citep{weston2011wsabie, sun2017label, liu2024tuning}. Label representation learning methods have various downstream applications, such as text summarization \citep{sun2017label}, novelty detection \citep{Barz_Denzler_2019}, and extreme classification \citep{jalan2019accelerating, tagami2017annexml, bhatia2015sparse, yu2014large, evron2018efficient, gupta2019distributional}.

While $\ell_2$-CPCC falls into this category, OT-CPCC methods are not label embedding methods. Instead of representing a class with a Euclidean centroid in $\ell_2$, OT-CPCC methods intentionally avoid this step by considering a more nuanced relationship between labels. OT-CPCC relies on pairwise relationships between data points from two classes to include more complex distributional geometry during learning. Since these methods depend on data point relations, there is no one-to-one correspondence between a label and a representation.

\section{EMD-CPCC Generalizes $\ell_2$-CPCC}
\label{app:TWD_sumloss}
We can show EMD-CPCC is a generalization of previous work of $\ell_2$-CPCC through the following proposition. 
\reduction*
\begin{proof}
    Since $\mu, \nu$ in EMD definition are probability measures, the definition of EMD can be generalized to continuous distributions as well. Denote $Z$ the random variable from $\mu$, $Z'$ from $\nu$, and $q$ the coupling between $\mu$ and $\nu$, then EMD can be also expressed in the following $p$-Wasserstein ($\gW_p$) form when $p = 1$:
\begin{align}
    \gW_p & = \left(\inf_{q}\int_{\gZ \times \gZ}\norm{z_i - z_j}^{p}q(z_i, z_j)dz_iz_j\right) ^ {\frac{1}{p}} \notag \\
          & = \inf_{q} (\mathbb{E} \lVert Z - Z' \rVert^p)^{\frac{1}{p}}
    \label{eq:emd_expectation}
\end{align}
   With this definition, our statement can be proved in two steps. First, we want to show the lower bound of EMD is $\norm{\mu_z - \mu_{z'}}$ for any pairs of input random variables. By one step application of Jensen's inequality, since every norm is convex, we have the following result from the expectation form of $\gW_1$ in Eq.\ref{eq:emd_expectation}: 
    \begin{align}
        \gW_1 = \inf\mathbb{E}\norm{Z - Z'} &\geq \inf \norm{\mathbb{E}(Z - Z')} \notag \\ &= \norm{\mathbb{E}Z - \mathbb{E}Z'} = \norm{\mu_z - \mu_{z'}}
    \end{align}
    Second, we need to show EMD $\leq \norm{\mu_z - \mu_{z'}}$ for the given Gaussian constraints. Following the convexity of quadratic function, apply Jensen's again, we have:
    \begin{align}
        \gW_1^2 = (\inf\mathbb{E}\norm{Z - Z'})^2 
        \leq \inf\mathbb{E}(\norm{Z - Z'})^2 = \gW_2^2
    \end{align}
    \citet{Dowson_Landau_1982} showed the following analytical form of 
\begin{equation}
    \gW_2^2 = \lVert \mu_z - \mu_{z'} \rVert^2
 + \text{tr}\left( \Sigma_z + \Sigma_{z'} - 2 (\Sigma_z \Sigma_{z'})^{1/2} \right)
\label{eq:bures}
\end{equation}
This distance metric is also sometimes called \textbf{Bures-Wasserstein} (Bures) distance. The trace term becomes $0$ if $\Sigma_z = \Sigma_{z'} = \Sigma$.

\end{proof}

\section{Details for Fast Flow Tree CPCC Algorithm}
\label{app:FFT-algo}
In this section, we first present the full pseudocode to compute FastFT-CPCC in Alg.\ref{alg:FFTCPCC}. Compared to FlowTree \citep{Backurs_Dong_Indyk_Razenshteyn_Wagner_2020} which uses the helper function in Alg.\ref{alg:flow_tree_recursion} \citep{Kalantari_Kalantari_1995}, we replace the \texttt{greedy\_flow\_matching} algorithm with Alg.\ref{alg:FFTCPCC-short} which aligns with algorithm that computes the 1 dimensional OT transportation plan.

\begin{algorithm}[!htbp]
    \caption{Train with Fast FlowTree CPCC} 
\label{alg:FFTCPCC}
\begin{algorithmic}[1]
\INPUT{epoch $E$, batch size $B$, train data $\gD = \{(x_i, y_i)\}_{i=1}^N$, dictionary $S$}
    \FOR {iteration $= 0, 1, \ldots, E-1$}
        \FOR {batch $= 0, 1, \ldots, \lfloor N/B \rfloor$}
            \STATE assume there are $k$ classes in batch
            \FOR{$(a,b)$ \textbf{in} $\binom{k}{2}$ pairs of classes}

                \STATE $z_a = (a_1, \ldots, a_m)$
                \STATE $z_b = (b_1, \ldots, b_n)$
                \IF{$(m,n)$ \textbf{in} $S$ \OR $(n,m)$ \textbf{in} $S$}
                    \STATE $P \leftarrow S[(m,n)]$
                \ELSE 
                    \STATE $P \leftarrow$ \texttt{greedy\_flow\_matching(m,n,a=unif,b=unif)}  \hfill \COMMENT{Alg.\ref{alg:FFTCPCC-short}}
                \STATE $S[(m,n)] \leftarrow P$
                \ENDIF
                \STATE idx $\leftarrow$ \texttt{P.nonzero()}  \hfill \COMMENT{store nonzero index tuples of $P$}
                \STATE $\rho(a,b) = 0$
                \FOR{$(i,j)$ \textbf{in} idx}
                    \STATE $\rho(a,b) \mathrel{+}= P[i,j] \cdot \texttt{norm}(z_a[i], z_b[j])$ 
                \ENDFOR
                
            \ENDFOR
            \STATE Calculate Eq.\ref{eq:cross-entropy} with CPCC as $\gR$
            \STATE Update network parameters with backpropagation
        \ENDFOR
    \ENDFOR
\end{algorithmic} 
\end{algorithm}

Now we prove the correctness of Fast Flow Tree algorithm and its relation to 1 dimensional optimal transport distance. 

\FFT*

\paragraph{Proof Sketch} We can observe that the blue code blocks, which contain the update rule of the flow matrix, are almost identical between Alg.\ref{alg:FFTCPCC-short} and Alg.\ref{alg:flow_tree_recursion}. According to \citet{Kalantari_Kalantari_1995}, Alg.\ref{alg:flow_tree_recursion} provides the optimal flow matrix for the optimal transport problem using the tree metric as the ground metric in $\mD$. From Fig.\ref{fig:EMD_motivation}, the tree distance between any two samples in the augmented tree is always the same, meaning the order of encountering any vertex within the source or target distribution does not matter. This effectively reduces the FlowTree problem in Alg.\ref{alg:flow_tree_recursion} to a flat 1d EMD problem in Alg.\ref{alg:FFTCPCC-short}. The major difference between two algorithms is that, unlike Alg.\ref{alg:flow_tree_recursion}, Alg.\ref{alg:FFTCPCC-short} does not require passing the flow from the leaf to the root node.

\begin{proof} 
    Alg.\ref{alg:flow_tree_recursion} will not start to match leaves until the current node has children from two different distributions, or in our context, from two different fine labels. This is because of the while statement in line $8$: based on our construction of $\mathcal{T}$ in Fig.\ref{fig:sample_tree}, if the current node $v$ is either a leaf or a fine label node, then all of its children are from either $C_a(v)$ or $C_b(v)$, so we will jump to line $11$ to send all flows to $v$'s parent. Therefore, the matching starts at the lowest common ancestor of two label nodes. 

    Now we have $v$ as the lowest common ancestor of two fine class nodes (ex., $F$ or $H$ in Fig.\ref{fig:sample_tree}). Before entering line $9$, we always pass unmatched demands to the parent, and we can see the demands are always from one label until we reach the lowest common ancestor. From Fig.\ref{fig:sample_tree}, we also know at this point, we have collected all flows from both labels. Therefore, the sum of flow is $1$ for both fine label nodes when the matching starts. 
    
    Both Alg.\ref{alg:flow_tree_recursion} and Alg.\ref{alg:FFTCPCC-short} greedily remove one sample from either distribution. The greedy plan is always feasible and no unmatched flow is left after one scan of all $m+n$ features, because we inherently remove the the same amount of flow from both distributions every iteration in the while loop. Thus, on our $\gT$, we can skip most steps in Alg.\ref{alg:flow_tree_recursion}: bottom up step in line $2$, looping through all vertices at height $h$, the if-else check from line $4$-$7$, and the last step of passing unmatched demands to parents are all unnecessary. Alg.\ref{alg:flow_tree_recursion} terminates immediately once we run through all while loop steps, thus reducing to Alg.\ref{alg:FFTCPCC-short}.
    
    Alg.\ref{alg:FFTCPCC-short}, or the flow matrix solution for 1-$d$ EMD problem, can be used when Monge Property or north-west-corner rules \citep{Hoffman_1963} hold. It can be shown by induction that this $O(m+n)$ greedy algorithm can be applied whenever the following constraint is satisfied: if $\{(z_i, a_i)\}, \{(z_j,b_j)\}$ are indexed (1d feature, weight) sequences, for indices $i_1 < i_2$ and $j_1 < j_2$:
    \begin{equation}
        \mD_{i_1, j_1} + \mD_{i_2, j_2} \leq \mD_{i_1, j_2} + \mD_{i_2, j_1} 
        \label{eq:monge}
    \end{equation}
    If two sequences are ordered (in ascending order) by metric defining $\mD$, we have $z_{i_1} \leq z_{i_2}$ and $z_{j_1} \leq z_{j_2}$.
    There are in total six cases for any $i_1  <i_2, j_1 < j_2$:
    \begin{enumerate}
        \setlength{\itemsep}{0pt}
        \setlength{\parskip}{0pt}
        \item If $z_{j_1} \leq z_{j_2} \leq z_{i_1} \leq z_{i_2}$, then Eq.\ref{eq:monge} $\Leftrightarrow \cancel{z_{i_1}} \cancel{- z_{j_1}} \cancel{+ z_{i_2}} \cancel{- z_{j_2}} \leq \cancel{z_{i_1}} \cancel{- z_{j_2}} \cancel{+ z_{i_2}} \cancel{- z_{j_1}}$ holds.
        \item If $z_{j_1} \leq z_{i_1} \leq z_{j_2} \leq z_{i_2}$, then Eq.\ref{eq:monge} $\Leftrightarrow z_{i_1} \cancel{- z_{j_1}} \cancel{+ z_{i_2}} - z_{j_2} \leq z_{j_2} - z_{i_1} \cancel{+ z_{i_2}} \cancel{- z_{j_1}}$ holds because $z_{i_1} \leq z_{j_2}$.
        \item If $z_{j_1} \leq z_{i_1} \leq z_{i_2} \leq z_{j_2}$, then Eq.\ref{eq:monge} $\Leftrightarrow z_{i_1} \cancel{- z_{j_1}} \cancel{+ z_{j_2}} - z_{i_2} \leq \cancel{z_{j_2}} - z_{i_1} + z_{i_2} \cancel{- z_{j_1}}$ holds because $z_{i_1} \leq z_{i_2}$.
        \item If $z_{i_1} \leq z_{j_1} \leq z_{j_2} \leq z_{i_2}$, then Eq.\ref{eq:monge} $\Leftrightarrow z_{j_1} \cancel{- z_{i_1}} \cancel{+ z_{j_2}} - z_{i_2} \leq \cancel{z_{j_2}} \cancel{- z_{i_1}} + z_{i_2} - z_{j_1}$ holds because $z_{j_1} \leq z_{i_2}$.
        \item If $z_{i_1} \leq z_{j_1} \leq z_{i_2} \leq z_{j_2}$, then Eq.\ref{eq:monge} $\Leftrightarrow z_{j_1} \cancel{- z_{i_1}} \cancel{+ z_{j_2}} - z_{i_2} \leq \cancel{z_{j_2}} \cancel{- z_{i_1}} + z_{i_2} - z_{j_1}$ holds because $z_{j_1} \leq z_{i_2}$.
        \item If $z_{i_1} \leq z_{i_2} \leq z_{j_1} \leq z_{j_2}$, then Eq.\ref{eq:monge} $\Leftrightarrow \cancel{z_{j_1}} \cancel{- z_{i_1}} \cancel{+ z_{j_2}} \cancel{- z_{i_2}} \leq \cancel{z_{j_2}} \cancel{- z_{i_1}} \cancel{+ z_{j_1}} \cancel{- z_{i_2}}$ holds.
    \end{enumerate}
    Therefore, if two sequences are ordered with respect to some metric, Eq.\ref{eq:monge} is satisfied.
    
    Recall we need the extra sorting step in the beginning of 1d OT plan to apply the greedy flow matching algorithm. The distance function in Eq.\ref{eq:monge} is decided by the ground metric of EMD, which does not make any difference for 1 dimensional features. In Flow Tree, a tree is constructed with QuadTree such that the tree distance of any two features is close to the $\ell_2$ distance between them. In this case, in EMD computation, we can improve computational efficiency by Flow Tree. With extended label tree $\gT$, because all same class sample leaves belong to one single parent, we have a special case here where the tree distance of any two samples from $\mu, \nu$ are always the same, satisfying Eq.\ref{eq:monge} vacuously. This statement fails only if we assign different edge weights for each sample during the extension of label tree, and does not depend on the weights of original label tree. To avoid the limitation on the extended edge weights, we can instead assign different weights $a_i$ for each sample to adjust the importance of each dat points. Therefore, FastFT is still very flexible.
    
    For the time complexity, Alg.\ref{alg:FFTCPCC-short} requires at most $O(m+n)$ steps from line $4$. The most time-consuming aspect is computing the $d$-dimensional distance of $O(m+n)$ pairs in $\mD$, since there will be at most $m+n$ nonzero values in $\mD$ after applying the greedy matching algorithm, which makes the final time complexity as $O((m+n)d)$.
\end{proof}

\begin{figure}[!ht]
    \centering
    \begin{subfigure}{0.48\linewidth}
        \includegraphics[width=\linewidth]{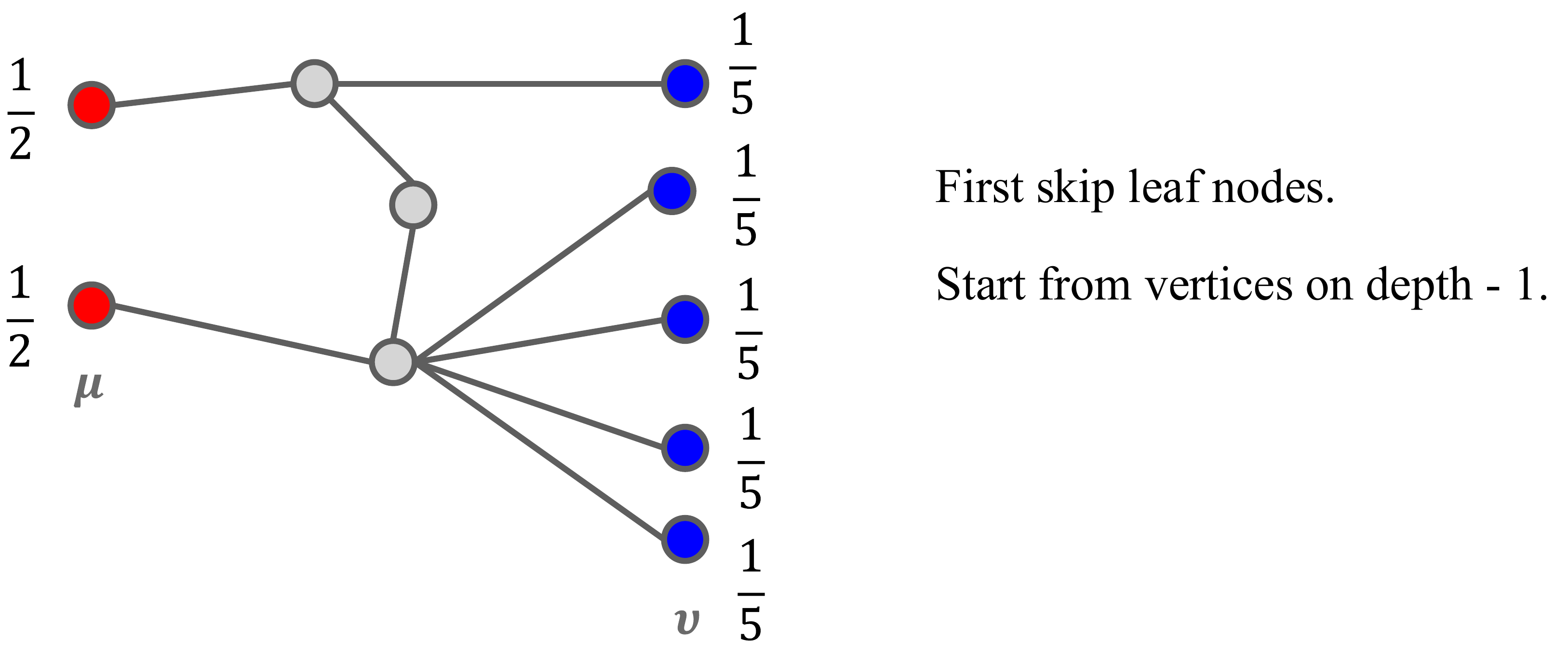}
        \caption{Step 1}
    \end{subfigure}
    ~
    \begin{subfigure}{0.48\linewidth}
        \includegraphics[width=\linewidth]{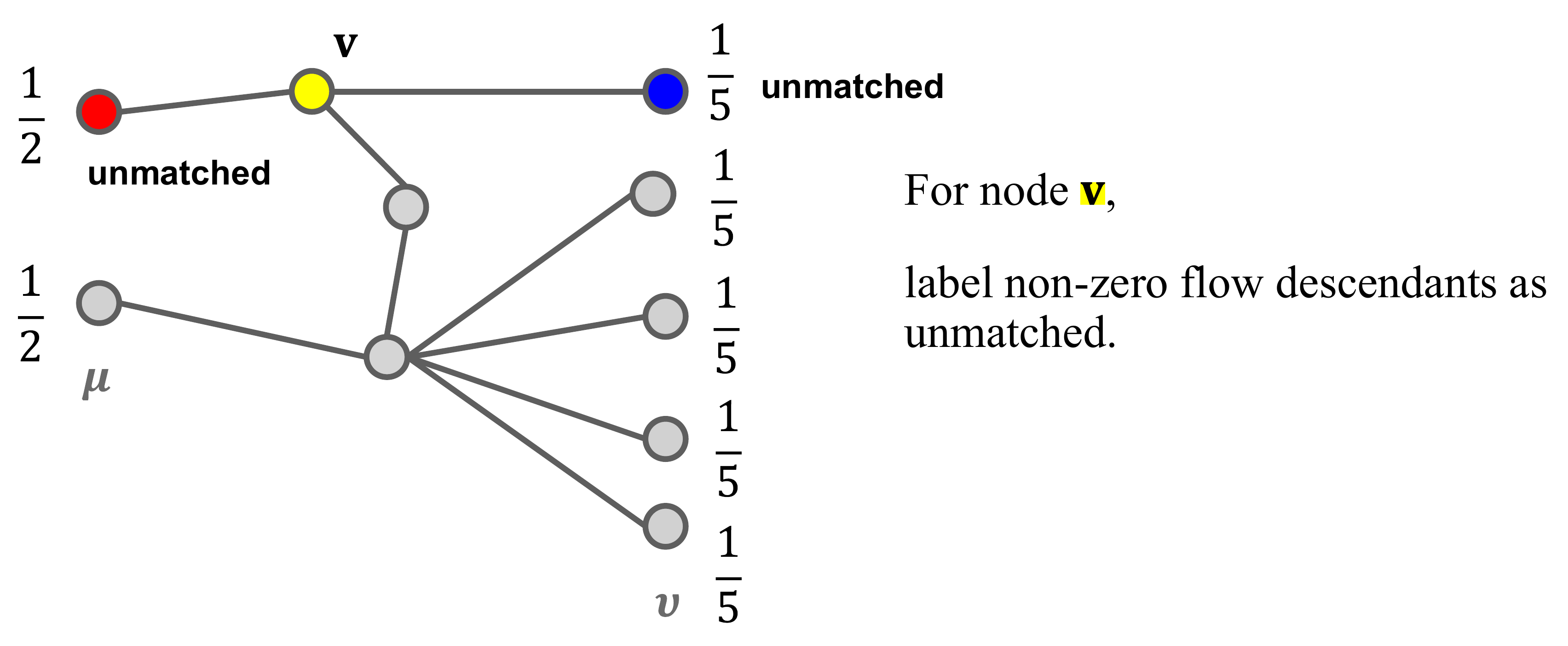}
        \caption{Step 2}
    \end{subfigure}
    ~
    \begin{subfigure}{0.48\linewidth}
        \includegraphics[width=\linewidth]{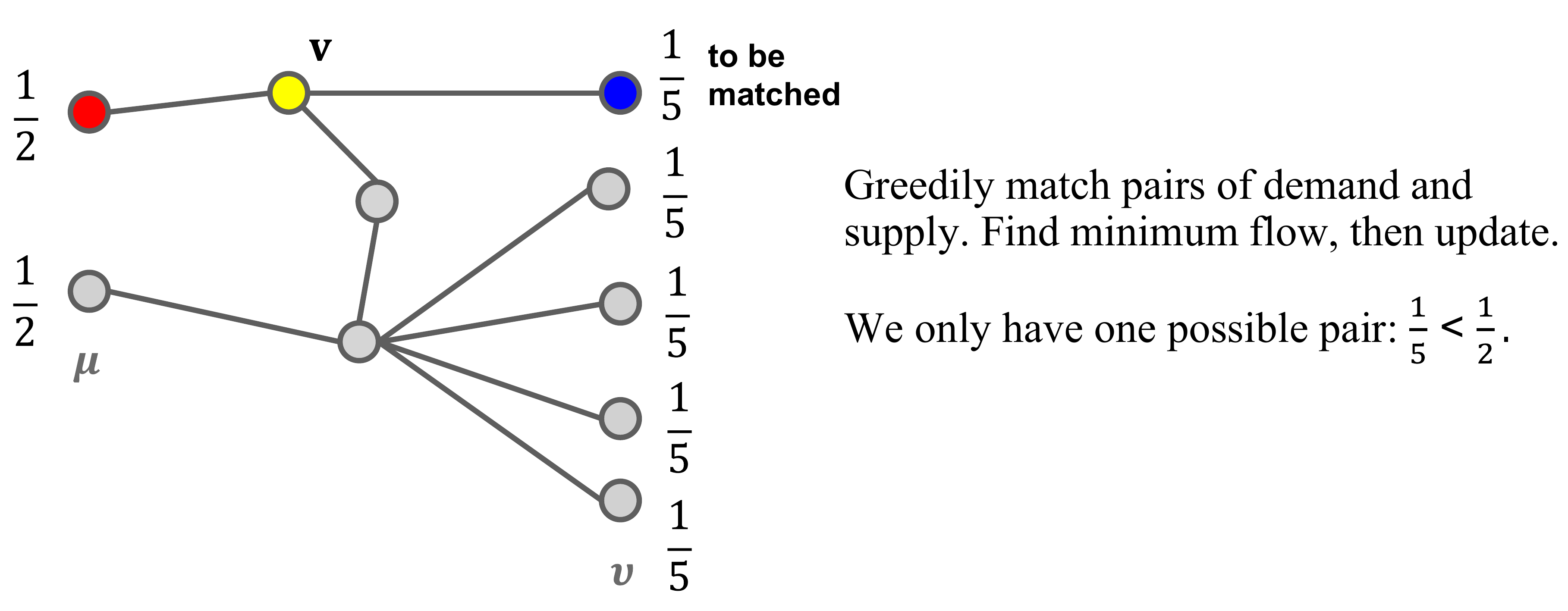}
    \caption{Step 3}
    \end{subfigure}
    ~
    \begin{subfigure}{0.48\linewidth}
        \includegraphics[width=\linewidth]{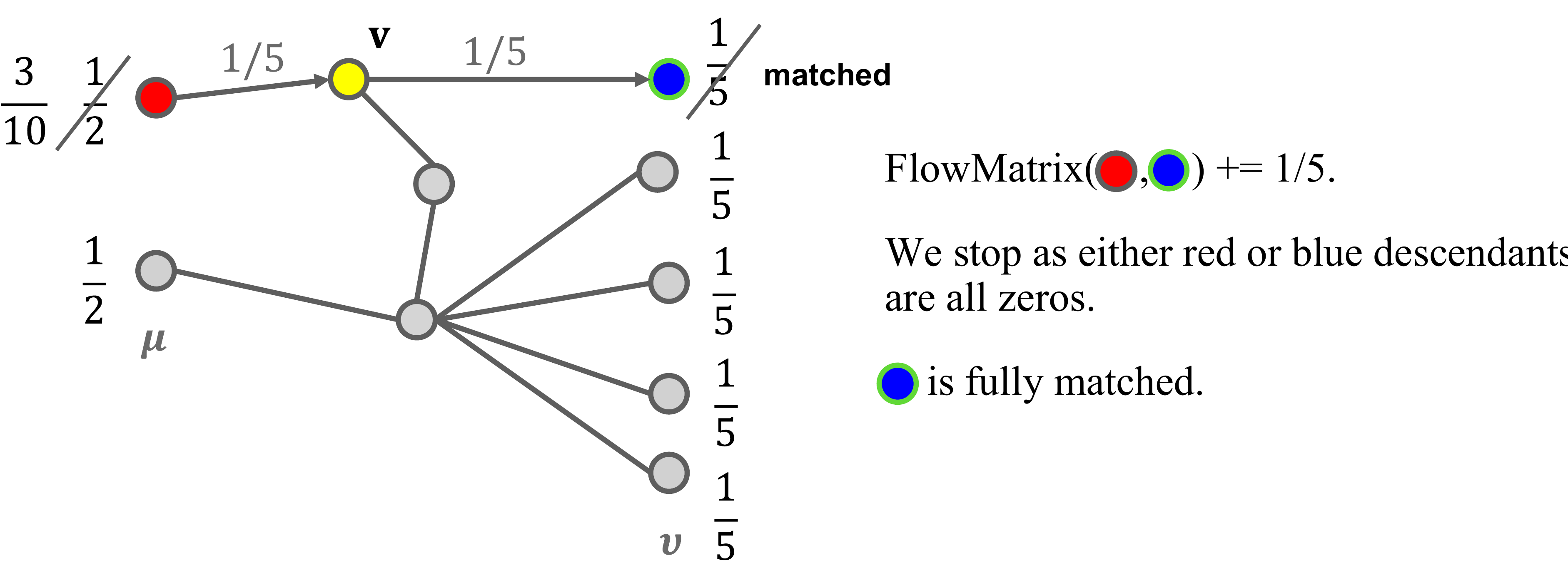}
        \caption{Step 4}
    \end{subfigure}
    ~
    \begin{subfigure}{0.48\linewidth}
        \includegraphics[width=\linewidth]{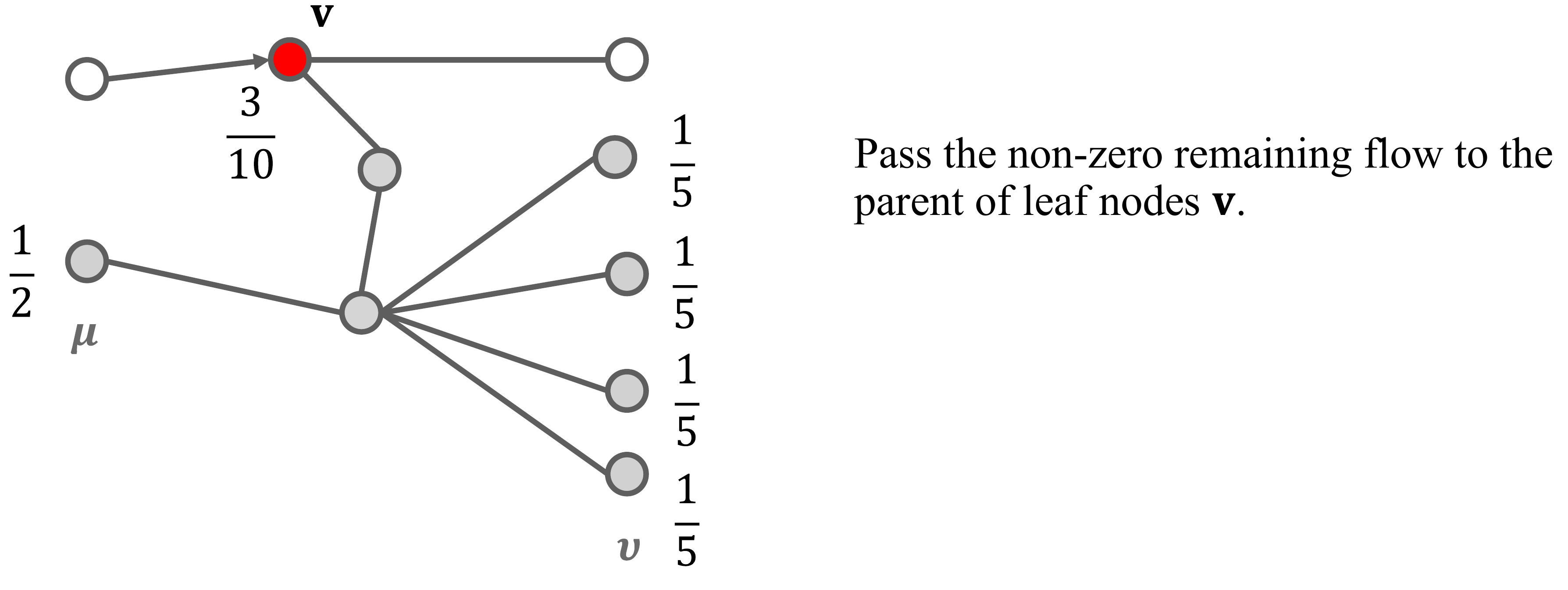}
        \caption{Step 5}
    \end{subfigure}
    ~
    \begin{subfigure}{0.48\linewidth}
        \includegraphics[width=\linewidth]{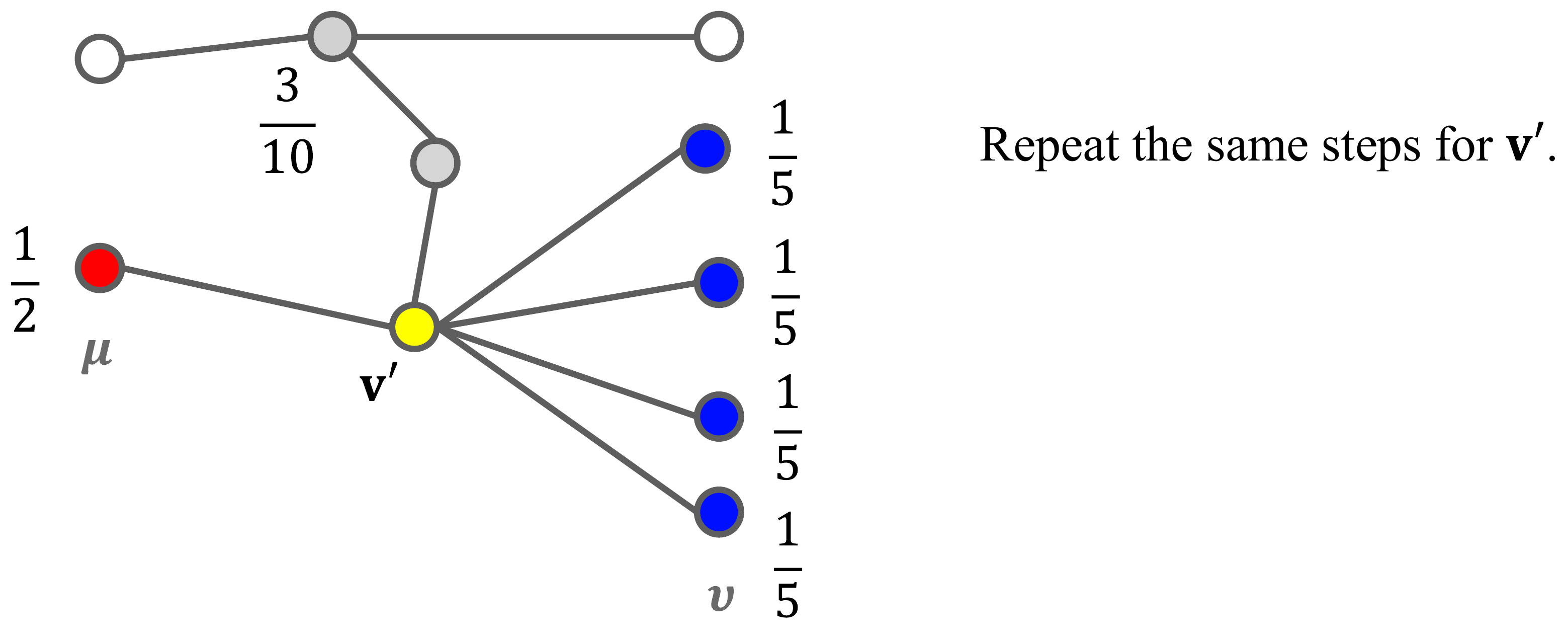}
        \caption{Step 6}
    \end{subfigure}
    ~
    \begin{subfigure}{0.48\linewidth}
        \includegraphics[width=\linewidth]{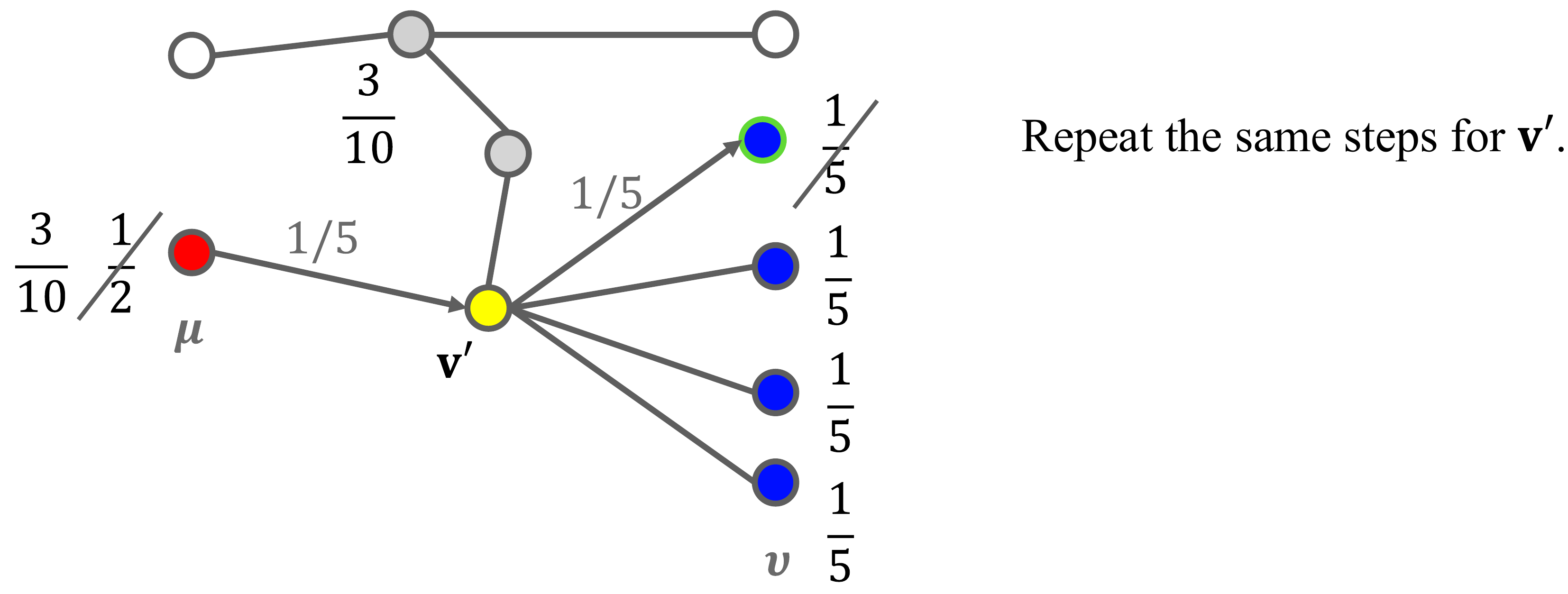}
        \caption{Step 7}
    \end{subfigure}
    ~
    \begin{subfigure}{0.48\linewidth}
        \includegraphics[width=\linewidth]{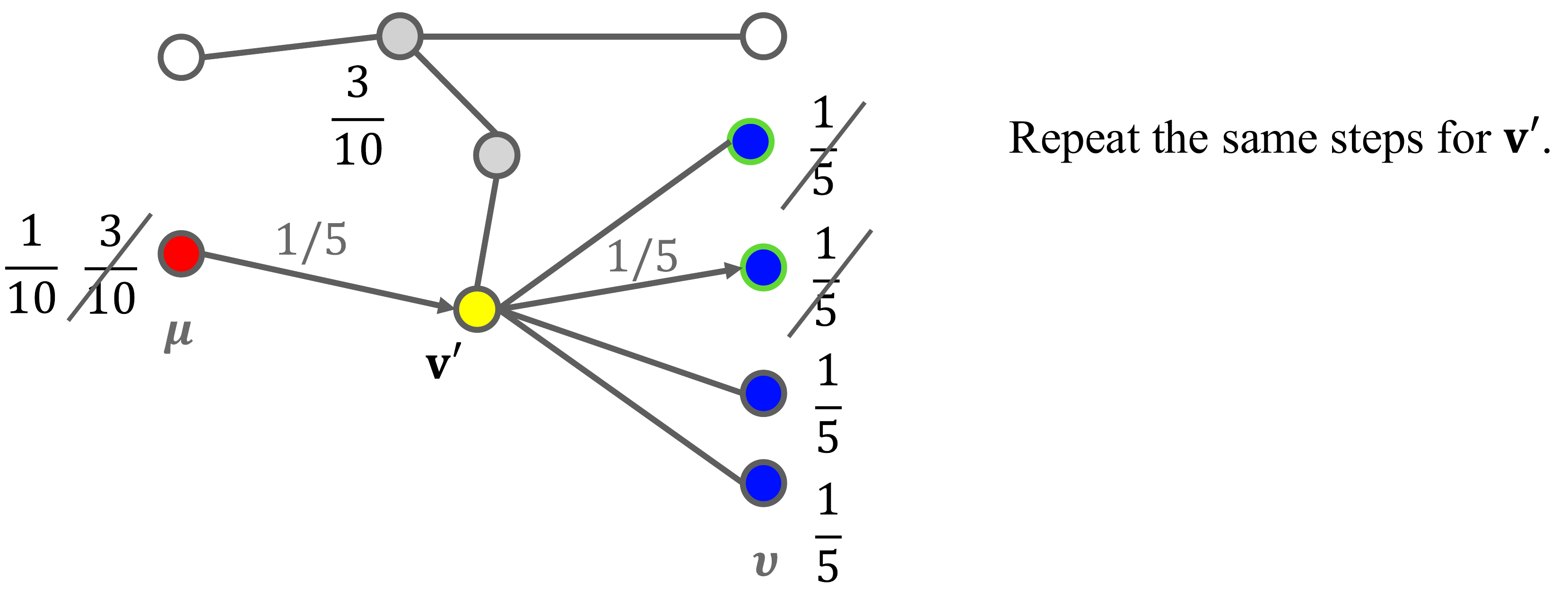}
        \caption{Step 8}
    \end{subfigure}
    ~
    \begin{subfigure}{0.48\linewidth}
        \includegraphics[width=\linewidth]{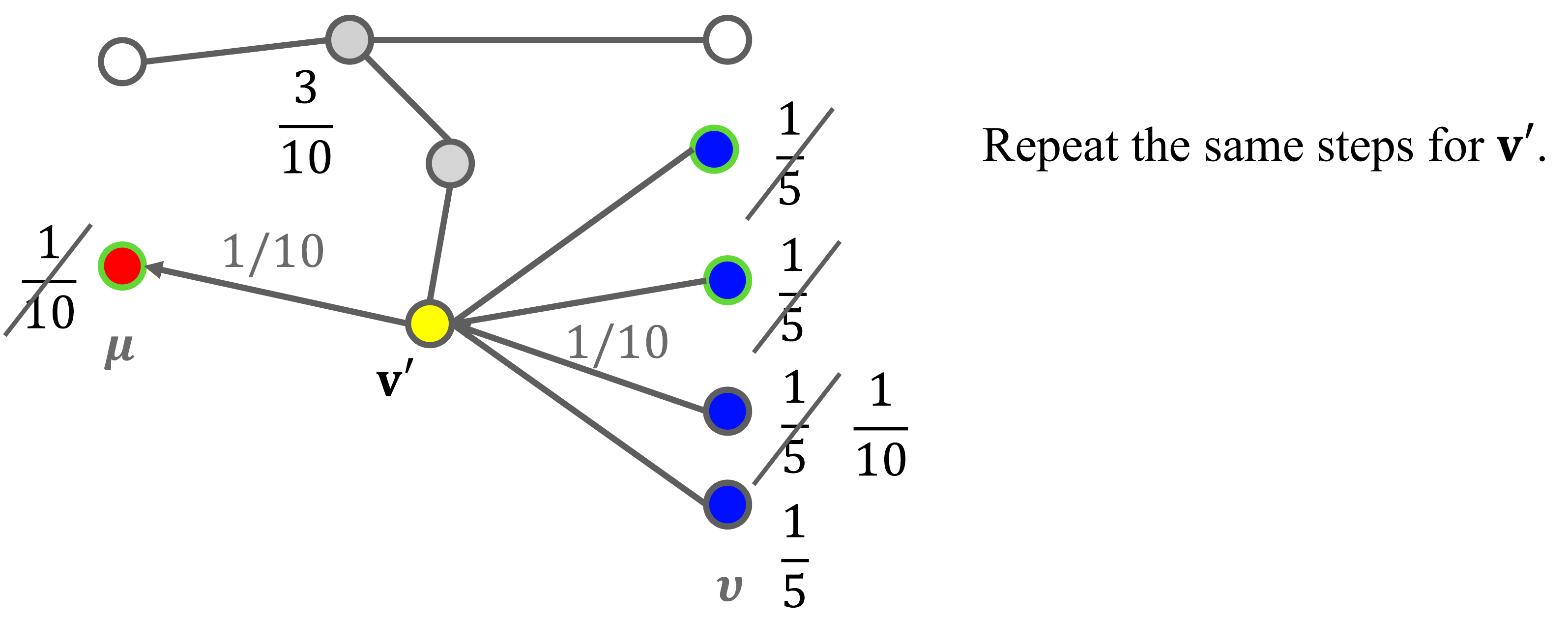}
        \caption{Step 9}
    \end{subfigure}
    ~
    \begin{subfigure}{0.48\linewidth}
        \includegraphics[width=\linewidth]{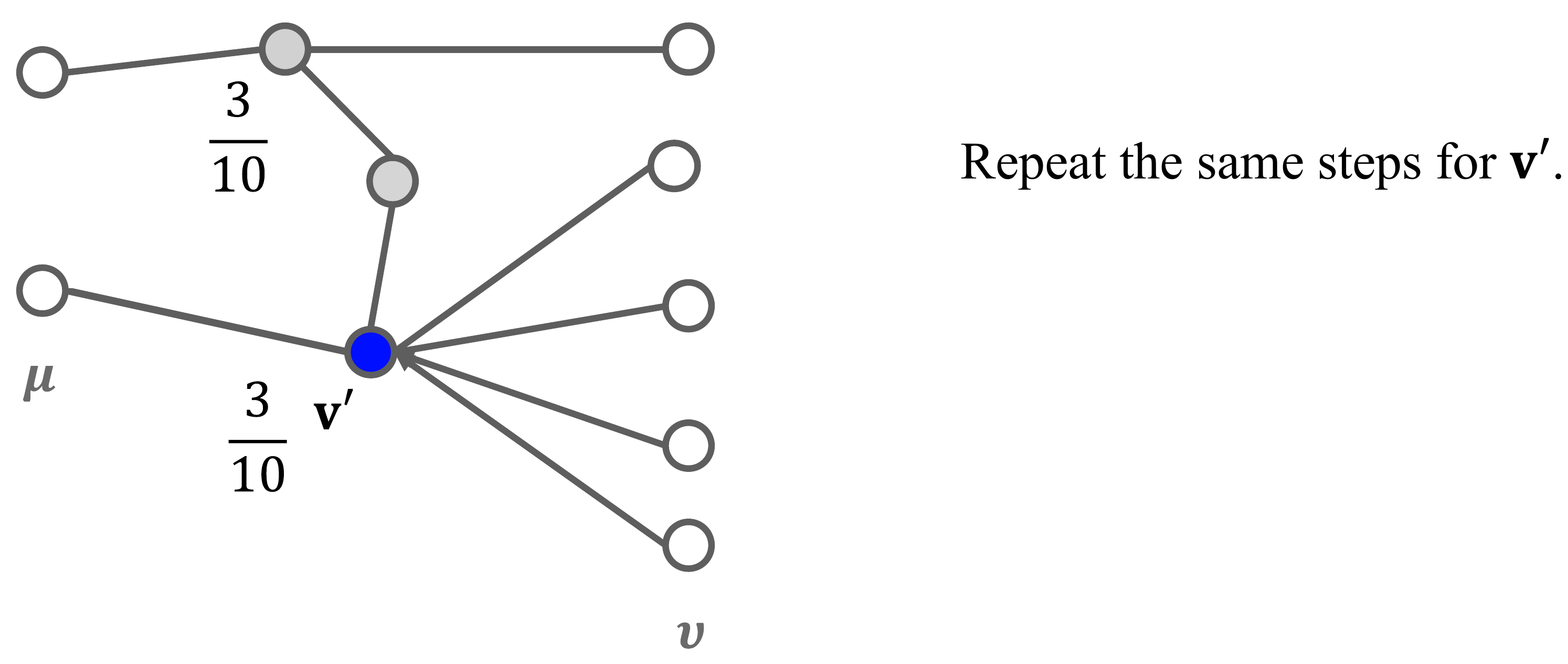}
        \caption{Step 10}
    \end{subfigure}
    ~
    \begin{subfigure}{0.48\linewidth}
        \includegraphics[width=\linewidth]{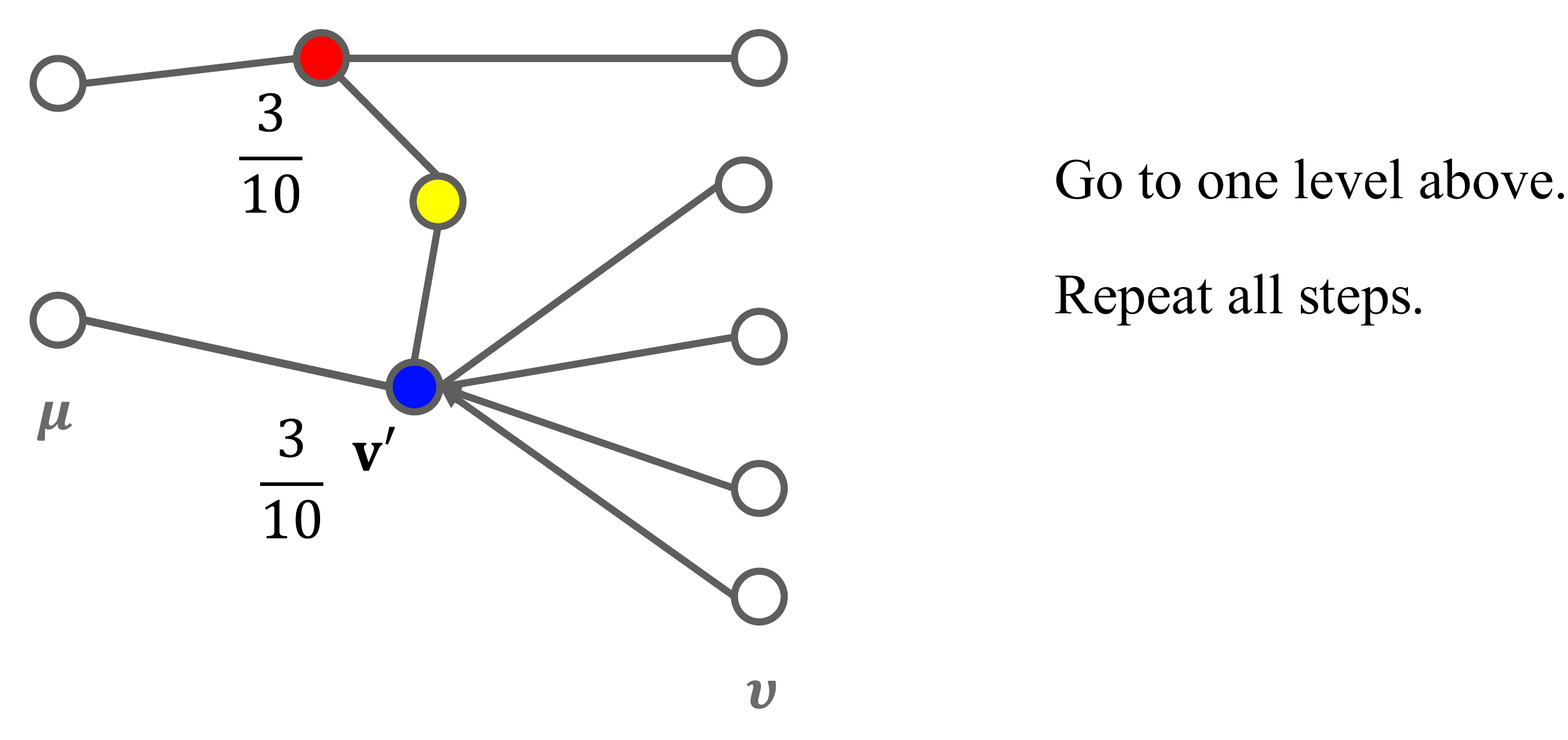}
        \caption{Step 11}
    \end{subfigure}
    ~
    \begin{subfigure}{0.48\linewidth}
        \includegraphics[width=\linewidth]{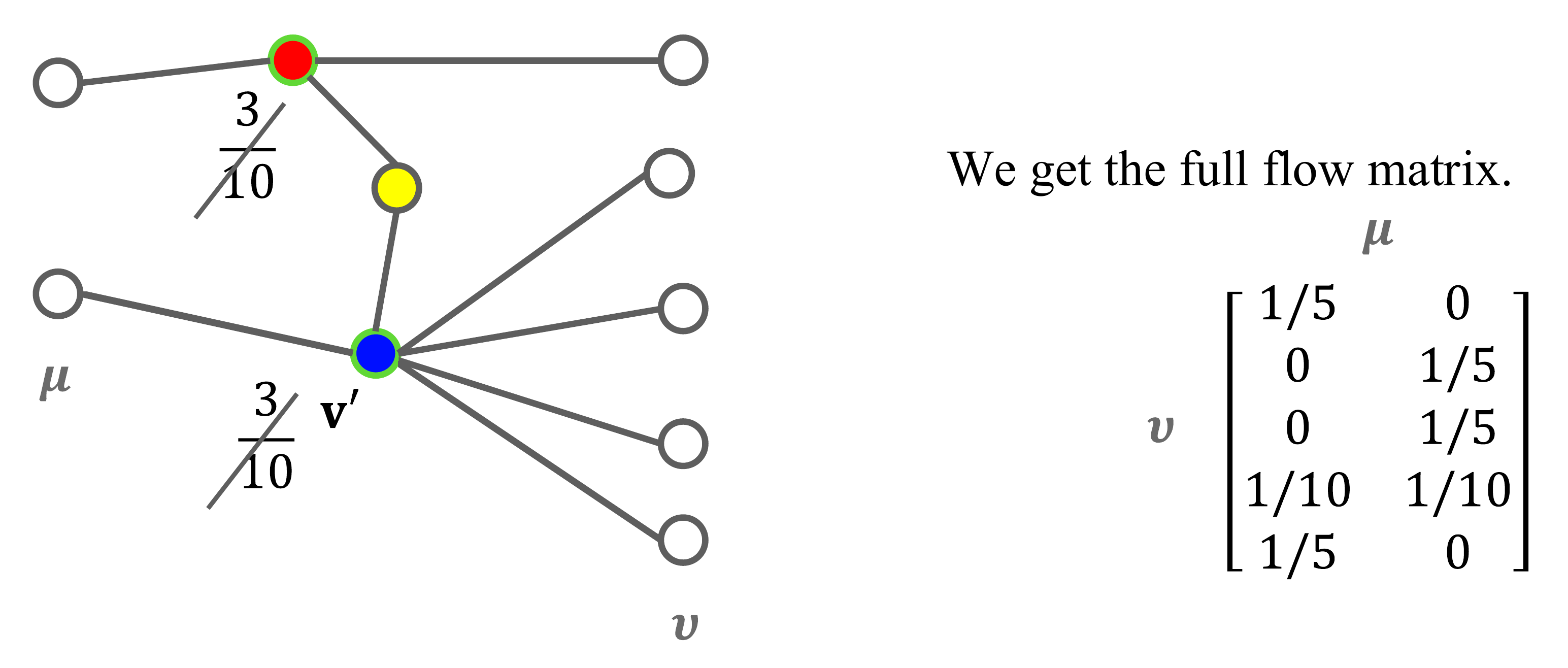}
        \caption{Step 12}
    \end{subfigure}
    \caption{Example run of Alg.\ref{alg:flow_tree_recursion} Bottom-up Tree Flow Matching.}
    \label{fig:flow-tree-example}
\end{figure}

\section{Dataset Hierarchy Construction}
\label{app:dataset-hierarchy}
In this section, we discuss the details of dataset hierarchies and how we created the dataset splits. Throughout the main body of this paper, we consider a two-level hierarchy: coarse and fine. For each coarse node, except for \textsl{BREEDS}, we split the coarse node’s fine children into approximately 40\% for the target set and 60\% for the source set. Following this procedure, the splits are detailed in Tables \ref{tab:cifar10_hierarchy}, \ref{tab:cifar100_hierarchy}, \ref{tab:inat_hierarchy}, \ref{tab:living17_hierarchy}, \ref{tab:nonliving26_hierarchy}, \ref{tab:entity13_hierarchy}, and \ref{tab:entity30_hierarchy}. The original class labels for CIFAR10 and CIFAR100 are from \url{https://www.cs.toronto.edu/~kriz/cifar.html}, for iNaturalist from \url{https://github.com/visipedia/inat_comp/tree/master/2021}, and for BREEDS from WordNet \citep{10.1145/219717.219748}.

The CIFAR10 hierarchy was manually constructed by us, while the CIFAR100 hierarchy is provided in the original dataset. In INAT, fine and coarse classes correspond to the \texttt{class} and \texttt{phylum} labels from the original annotations. These higher-level labels are particularly suitable for evaluating multi-modal settings, where OT-CPCC methods may have a more significant advantage over $\ell_2$-CPCC.

In \citet{zeng2022learning}, fine classes in BREEDS are defined one level deeper than the original labels. However, due to the large number of classes and limited samples per class within a batch, the distinction between $\ell_2$-CPCC and OT-CPCC can become less apparent. To address this, we employ a three-level hierarchy, using the labels from \citet{santurkar2021breeds} as fine labels and setting their parent categories as mid and coarse labels. Consequently, due to BREEDS setting, the fine classes are the same for both source and target datasets, so there is no source-target split in Tables \ref{tab:living17_hierarchy}, \ref{tab:nonliving26_hierarchy}, \ref{tab:entity13_hierarchy}, and \ref{tab:entity30_hierarchy}. Despite this hierarchical structure, as seen in Fig.\ref{fig:CIFAR100-gmm_comparison}, BREEDS exhibits relatively low multi-modality. With this design, in Sec.\ref{sec:classification-retrieval}, for classification and retrieval experiments, one-shot fine-tuning is not required for BREEDS on the target split.

\begin{table}[!ht]
\centering
\caption{CIFAR10 hierarchy and data split.}
\begin{tabular}{lcc}
\toprule
\textbf{Coarse}        & \textbf{Source}            & \textbf{Target}            \\ \midrule
animal        & deer, dog, frog            & bird, cat                  \\ 
transportation & ship, truck                & airplane, automobile       \\ \bottomrule
\end{tabular}
\label{tab:cifar10_hierarchy}
\end{table}

\begin{table}[!ht]
\centering
\caption{CIFAR100 hierarchy and data split.}
\begin{tabular}{lcc}
\toprule
\textbf{Coarse} & \textbf{Source} & \textbf{Target} \\ \midrule
aquatic mammals & otter, seal, whale & beaver, dolphin \\ 
fish & ray, shark, trout & aquarium fish, flat fish \\
flowers & roses, sunflowers, tulips & orchids, poppies \\
food containers & cans, cups, plates & bottles, bowls \\
fruit and vegetables & oranges, pears, sweet pepper & apples, mushrooms \\
household electrical devices & lamp, telephone, television & clock, computer keyboard \\
household furniture & couch, table, wardrobe & bed, chair \\
insects & butterfly, caterpillar, cockroach & bee, beetle \\
large carnivores & lion, tiger, wolf & bear, leopard \\
large man-made outdoor things & house, road, skyscraper & bridge, castle \\
large natural outdoor scenes & mountain, plain, sea & cloud, forest \\
large omnivores and herbivores & chimpanzee, elephant, kangaroo & camel, cattle \\
medium-sized mammals & possum, raccoon, skunk & fox, porcupine \\
non-insect invertebrates & snail, spider, worm & crab, lobster \\
people & girl, man, woman & baby, boy \\
reptiles & lizard, snake, turtle & crocodile, dinosaur \\
small mammals & rabbit, shrew, squirrel & hamster, mouse \\
trees & palm, pine, willow & maple, oak \\
vehicles 1 & motorcycle, pickup truck, train & bicycle, bus \\
vehicles 2 & streetcar, tank, tractor & lawn-mower, rocket \\ \bottomrule
\end{tabular}
\label{tab:cifar100_hierarchy}
\end{table}

\begin{table}[!ht]
\centering
\caption{INAT hierarchy and data split.}
\begin{tabular}{lp{5cm}p{5cm}}
\toprule
\textbf{Coarse}        & \textbf{Source}                                         & \textbf{Target}            \\ \midrule
Arthropoda             & Diplopoda, Hexanauplia, Insecta, Malacostraca, Merostomata & Arachnida, Chilopoda       \\
Chordata               & Ascidiacea, Aves, Elasmobranchii, Mammalia, Reptilia      & Actinopterygii, Amphibia   \\
Cnidaria               & Hydrozoa, Scyphozoa                                      & Anthozoa                   \\
Echinodermata          & Echinoidea, Holothuroidea, Ophiuroidea                    & Asteroidea                 \\
Mollusca               & Cephalopoda, Gastropoda, Polyplacophora                   & Bivalvia                   \\
Ascomycota             & Lecanoromycetes, Leotiomycetes, Pezizomycetes, Sordariomycetes & Arthoniomycetes, Dothideomycetes \\
Basidiomycota          & Dacrymycetes, Pucciniomycetes, Tremellomycetes            & Agaricomycetes             \\
Bryophyta              & Polytrichopsida, Sphagnopsida                             & Bryopsida                  \\
Tracheophyta           & Liliopsida, Lycopodiopsida, Magnoliopsida, Pinopsida, Polypodiopsida & Cycadopsida, Gnetopsida    \\ \bottomrule
\end{tabular}
\label{tab:inat_hierarchy}
\end{table}

\begin{table}[!ht]
\centering
\caption{LIVING17 dataset hierarchy.}
\begin{tabular}{lp{10cm}}
\toprule
\textbf{Coarse}        & \textbf{Fine} \\ \midrule
bird                   & grouse, parrot \\ 
amphibian              & salamander \\ 
reptile, reptilian     & turtle, lizard, snake, serpent, ophidian \\ 
arthropod              & spider, crab, beetle, butterfly \\ 
mammal, mammalian      & dog, domestic dog, Canis familiaris, wolf, fox, domestic cat, house cat, Felis domesticus, Felis catus, bear, ape, monkey \\ \bottomrule
\end{tabular}
\label{tab:living17_hierarchy}
\end{table}

\begin{table}[!ht]
\centering
\caption{NONLIVING26 dataset hierarchy.}
\begin{tabular}{lp{10cm}}
\toprule
\textbf{Coarse}        & \textbf{Fine} \\ \midrule
structure, place       & dwelling, home, domicile, abode, habitation, dwelling house, fence, fencing, mercantile establishment, retail store, sales outlet, outlet, outbuilding, roof \\ 
paraphernalia          & ball, bottle, digital computer, keyboard instrument, percussion instrument, percussive instrument, pot, stringed instrument, timepiece, timekeeper, horologe, wind instrument, wind \\ 
food, nutrient         & squash \\ 
apparel, toiletries    & bag, body armor, body armour, suit of armor, suit of armour, coat of mail, cataphract, coat, hat, chapeau, lid, skirt \\ 
conveyance, transport  & boat, bus, autobus, coach, charabanc, double-decker, jitney, motorbus, motorcoach, omnibus, passenger vehicle, car, auto, automobile, machine, motorcar, ship, truck, motortruck \\ 
furnishing             & chair \\ \bottomrule
\end{tabular}
\label{tab:nonliving26_hierarchy}
\end{table}

\begin{table}[!ht]
\centering
\caption{ENTITY13 dataset hierarchy.}
\begin{tabular}{lp{10cm}}
\toprule
\textbf{Coarse}        & \textbf{Fine} \\ \midrule
living thing, animate thing  & bird, reptile, reptilian, arthropod, mammal, mammalian \\
non-living thing             & garment, accessory, accoutrement, accouterment, craft, equipment, furniture, piece of furniture, article of furniture, instrument, man-made structure, construction, wheeled vehicle, produce, green goods, green groceries, garden truck \\ \bottomrule
\end{tabular}
\label{tab:entity13_hierarchy}
\end{table}

\begin{table}[!ht]
\centering
\caption{ENTITY30 dataset hierarchy.}
\begin{tabular}{lp{10cm}}
\toprule
\textbf{Coarse}        & \textbf{Fine} \\ \midrule
living thing, animate thing  & serpentes, passerine, passeriform bird, saurian, arachnid, arachnoid, aquatic bird, crustacean, carnivore, insect, ungulate, hoofed mammal, primate, bony fish \\
non-living thing             & barrier, building, edifice, electronic equipment, footwear, legwear, garment, headdress, headgear, home appliance, household appliance, kitchen utensil, measuring instrument, measuring system, measuring device, motor vehicle, automotive vehicle, musical instrument, instrument, neckwear, sports equipment, tableware, tool, vessel, watercraft, dish, vegetable, veggie, veg, fruit \\ \bottomrule
\end{tabular}
\label{tab:entity30_hierarchy}
\end{table}

\section{Training Hyperparameters}
\label{sec:hyperparameters}
We conducted all experiments with NVIDIA RTX A6000. The architectural and training setups for each dataset are as follows:

\subsection{CIFAR10, CIFAR100} We conduct training from scratch using ResNet18, following the ResNet architecture design where the first convolution layer employs a 3x3 kernel size, while the max pooling layer is omitted. The images are preprocessed by normalizing them according to CIFAR10/CIFAR100’s mean and standard deviation. We augment the training data by first zero-padding the images by 4 pixels, then cropping them back to their original dimensions, applying a 50\% chance of horizontal flipping, and randomly rotating the images between -15 and 15 degrees. 

For CIFAR10, the pretraining process spans 100 epochs with a batch size of 64. The learning rate begins at 0.01 and is decreased by a factor of 10 every 60 steps, with momentum set at 0.9 and weight decay set to \(5 \times 10^{-4}\) in the SGD optimizer. For CIFAR100, the pretraining process spans 200 epochs with a batch size of 128. The learning rate begins at 0.1 and is decreased by a factor of 5 every 60 steps, with momentum set at 0.9 and weight decay set to \(5 \times 10^{-4}\) in the SGD optimizer. Additionally, for both datasets, in the fine-level one-shot transfer learning, we train for 100 epochs starting at a learning rate of 0.1, which is reduced by a factor of 10 every 60 steps. CPCC is applied to coarse, fine level and $\lambda$ in Eq.\ref{eq:cross-entropy} is set to $1$. The ratio between the distance between two fine classes sharing the same coarse parent and those sharing different parents is 1:2.

\subsection{BREEDS} We train a ResNet18 model from scratch originally designed for ImageNet, where the first convolution layer uses a 7x7 kernel. Adhering to the BREEDS training setup, we use a batch size of 128, an initial learning rate of 0.1, and SGD optimization with a weight decay of \(10^{-4}\). For the \textsl{ENTITY13} and \textsl{ENTITY30} datasets, the model is trained for 300 epochs, reducing the learning rate by a factor of 10 every 100 steps, while for \textsl{LIVING17} and \textsl{NONLIVING26}, we extend training to 450 epochs, with the learning rate divided by 10 every 150 steps. Data augmentation for the training set includes a RandomResizedCrop to 224x224, horizontal flipping with a 50\% probability, and color jittering that adjusts brightness, contrast, and saturation within a range of 0.9 to 1.1. The test set, on the other hand, is resized to 256x256 and then center-cropped to 224x224. 

Since in our experiments with BREEDS, the source and target split share the same set of fine labels, to calculate fine target accuracy, it is unnecessary to use one-shot transfer learning. We directly use the source set pretrained models for target fine accuracy evaluation.

To apply CPCC methods, we backtrack two levels up to find the mid and coarse labels from the original class labels in BREEDS. CPCC is applied to mid and coarse level and $\lambda$ in Eq.\ref{eq:cross-entropy} is set to $1$. The ratio between the distance between two fine classes sharing the same coarse parent and those sharing different parents is 1:2.

\subsection{INAT} We fine-tune an ImageNet-pretrained ResNet18 model where checkpoint can be downloaded in Pytorch (\texttt{torchvision.models.resnet18(weights='IMAGENET1K\_V1')}, \url{https://pytorch.org/vision/main/models/generated/torchvision.models.resnet18.html}). The images are preprocessed by normalizing them with mean=[0.466, 0.471, 0.380] and standard deviation=[0.195, 0.194, 0.192]. We augment the training data by cropping a random portion (scale=(0.08, 1.0), ratio=(0.75, 1.33)) of image and resize it to 224x224, and apply a 50\% chance of horizontal flipping.  We use a batch size of 1024. The learning rate begins at 0.1 and is decreased by a factor of 5 every 60 steps, with momentum set at 0.9 and weight decay set to \(5 \times 10^{-4}\) in the SGD optimizer. In the downstream one-shot fine-tuning task, we train for 60 epochs starting at a learning rate of 0.05, which is reduced by a factor of 10 every 15 steps. CPCC is applied to coarse, fine level and $\lambda$ in Eq.\ref{eq:cross-entropy} is set to $0.1$. The ratio between the distance between two fine classes sharing the same coarse parent and those sharing different parents is 1:3.

\begin{figure}[!ht]
    \centering
    \includegraphics[width=\linewidth]{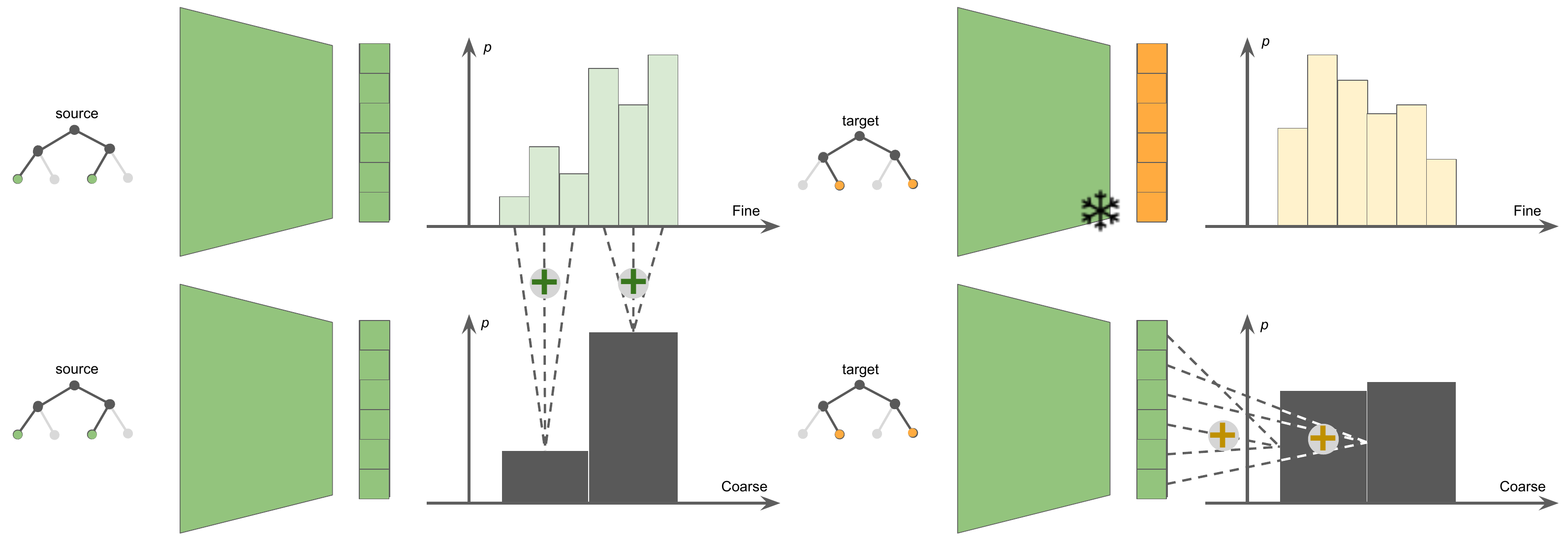}
    \caption{Visualization of the evaluation of hierarchical classification.}
    \label{fig:hierarchical_classification}
\end{figure}

\section{Hierarchical Baseline Objectives}
\label{sec:objective}

In this section, we introduce the details of hierarchical objectives we used in Fig.\ref{fig:hierarchical-baselines}.

\paragraph{Flat} is the naive cross entropy loss only on the fine level, so the expression is:
\begin{equation}
    \label{eq:naive}
        \mathcal{L}_\textnormal{Flat}(x,y_{\textnormal{fine}}) = \sum\limits_{(x,y)\in\mathcal{D}} \ell_\textnormal{Flat}(y_{\textnormal{fine}},h(x)).
\end{equation}
We adopt all hierarchical baselines from the comparative study in \citet{zeng2022learning}, including:

\textbf{Multi-Task Learning} (MTL) \citep{caruanaMTL}, where the backbone model is trained with two linear heads—one for coarse classes and one for fine classes—each with a cross-entropy loss. The total loss, $\mathcal{L}_\textnormal{MTL}$, is the sum of these two cross-entropy terms, with a scaling coefficient of 1 for both. \textbf{Curriculum Learning} (Curr) \citep{bengio2009curriculum} involves two training phases: first, training a coarse head for 50 epochs, followed by training the full network with a fine head for 200 epochs, starting from the coarse-trained features, assuming that coarse classification is an easier task. \textbf{Sum Loss} \citep{LPL-Active} uses only one fine classifier head, where coarse predictions are obtained by summing the fine class probabilities for the same coarse parent and applying a second cross-entropy loss. \textbf{Hierarchical Cross Entropy} (HXE) \citep{Bertinetto_Mueller_Tertikas_Samangooei_Lord_2020} is the sum of weighted cross-entropy losses ($\alpha = 0.4$), where the hierarchical structure is incorporated into the log-probabilities. The same paper also introduces \textbf{Soft Label} (Soft), which uses a single softmax loss based on the height of the least common ancestor between label pairs, with a hyperparameter $\beta = 10$. \textbf{Quadruplet Loss} (Quad) \citep{Zhang_Zhou_Lin_Zhang_2016} extends triplet loss by defining two types of positive labels based on the hierarchy. For detailed mathematical expressions of these objectives, refer to App.C in \citet{zeng2022learning}.

We also include several recently published hierarchical losses:

\paragraph{Soft Hierarchical Consistency} (SHC) \citep{Garg_Sani_Anand_2022} is the combination of MTL and Sum Loss. For a two level hierarchy, they jointly train a two-headed network. Define the network as $h: \mathcal{X} \rightarrow \Delta_{k_1} \times \Delta_{k_2}$ with a shared feature encoder $f: \mathcal{X} \rightarrow \mathcal{Z}$ and two classifiers $g_{\textnormal{coarse}} : \mathcal{Z} \rightarrow \Delta_{k_1}, g_{\textnormal{fine}} : \mathcal{Z} \rightarrow \Delta_{k_2}$, with $k_1$ as the number of coarse classes and $k_2$ the number of fine classes. The ground truth coarse probability $p_\text{coarse} = g_\textnormal{coarse}(f(\cdot))$ is the output of coarse head after softmax normalization, and $\hat{p}_\text{coarse}$ is derived from the output of fine head. Let $\mathbf{W}$ be a $k_1$ by $k_2$ matrix representing the relationships in the label tree: if a fine class $i$ belongs to a coarse class $j$, then $\mathbf{W}_{ji}$ is $1$, else the entry is set to $0$. Then $\hat{p}_\text{coarse} = \mathbf{W}g_\text{fine}(f(\cdot))$, so the probability of belonging to a given coarse class is the sum of the probabilities of its fine class descendants in the tree. Then, the SHC loss is defined as the Jensen-Shannon Divergence between these two probability distributions:
\begin{equation}
    \mathcal{L}_\text{SHC} (x, y_\textnormal{coarse}, y_\textnormal{fine}) = \textnormal{JSD}(p_\text{coarse} (x, y_\textnormal{coarse}) || \hat{p}_\text{coarse} (x, y_\textnormal{fine})).
\end{equation}
In \citet{Garg_Sani_Anand_2022}, $\mathcal{L}_\text{SHC}$ is used in the combination with Flat cross entropy loss. Therefore, we treat it as a counterpart of CPCC losses and use $\mathcal{L}_\textnormal{Flat} + \lambda \cdot \mathcal{L}_\textnormal{SHC}$ as the training loss, with $\lambda = 1$.

\paragraph{Geometric Consistency} (GC) \citep{Garg_Sani_Anand_2022} is also computed under the MTL two-head pipeline. The coarse classifier can be decomposed to a linear layer and softmax function, i.e., let $\mathbf{W}^c \in \mathbb{R}^{d\times k_1}$, $g_\text{coarse}(\cdot) = \textnormal{softmax}(\mathbf{W}^c(\cdot))$, and we define $\mathbf{W}^f$ similarly for the linear fine layer.
\begin{equation}
    \mathcal{L}_\textnormal{GC}(x, y_\textnormal{coarse}, y_\textnormal{fine}) = \sum\limits_{i=1}^{k_1} 1 - \cos{(\mathbf{W}^c_{i}, \hat{\mathbf{W}}^c_{i})}. 
\end{equation}
As in SHC, GC aims to force the similarity between the predicted coarse weight and the true coarse weight. Here, $\mathbf{W}^c_{i}$ is part of the linear classifier corresponds to each coarse class, and the predicted coarse weight $\hat{\mathbf{W}}^c_{i} = \sum_{j \in \textnormal{children}(i)}\mathbf{W}^f_j / \norm{\sum_{j \in \textnormal{children}(i)}\mathbf{W}^f_j}$ is the normalized sum of a part of fine classifiers based on the hierarchical relationship. We combine $\mathcal{L}_\text{GC}$ with Flat cross entropy during training and set $\lambda$ in Eq.\ref{eq:cross-entropy} as 1.

\paragraph{RankLoss} (Rank) \citep{Nolasco_Stowell_2022} proposes a ranking loss to make Euclidean distance of feature $\rho$ close to the target $d$-dim sequence determined by the rank of tree metric: 
\begin{equation}
    \mathcal{L}_\text{rank}(x, y_\text{coarse}, y_\text{fine}) = \frac{1}{p}\sum_p(1 - I_p)(\rho - \text{TargetRank}(\rho)).
\end{equation}
In this equation above, $I_p$ is an indicator variable which shows whether the rank of pairwise $\ell_2$ feature distance is the same as the rank of pairwise tree metric distance, and TargetRank is a function reflecting hierarchical similarity. Since this loss does not focus on discriminative tasks like classification, we combine it with Flat cross entropy, setting $\lambda = 1$ in Eq.\ref{eq:cross-entropy} for downstream tasks evaluation.

\section{Additional Experiments}
\label{sec:additonal_exp}
\subsection{Effect of Tree Depth}
\label{sec:tree-depth}
\begin{figure}[!ht]
    \centering
    \includegraphics[width=0.95\linewidth]{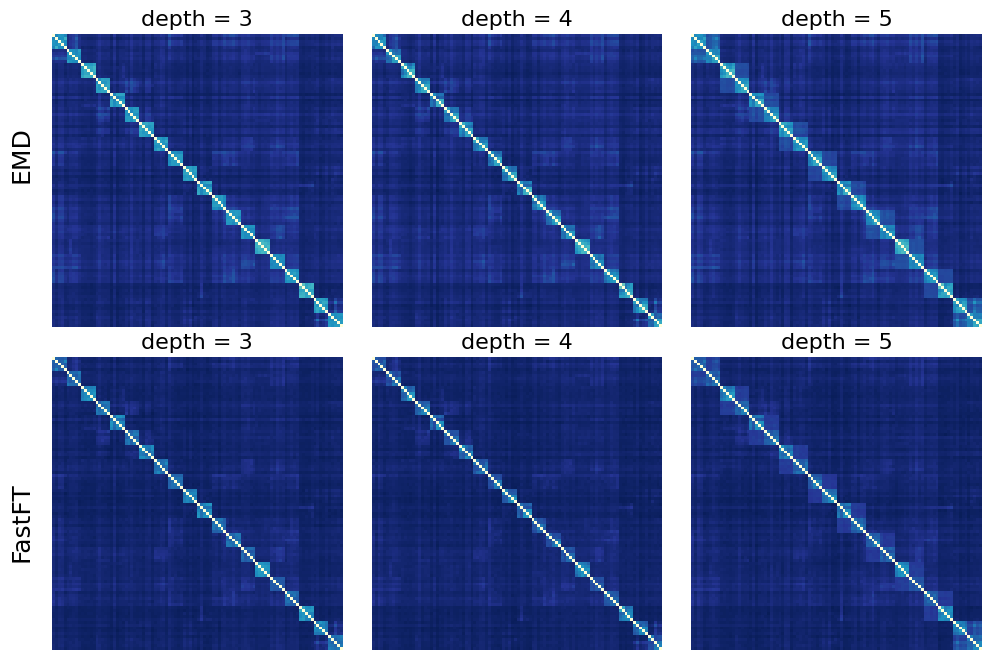}
    \caption{CIFAR100 class embedding pairwise distance matrix for different types of label tree with varying tree depth.}
    \label{fig:tree-depth-distM}
\end{figure}

Although in the main body we focus on a depth-3 label hierarchy consisting of a root node, a coarse level, and a fine level, we also test whether our method can be applied to more complex trees. In CIFAR100, beyond the standard \textcolor{teal}{fine}-\textcolor{blue}{coarse} levels, we introduce a \textcolor{red}{mid} level by grouping 5 fine classes from the same coarse class into two smaller groups (2 + 3). Additionally, we create a \textcolor{magenta}{coarser} level by grouping two alphabetically neighboring coarse classes into a single coarser label. For example, for 10 fine labels, the hierarchical grouping is highlighted by the following color scheme: \textcolor{magenta}{\{}\textcolor{blue}{[}\textcolor{red}{(}\textcolor{teal}{orchids, poppies}\textcolor{red}{)} \textcolor{red}{(}\textcolor{teal}{roses, sunflowers, tulips}\textcolor{red}{)}\textcolor{blue}{]} \textcolor{blue}{[}\textcolor{red}{(}\textcolor{teal}{bottles, bowls}\textcolor{red}{)} \textcolor{red}{(}\textcolor{teal}{cans, cups, plates}\textcolor{red}{)}\textcolor{blue}{]}\textcolor{magenta}{\}}.

In Fig.\ref{fig:tree-depth-distM}, we aligned the axes to sort the fine labels based on the hierarchical structure for embeddings learned using trees of varying depth. We then computed the fine-to-fine class distances using either the EMD or FastFT metric. On the left of Fig.\ref{fig:tree-depth-distM}, we used the standard fine-coarse CIFAR100 label tree, where a clear pattern of twenty 5x5 coarse blocks appears on the diagonal for both optimal transport methods. In the middle of Fig.\ref{fig:tree-depth-distM}, we applied CPCC to the fine-mid-coarse hierarchy, and within each 5x5 coarse block, we observed the expected fine-grained 2x2 + 3x3 patterns. Similarly, on the right of Fig.\ref{fig:tree-depth-distM}, when we included the coarser grouping, we could observe the merging of neighboring 5x5 coarse blocks.

\begin{table}[!ht]
\centering
\caption{EMD and FastFT performance for tree with different depths on the full CIFAR100 dataset. The batch size is set to $128$.}
\label{tab:tree-depth-tab}
\begin{tabular}{@{}ccccc@{}}
\toprule
\textbf{Objective}      & \textbf{Setup} & \textbf{CPCC} & \textbf{FineAcc} & \textbf{CoarseAcc} \\ \midrule
\multirow{3}{*}{EMD}    & depth = 3      & 88.80         & 71.91            & 82.21              \\
                        & depth = 4      & 85.09         & 72.59            & 82.93              \\
                        & depth = 5      & 88.64         & 72.20            & 82.50              \\ \midrule
\multirow{3}{*}{FastFT} & depth = 3      & 90.75         & 72.71            & 83.31              \\
                        & depth = 4      & 87.86         & 72.45            & 83.05              \\
                        & depth = 5      & 90.27         & 72.56            & 83.04              \\ \bottomrule
\end{tabular}
\end{table}

We also include metrics for the preciseness of hierarchical information and hierarchical classification in Tab.\ref{tab:tree-depth-tab}. As observed in Fig.\ref{fig:tree-depth-distM}, the tree information is well-learned, leading us to expect high CPCC scores across all settings in Tab.\ref{tab:tree-depth-tab}. For the hierarchical classification metrics, within the same OT method, the difference between FineAcc and CoarseAcc is small. It is worth noting that, because we train on the full CIFAR100 dataset for visualization purposes, the source-target hierarchical split shown in Fig.\ref{fig:data_split} is not applied here. Both FineAcc and CoarseAcc are reported on the original CIFAR100 test set. This suggests that even when the tree used is not identical to the label hierarchy associated with the dataset, an approximate tree with a similar structure is sufficient for classification tasks.

In summary, EMD-CPCC is a flexible structured representation learning method that adapts well to complex hierarchies, while FastFT-CPCC serves as a good approximation to EMD-CPCC. This observation is consistent with the results in Tab.\ref{tab:testcpcc}, where optimal transport methods achieve higher Test CPCC values.

\subsection{Effect of Tree Weights}
\label{sec:tree-weight}
\begin{figure}[!ht]
    \centering
    \includegraphics[width=0.95\linewidth]{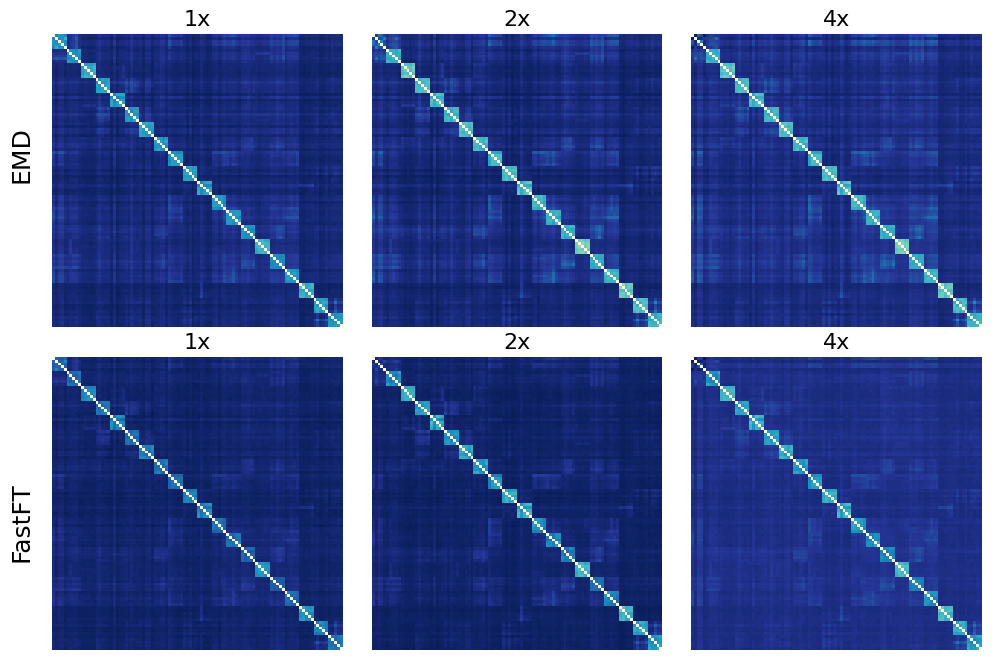}
    \caption{CIFAR100 class embedding pairwise distance matrix for different types of label tree with varying tree weights.}
    \label{fig:tree-weight-distM}
\end{figure}

In this section, we demonstrate that all CPCC methods, including OT-CPCC, are influenced by input tree weights. In Fig.\ref{fig:tree-weight-distM}, we begin with the original unweighted CIFAR100 tree (where all edges have a default weight of 1) and modify the edge weights of a subtree representing the coarse label \texttt{aquatic\_mammals}, which includes five fine labels: \texttt{beaver}, \texttt{dolphin}, \texttt{otter}, \texttt{seal}, and \texttt{whale}. We visualize the learned representation through a distance matrix, as described in Sec.\ref{sec:tree-depth}. From left to right in Fig.\ref{fig:tree-weight-distM}, the edge weights of this subtree are set to 1, 2, and 4, respectively. Consequently, we observe a color shift in the top-left coarse block of the matrix, with the block gradually darkening as the subtree weights increase. This demonstrates the flexibility of OT-CPCC methods, which can successfully embed weighted trees, and highlights why CPCC is a sensitive and effective metric for evaluating the interpretability and quality of the hierarchical information embedded in the learned representation, as discussed in Sec.\ref{sec:interpretability}.

\begin{table}[!ht]
\centering
\caption{EMD and FastFT performance for tree with different edge weights within \texttt{aquatic\_mammals} classes on the full CIFAR100 dataset. The batch size is set to $1024$.}
\label{tab:tree-weight-tab}
\begin{tabular}{@{}ccccc@{}}
\toprule
\textbf{Objective}      & \textbf{Setup} & \textbf{CPCC} & \textbf{FineAcc} & \textbf{CoarseAcc} \\ \midrule
\multirow{3}{*}{EMD}    & weight 1x      & 89.13             & 75.76                & 85.50                  \\
                        & weight 2x      & 88.85         & 75.57            & 85.22              \\
                        & weight 4x      & 79.86         & 75.56            & 85.35              \\ \midrule
\multirow{3}{*}{FastFT} & weight 1x      & 92.12             & 75.43                & 85.43                  \\
                        & weight 2x      & 92.57         & 75.38            & 85.10              \\
                        & weight 4x      & 84.23         & 75.57            & 84.98              \\ \bottomrule
\end{tabular}
\end{table}

We use the same experimental settings as those described in Sec.\ref{sec:tree-depth}. While the classification metrics remain comparable, the most significant trend emerges in the CPCC metric. As shown in Fig.\ref{fig:tree-weight-distM}, we observe the expected pattern in the top-left block, which changes as the edge weights are multiplied. However, as the injected hierarchy deviates further from the given label hierarchy, it becomes increasingly unnatural. The increased weights make the tree embedding more challenging, resulting in a drop in CPCC values for larger weights.

\subsection{Effect of Flow Weight} In Eq.\ref{eq:emd}, the flow weight vectors $a$ and $b$ are set to uniform by default in all other experiments. However, in practice, these vectors can be customized by the user if prior information about the data distribution is available, which may significantly influence the learned embeddings. On CIFAR100, we tested three different flow weight settings:

\begin{itemize}
    \item \textbf{Uniform} (unif): default setting where we assume each data point is uniformally sampled from the class distribution, so $a = (1/m, \dots, 1/m)$ and $b = (1/n, \dots, 1/n)$.
    \item \textbf{Distance} (dist): is an adaptive weighting method where during training, we compute the Euclidean centroid of each class and compute the distance of each sample to the corresponding class centroid. Then we can apply softmax function to normalize the distances into probabilities. Mathematically, if two class representations are $Z, Z'$, then for each entry of flow weight vector, $a_i = \frac{\exp(Z_i - \mu_z)}{\sum_{j=1}^m \exp(Z_j - \mu_z)}$, $b_i = \frac{\exp(Z_i' - \mu_{z'})}{\sum_{j=1}^n \exp(Z_j' - \mu_{z'})}$.
    \item \textbf{Inverse-Distance} (inv): is similar to dist, yet the only difference is we set the weight to be inversely scaled by the distance to the centroid, so the expressions of the flow weights are $a_i = \frac{\exp(\mu_z - Z_i)}{\sum_{j=1}^m \exp(\mu_z - Z_j)}$, $b_i = \frac{\exp(\mu_{z'} - Z_i')}{\sum_{j=1}^n \exp(\mu_{z'} - Z_j')}$.
\end{itemize}

From Tab.\ref{tab:comparison_colored_normalized}, we observe two key findings: first, tuning the flow weights can lead to improvements in out-of-distribution generalization metrics for the target set, but this comes at the expense of in-distribution source metrics. Second, the most significant differences are seen in the CPCC values, where it is nearly impossible to recover hierarchical information on the test set when using the Distance flow weight for both EMD and FastFT. The low interpretability scores suggest that not all flow weight settings are suitable for hierarchy learning.

\begin{table}[!ht]
\centering
\caption{Comparison of FastFT and EMD methods with different optimal transport flow weighting schemes across fine and coarse metrics, with cell colors representing the magnitude of values.}
\begin{tabular}{cccccccccc}
\toprule
\multirow{2}{*}{\textbf{Objective}} & \multirow{2}{*}{\textbf{Weight}} & \multirow{2}{*}{\textbf{CPCC}} & \multicolumn{3}{c}{\textbf{Fine Metrics}} & \multicolumn{4}{c}{\textbf{Coarse Metrics}} \\
\cmidrule(lr){4-6} \cmidrule(lr){7-10}
 &  &  & \textbf{sAcc} & \textbf{sMAP} & \textbf{tAcc} & \textbf{sAcc} & \textbf{sMAP} & \textbf{tAcc} & \textbf{tMAP} \\
\midrule
\multirow{3}{*}{FastFT} 
 & unif & 93.85 & \cellcolor{tgreen!100}{81.32} & \cellcolor{tgreen!100}{78.88} & \cellcolor{tgreen!0}{23.15} & \cellcolor{tgreen!100}{86.96} & \cellcolor{tgreen!60}{91.71} & \cellcolor{tgreen!51}{44.29} & \cellcolor{tgreen!4}{42.20} \\
\cmidrule(lr){2-10}
 & dist & 9.30  & \cellcolor{tgreen!0}{78.90}  & \cellcolor{tgreen!0}{67.33}  & \cellcolor{tgreen!58}{25.28} & \cellcolor{tgreen!0}{85.18}  & \cellcolor{tgreen!31}{91.47} & \cellcolor{tgreen!100}{45.00} & \cellcolor{tgreen!100}{43.78} \\
 & inv  & 66.52 & \cellcolor{tgreen!34}{79.73} & \cellcolor{tgreen!8}{68.31}  & \cellcolor{tgreen!63}{25.45} & \cellcolor{tgreen!35}{85.81} & \cellcolor{tgreen!100}{92.03} & \cellcolor{tgreen!10}{43.70} & \cellcolor{tgreen!95}{43.70} \\
\midrule
\multirow{3}{*}{EMD} 
 & unif & 89.60 & \cellcolor{tgreen!86}{80.99} & \cellcolor{tgreen!96}{78.51} & \cellcolor{tgreen!10}{23.53} & \cellcolor{tgreen!98}{86.93} & \cellcolor{tgreen!74}{91.82} & \cellcolor{tgreen!33}{44.03} & \cellcolor{tgreen!12}{42.32} \\
\cmidrule(lr){2-10}
 & dist & 19.87 & \cellcolor{tgreen!13}{79.23} & \cellcolor{tgreen!56}{73.79} & \cellcolor{tgreen!79}{26.05} & \cellcolor{tgreen!41}{85.91} & \cellcolor{tgreen!10}{91.30} & \cellcolor{tgreen!0}{43.55}  & \cellcolor{tgreen!24}{42.53} \\
 & inv  & 98.13 & \cellcolor{tgreen!25}{79.51} & \cellcolor{tgreen!74}{75.84} & \cellcolor{tgreen!100}{26.80} & \cellcolor{tgreen!48}{86.05} & \cellcolor{tgreen!0}{91.22}  & \cellcolor{tgreen!41}{44.15} & \cellcolor{tgreen!0}{42.13}  \\
\bottomrule
\end{tabular}
\label{tab:comparison_colored_normalized}
\end{table}

\begin{table}[!ht]
    \centering
    \caption{Detailed generalization performance on fine level. Each value is the mean (std) of three seeds of experiments. Flat directly optimizes on the fine level classes, so we treat it as the upper bound of fine metrics and colored in grey. The bold values have the largest mean for a column metric within a dataset, ignoring the upper bound.}
    \label{tab:fine-detail}
\scalebox{0.78}{
    \begin{tabular}{@{}ccccccccc@{}}
\toprule
\textbf{Dataset}                                                                      & \textbf{Objective} & \textbf{sAcc} & \textbf{tAcc} & \textbf{sMAP} & \textbf{Dataset} & \textbf{sAcc} & \textbf{tAcc} & \textbf{sMAP}  \\ \midrule
\multirow{6}{*}{\textit{CIFAR10}}                                                     & \textcolor{mygray}{Flat}                 & \textcolor{mygray}{96.92 (0.14)}       & \textcolor{mygray}{54.23 (6.92)}       & \textcolor{mygray}{99.43 (0.02)} &  \multirow{6}{*}{\textit{INAT}}  & \textcolor{mygray}{88.70 (0.01)}       & \textcolor{mygray}{26.77 (0.04)}       & \textcolor{mygray}{63.97 (0.04)}     \\ 
 & $\ell_2$                 & 96.96 (0.16)       & 55.71 (7.38)       & 99.22 (0.06) &    & 88.62 (0.01)       & 26.78 (0.04)       & 56.10 (0.05)     \\ \cmidrule(lr){2-5} \cmidrule(lr){7-9}
 & FastFT   & 96.90 (0.13)          & 55.99 (8.19)          & 99.24 (0.02)   & & 88.49 (0.01) & \textbf{27.10 (0.04)}          & 56.21 (0.05)            \\
 & EMD      & \textbf{97.05 (0.18)} & 56.12 (7.63)          & 99.24 (0.02)  & & \textbf{88.68 (0.00)}         & 26.78 (0.04)          & \textbf{56.83 (0.00)}         \\
 & Sinkhorn & 96.95 (0.08)          & 54.89 (8.59)          & 99.27 (0.03)  & & 88.08 (0.00)          & 26.77 (0.04)       & 51.56 (0.04)         \\
 & SWD      & 96.96 (0.05)          & \textbf{59.21 (4.42)} & \textbf{99.45 (0.03)} &       & 88.46 (0.00)          & 26.78 (0.04) & 54.19 (0.03) \\ \midrule
 \multirow{6}{*}{\textit{CIFAR100}}  & \textcolor{mygray}{Flat}                 & \textcolor{mygray}{80.53 (0.31)}       & \textcolor{mygray}{28.44 (1.89)}       & \textcolor{mygray}{86.47 (0.21)}   &  \multirow{6}{*}{\textit{BREEDS}} & \textcolor{mygray}{82.08 (0.26)}      & \textcolor{mygray}{45.19 (0.38)}       & \textcolor{mygray}{73.74 (0.46)}           \\
  & $\ell_2$                 & 80.98 (0.17)       & 23.76 (1.01)       & 77.52 (0.12)   &   & 82.66 (0.41)      & 45.95 (0.40)       & 62.86 (0.38)           \\ \cmidrule(lr){2-5} \cmidrule(lr){7-9}
 & FastFT   & 81.09 (0.18)          & 24.71 (0.90)          & 78.72 (0.19) & &  82.58 (0.34) & 45.75 (0.45)         & 63.95 (0.36)         \\
 & EMD      & \textbf{81.32 (0.19)} & 23.15 (0.40)          & 78.88 (0.18)      & & - & - & -    \\
 & Sinkhorn & 80.99 (0.12)          & 23.53 (0.48)          & 78.51 (0.20) & & \textbf{82.99 (0.31)}          & \textbf{46.87 (0.40)}      & 61.54 (0.55)         \\
 & SWD      & 80.50 (0.33)          & \textbf{26.18 (1.04)} & \textbf{86.82 (0.22)} &  & 82.90 (0.30)          & 46.33 (0.16) & \textbf{76.24 (0.39)} \\ \bottomrule
\end{tabular}
}
\end{table}

\begin{table}[!ht]
    \centering
    \caption{Detailed generalization performance on coarse level. Each value is the mean (std) of three seeds of experiments. $\ell_2$ directly optimizes on the coarse level classes, so we treat it as the upper bound of coarse metrics and colored in grey.}
    \label{tab:coarse-detail}
\scalebox{0.62}{
     \begin{tabular}{@{}ccccccccccc@{}}
\toprule
\textbf{Dataset} &
  \textbf{Objective} &
  \textbf{sAcc} &
  \textbf{tAcc} &
  \textbf{sMAP} &
  \textbf{tMAP} & \textbf{Dataset} & \textbf{sAcc} &
  \textbf{tAcc} &
  \textbf{sMAP} &
  \textbf{tMAP} \\ \midrule
\multirow{6}{*}{\textit{CIFAR10}} & \textcolor{mygray}{$\ell_2$} &
  \textcolor{mygray}{99.60 (0.01)} &
  \textcolor{mygray}{87.51 (0.97)} &
  \textcolor{mygray}{99.88 (0.03)} &
  \textcolor{mygray}{92.84 (0.63)}  & \multirow{6}{*}{\textit{INAT}} & \textcolor{mygray}{94.52 (0.00)} & 
 \textcolor{mygray}{40.86 (0.01)} & \textcolor{mygray}{74.34 (0.05)} & \textcolor{mygray}{37.35 (0.01)}        \\
 & Flat     & 99.58 (0.07)          & 87.30 (0.59)          & 99.22 (0.05)          & 89.66 (0.38) &  & 94.63 (0.00) & 38.81 (0.01) & 70.41 (0.04) & 34.00 (0.13)        \\ \cmidrule(lr){2-6} \cmidrule(lr){8-11}
                                  & FastFT   & 99.61 (0.06)          & \textbf{87.79 (0.41)} & \textbf{99.91 (0.03)} & 93.04 (0.57)  & & \textbf{94.66 (0.00)} & 39.43 (0.02) & 72.90 (0.05) & 35.63 (0.01)        \\
                                  & EMD      & \textbf{99.61 (0.03)} & 87.45 (0.60)          & 99.88 (0.03)         & \textbf{93.07 (0.63)} & & 94.64 (0.00) & \textbf{41.01 (0.01)} & 73.87 (0.01) & 35.58 (0.01) \\
                                  & Sinkhorn & 99.61 (0.05)          & 87.41 (0.53)          & 99.87 (0.03)          & 92.60 (0.13)    & & 94.30 (0.00) & 36.94 (0.02) & 68.75 (0.02) & 34.92 (0.00)      \\
                                  & SWD      & 99.56 (0.04)          & 87.63 (0.62)          & 99.36 (0.09)          & 90.12 (0.43)  & & 94.52 (0.00) & 39.38 (0.01) & \textbf{75.13 (0.03)} & \textbf{38.11 (0.01)}        \\ \midrule
\multirow{6}{*}{\textit{CIFAR100}} &
  \textcolor{mygray}{$\ell_2$} &
  \textcolor{mygray}{87.00 (0.13)} &
  \textcolor{mygray}{44.55 (0.39)} &
  \textcolor{mygray}{92.13 (0.23)} &
  \textcolor{mygray}{43.03 (0.55)} & \multirow{6}{*}{\textit{BREEDS}} & \textcolor{mygray}{94.24 (0.30)} & \textcolor{mygray}{83.13 (0.23)} & \textcolor{mygray}{95.00 (0.25)} & \textcolor{mygray}{81.73 (0.32)} \\
 &
  Flat &
  86.17 (0.26) &
  43.09 (0.55) &
  82.07 (0.34) &
  42.09 (0.26) &  & 93.85 (0.20) & 82.43 (0.27) & 74.47 (0.52) & 66.50 (0.39) \\ \cmidrule(l){2-6} \cmidrule(lr){8-11}
                                  & FastFT   & \textbf{86.96 (0.13)} & \textbf{44.29 (0.33)} & 91.71 (0.11)          & 42.20 (0.44) & & 94.25 (0.27)  & 82.89 (0.31) & \textbf{92.84 (0.27)} & \textbf{79.69 (0.34)}       \\
                                  & EMD      & 86.93 (0.28)          & 44.03 (0.31)          & \textbf{91.82 (0.14)} & 42.32 (0.31)  & & - & - & - & -        \\
                                  & Sinkhorn & 86.93 (0.10)          & 43.99 (0.37)          & 91.61 (0.31)          & \textbf{42.45 (0.09)} & & \textbf{94.42 (0.21)} & \textbf{83.51 (0.30)} & 91.84 (0.42) & 79.33 (0.38) \\
                                  & SWD      & 86.42 (0.26)          & 43.35 (0.71)          & 87.24 (0.29)          & 42.39 (0.30)  & & 94.41 (0.25) & 82.96 (0.17) & 83.58 (0.60) & 72.38 (0.43)        \\ \bottomrule
\end{tabular}
}
\end{table}

\begin{table}[!ht]
    \centering
\caption{Fine level generalization performance for each BREEDS setting.}
    \begin{tabular}{@{}ccccc@{}}
\toprule
\textbf{Dataset}                      & \textbf{Objective} & \textbf{sAcc}    & \textbf{tAcc} & \textbf{sMAP} \\ \midrule
\multirow{5}{*}{\textit{L17}} & \textcolor{mygray}{Flat}       & \textcolor{mygray}{85.15 (0.30)}          & \textcolor{mygray}{46.31 (0.31)}          & \textcolor{mygray}{85.94 (0.18)} \\
 & $\ell_2$       & 85.74 (0.69)          & 47.50 (0.73)        & 74.07 (0.61)          \\ \cmidrule(l){2-5} 
                                   & FastFT   & 85.21 (0.38)           & 46.47 (0.75)          & 74.46 (0.16)          \\
                                   & Sinkhorn & 85.80 (0.49)          & \textbf{48.37 (0.69)}          & 70.50 (0.53)          \\
                                   & SWD      & \textbf{85.88 (0.26)} & 47.50 (0.09) & \textbf{86.65 (0.51)} \\ \midrule
\multirow{5}{*}{\textit{N26}} & \textcolor{mygray}{Flat}                 & \textcolor{mygray}{79.76 (0.27)}          & \textcolor{mygray}{36.06 (0.37)}       & \textcolor{mygray}{70.75 (1.14)}       \\
 & $\ell_2$                 & 80.58 (0.54)          & 37.08 (0.26)       & 55.51 (0.50)       \\ \cmidrule(l){2-5} 
                                   & FastFT   & 80.78 (0.52)          & 36.69 (0.19)          & 58.87 (0.42)          \\
                                   & Sinkhorn & 81.74 (0.26)          & \textbf{38.16 (0.14)}          & 57.54 (0.92)          \\
                                   & SWD      & \textbf{81.79 (0.25)} & 37.71 (0.09) & \textbf{76.03 (0.05)} \\ \midrule
\multirow{5}{*}{\textit{E13}} & \textcolor{mygray}{Flat}       & \textcolor{mygray}{84.37 (0.24)}          & \textcolor{mygray}{56.04 (0.27)}          & \textcolor{mygray}{74.04 (0.34)}          \\
 & $\ell_2$       & 84.77 (0.18)          & 55.69 (0.37)          & 66.92 (0.16)          \\ \cmidrule(l){2-5} 
                                   & FastFT   & \textbf{85.04 (0.15)} & 56.66 (0.44)          & 67.96 (0.64)          \\
                                   & Sinkhorn & 85.00 (0.15)          & \textbf{57.14 (0.40)}          & 64.47 (0.20) \\
                                   & SWD      & 84.49 (0.18)          & 56.82 (0.12) & \textbf{75.39 (0.54)} \\ \midrule
\multirow{5}{*}{\textit{E30}}    & \textcolor{mygray}{Flat}                & \textcolor{mygray}{79.05 (0.23)} & \textcolor{mygray}{42.35 (0.57)}       & \textcolor{mygray}{64.23 (0.17)}       \\
   & $\ell_2$                 & \textbf{79.54 (0.21)} & 43.51 (0.23)       & 54.92 (0.23)       \\ \cmidrule(l){2-5}
                                   & FastFT   & 79.29 (0.32)          & 43.16 (0.42)          & 54.49 (0.23)          \\
                                   & Sinkhorn & 79.40 (0.33)          & \textbf{43.82 (0.38)}         & 53.66 (0.53)          \\
                                   & SWD      & 79.42 (0.52)          & 43.28 (0.34) & \textbf{66.87 (0.45)} \\ \bottomrule
\end{tabular}
    \label{tab:breeds-fine}
\end{table}

\begin{table}[!ht]
    \centering
    \caption{Coarse level generalization performance for each BREEDS setting.}
    \begin{tabular}{@{}cccccc@{}}
\toprule
\textbf{Dataset}                      & \textbf{Objective} & \textbf{sAcc} & \textbf{tAcc} & \textbf{sMAP} & \textbf{tMAP} \\ \midrule
\multirow{5}{*}{\textit{L17}} & \textcolor{mygray}{$\ell_2$}     & \textcolor{mygray}{93.13 (0.62)}          & \textcolor{mygray}{79.03 (0.29)}          & \textcolor{mygray}{93.93 (0.54)}          & \textcolor{mygray}{69.13 (0.42)}          \\
 & Flat     & 92.68 (0.39)          & 78.35 (0.37)          & 76.66 (0.26)          & 56.13 (0.36)          \\ \cmidrule(l){2-6} 
                                   & FastFT   & \textbf{93.33 (0.26)} & 78.37 (0.47)          & \textbf{92.11 (0.23)} & \textbf{65.44 (0.24)} \\
                                   & Sinkhorn & 93.11 (0.20)          & \textbf{79.50 (0.43)} & 89.14 (0.36)          & 63.88 (0.32)          \\
                                   & SWD      & 92.96 (0.41)          & 78.66 (0.26)          & 83.92 (0.38)          & 59.66 (0.21)          \\ \midrule
\multirow{5}{*}{\textit{N26}} & \textcolor{mygray}{$\ell_2$}               & \textcolor{mygray}{88.94 (0.42)}         & \textcolor{mygray}{70.24 (0.50)}         & \textcolor{mygray}{89.59 (0.26)}         & \textcolor{mygray}{68.40 (0.52)}         \\
 & Flat               & 88.44 (0.12)         & 68.98 (0.41)         & 61.20 (1.01)         & 50.86 (0.16)         \\ \cmidrule(l){2-6} 
                                   & FastFT   & 88.83 (0.75)          & 69.43 (0.42)          & \textbf{83.89 (0.40)} & 65.22 (0.75)          \\
                                   & Sinkhorn & \textbf{89.94 (0.35)} & \textbf{70.70 (0.41)} & 83.14 (0.78)          & \textbf{66.07 (0.85)} \\
                                   & SWD      & 89.88 (0.34)          & 69.93 (0.04)          & 74.06 (0.57)          & 58.50 (0.48)          \\ \midrule
\multirow{5}{*}{\textit{E13}} & \textcolor{mygray}{$\ell_2$}    & \textcolor{mygray}{97.80 (0.04)}          & \textcolor{mygray}{91.51 (0.01)}          & \textcolor{mygray}{98.25 (0.04)}          & \textcolor{mygray}{94.15 (0.09)}          \\
 & Flat     & 97.53 (0.12)          & 91.30 (0.16)          & 83.89 (0.13)          & 82.37 (0.31)          \\ \cmidrule(l){2-6} 
                                   & FastFT   & \textbf{97.80 (0.01)} & 91.78 (0.13)          & \textbf{97.69 (0.13)} & \textbf{93.21 (0.10)} \\
                                   & Sinkhorn & 97.63 (0.02)          & \textbf{91.97 (0.14)} & 97.53 (0.23)          & 92.77 (0.18)          \\
                                   & SWD      & 97.67 (0.13)          & 91.69 (0.06)          & 91.43 (0.72)          & 87.71 (0.16)          \\ \midrule
\multirow{5}{*}{\textit{E30}} & \textcolor{mygray}{$\ell_2$}     & \textcolor{mygray}{97.07 (0.10)}         & \textcolor{mygray}{91.74 (0.10)}          & \textcolor{mygray}{98.24 (0.14)}          & \textcolor{mygray}{95.28 (0.24)}          \\
 & Flat     & 96.75 (0.18)          & 91.10 (0.15)          & 76.12 (0.66)          & 76.65 (0.74)          \\ \cmidrule(l){2-6} 
                                   & FastFT   & 97.02 (0.07)          & \textbf{91.98 (0.23)} & \textbf{97.65 (0.32)} & \textbf{94.90 (0.26)} \\
                                   & Sinkhorn & 96.99 (0.25)          & 91.87 (0.22)          & 97.54 (0.31)          & 94.59 (0.18)          \\
                                   & SWD      & \textbf{97.12 (0.13)} & 91.55 (0.31)          & 84.89 (0.74)          & 83.66 (0.88)          \\ \bottomrule
\end{tabular}
    \label{tab:breeds-coarse}
\end{table}

\section{Analysis for the Time Complexity of OT approximation methods}
\label{app:time-complexity} 

In this section, we provide the details and references for the reported time complexities in Tab.\ref{tab:ot_comparison}.

In the following time complexity analysis, denote $b$ as batch size. We assume each class-conditioned cluster has the same size $n$, which implies $b = nk$, where $k$ is the number of classes.

For $\ell_2$, given one batch of batch size $b$, computing Euclidean centroids for all classes requires scanning through all $d$-dimensional data points once, which requires $O(bd) = O(nkd)$. We have at most $k$ classes in one batch, so computing the pairwise distances between class centroids will cost $O(k^2d)$. The total cost for $\ell_2$ is $O((nk + k^2)d)$.

For OT methods, we have $k$ class-conditioned clusters, resulting in $O(k^2)$ pairs of matrices as input for all OT methods. For each pair, the computation time of OT methods can range from linear to cubic with respect to $n$. For instance, each FastFT computation takes $O(nd)$ as shown in Theorem 3.2, so the total computation time becomes $O(k^2nd)$. In general, the computation time for all OT methods can be expressed as $O(k^2 \cdot \text{OT-time})$. Next, we elaborate on the ``OT-time’’ term.

For EMD, the complexity is typically $O(n^3\log n)$ in the past literature \citep{Cuturi_2013}, which primarily focuses on the influence of the number of samples $n$. However, in our setting, because $d >> n$, we aim to emphasize the effect of $d$. In the time complexity expression for EMD, the $O(n^3\log n)$ term arises from solving the linear programming problem, while the $O(nd)$ term accounts for computing the objective function $\left<\mP, \mD\right>$. Due to the sparse nature of the linear programming solution, there are only $O(n)$ non-zero values in $\mP$. Consequently, we compute $O(n)$ $d$-dimensional pairwise distances in $\mD$, making the dot product computation require $O(nd)$ in total. Therefore, the overall complexity of EMD is $O(n(d + n^2\log n))$.

To get the approximated flow matrix, Sinkhorn \citep{Cuturi_2013} mainly depends on the iterative matrix scaling algorithm. For each $I$ iteration, we normalize the row and column of a matrix of size $O(n^2)$, so the complexity for running this iterative algorithm is $O(n^2I)$. Unlike in EMD, the approximated flow matrix $P$ derived from Sinkhorn is not sparse. Therefore, the dot product of the objective function will now take $O(n^2d)$ in total. The total complexity of Sinkhorn is $O(n^2(d+I))$.

To obtain the approximated flow matrix, Sinkhorn \citep{Cuturi_2013} relies on the iterative matrix scaling algorithm. During each of the $I$ iterations, we normalize the rows and columns of a matrix of size $O(n^2)$, resulting in a complexity of $O(n^2I)$ for the iterative algorithm. Unlike EMD, the approximated flow matrix $\mP$ derived from Sinkhorn is not sparse. Consequently, computing the dot product for the objective function requires $O(n^2d)$ in total. Therefore, the overall complexity of Sinkhorn is $O(n^2(d+I))$.

For FT \citep{Backurs_Dong_Indyk_Razenshteyn_Wagner_2020}, the most computationally expensive step is the tree construction using the QuadTree algorithm. \citet{Backurs_Dong_Indyk_Razenshteyn_Wagner_2020} demonstrate that this tree can be built in $O(nd\log (d\Phi))$. Once the tree is constructed, as shown in Lemma 3.1 of \citet{Backurs_Dong_Indyk_Razenshteyn_Wagner_2020}, FT requires $O(n(d + \log (d\Phi)))$ to run the bottom-up algorithm and compute the objective function, which is less expensive than the tree construction. Thus, the total complexity of FT is $O(nd\log (d\Phi))$.

For Fast FlowTree, please refer to the proof of Thm. \ref{prop:fft_alg}.

For TWD, we follow Algorithm 1 in \citet{yamada2022approximating}. The first step is to construct a tree. To save computation, instead of using QuadTree, we leverage the same technique as in FastFT to use our augmented tree, making this step $O(1)$. For each of the $I$ iterations in supervised learning, we need to compute all entries in the pairwise distance matrix $\mD$, which requires $O(n^2d)$. Thus, even when the tree structure is known, the total complexity of TWD is $O(n^2dI)$.

Finally, we want to highlight that Fig.\ref{fig:synthetic} is evaluated with $k = 2$, where we vary the number of samples in one source and one target distribution. In this scenario, although the theoretical computational complexities for $\ell_2$ and Fast FlowTree are both $O(nd)$, we observe that $\ell_2$ scales better in practice. This is due to the effect of standard vectorization optimizations in common matrix operation packages, which leverage parallelism along the dimension of $n$.

\subsection{Comparison to Other Hierarchical Baselines} In Fig.\ref{fig:hierarchical-baselines}, we compared OT-CPCC methods with other hierarchical objectives described in Sec.\ref{sec:objective}, and we report the Fine/Coarse Metric Avg., which averages all source/target classification and retrieval metrics. Based on CPCC, fine, and coarse metrics, we observe that OT-CPCC methods behave similarly to $\ell_2$-CPCC, exhibiting higher CPCC values and stronger performance on coarse metrics, but lower fine metrics (though OT methods still outperform $\ell_2$ on fine metrics, as shown in Tab.\ref{tab:fine}). In contrast, all other hierarchical baseline objectives perform more like flat cross-entropy methods, with the opposite trend—lower CPCC and coarse metrics but relatively higher fine metrics. Additionally, we excluded Rank loss from the correlation plot on the right side of Fig.\ref{fig:hierarchical-baselines}, as Tab.\ref{tab:rank} suggests that Rank loss is not suitable for any of the downstream tasks used in our evaluation, making it an outlier compared to the other objectives.

\begin{figure}[!ht]
    \centering
    \begin{minipage}{0.45\textwidth}
    \centering
    \begin{tabular}{@{}cc@{}}
    \toprule
    \textbf{Objective} & \textbf{CPCC}         \\ \midrule
    $\ell_2$ & \cellcolor{tgreen!85}{83.08} \\
    FastFT & \cellcolor{tgreen!100}{93.85} \\
    EMD & \cellcolor{tgreen!94}{89.60} \\
    Sinkhorn & \cellcolor{tgreen!99}{93.81} \\
    SWD & \cellcolor{tgreen!71}{73.17} \\ \midrule
    Flat & \cellcolor{tgreen!0}{22.13} \\ \
    MTL & \cellcolor{tgreen!21}{37.07} \\
    Curr & \cellcolor{tgreen!1}{22.70} \\
    SumLoss & \cellcolor{tgreen!7}{27.48} \\
    HXE & \cellcolor{tgreen!1}{22.65} \\
    Soft & \cellcolor{tgreen!40}{50.90} \\
    Quad & \cellcolor{tgreen!2}{24.01} \\
    SHC & \cellcolor{tgreen!12}{30.68} \\
    GC & \cellcolor{tgreen!0}{21.88} \\
    Rank & \cellcolor{tgreen!3}{24.75} \\ \bottomrule
    \end{tabular}
    \end{minipage}%
    \hspace{0.1cm}
    \begin{minipage}{0.53\textwidth}
        \centering
        \includegraphics[width=\textwidth]{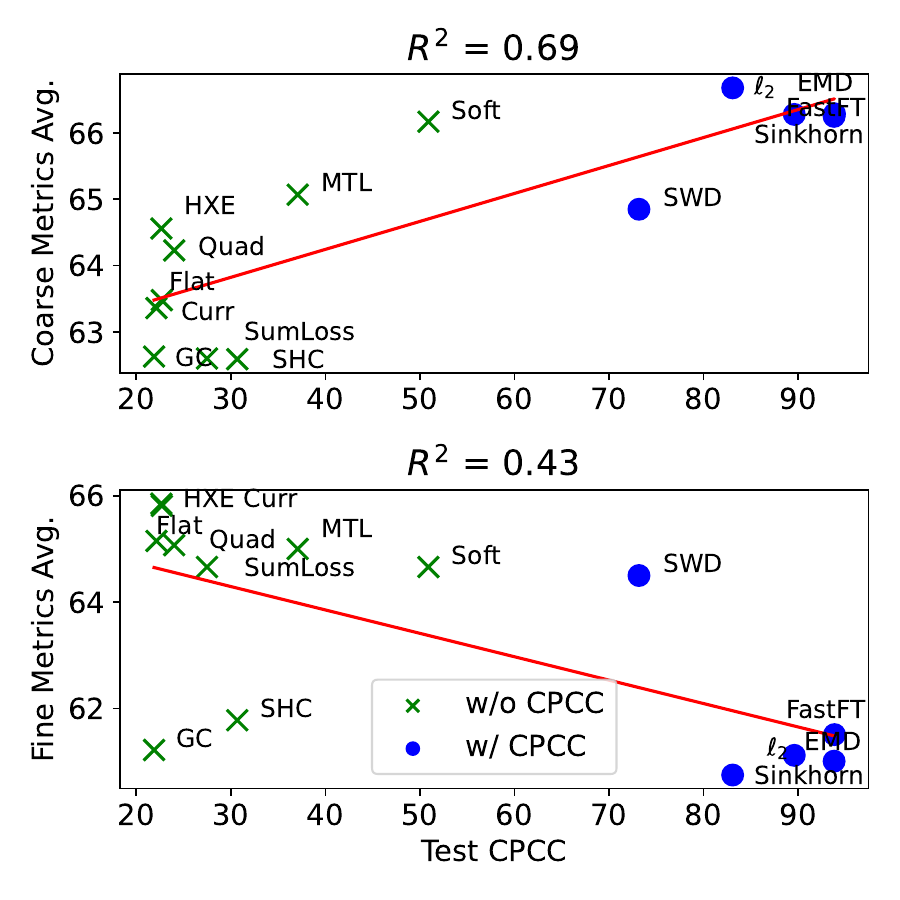} 
    \end{minipage}
\caption{Left: Comparison of CPCC for hierarchical baselines on CIFAR100. Right: CPCC against coarse and fine metrics average. Rank is an outlier, so its value is removed from the visualization.}   
\label{fig:hierarchical-baselines}
\end{figure}

\begin{table}[!ht]
\centering
\caption{Comparison of fine and coarse metrics for Flat vs. Rank Loss.}
\begin{tabular}{cccccccc}
\toprule
\multirow{2}{*}{\textbf{Objective}} & \multicolumn{3}{c}{\textbf{Fine Metrics}} & \multicolumn{4}{c}{\textbf{Coarse Metrics}} \\
\cmidrule(lr){2-4} \cmidrule(lr){5-8}
 & \textbf{sAcc} & \textbf{tAcc} & \textbf{sMAP} & \textbf{sAcc} & \textbf{tAcc} & \textbf{sMAP} & \textbf{tMAP} \\
\midrule
Flat & 80.53 & 28.44 & 86.47 & 86.17  & 43.09  & 82.07  & 42.09  \\
Rank & 77.89 & 20.48 & 74.39 & 83.41 & 39.25 & 46.60 & 24.31\\
\bottomrule
\end{tabular}
\label{tab:rank}
\end{table}

\section{Implementation Details for Synthetic Datasets Experiments} 
\label{app:synthetic}
\paragraph{Computational Efficiency} We measured the computation time for different OT methods across datasets of varying sizes. This involved generating two random $128$-dimensional datasets of identical shape and recording the wall-clock time for a single distance computation. This process was repeated $10$ times, and the average time was reported. For illustrative purposes, we fixed the number of projections at $10$ for SWD, and the regularization parameter at $10$ for Sinkhorn, the same as in our experiment setting in Tab.\ref{tab:coarse},\ref{tab:fine}. 

\paragraph{Approximation Error} 
We evaluated the error of different OT approximation methods, along with the $\ell_2$ distance of the mean of two datasets, in approximating EMD. Two different distribution scenarios were considered:
\begin{itemize}
    \item \textbf{Gaussian Distribution}: Here, both distributions were created as Gaussian distributions with different means (1 and 4, respectively) and the same standard deviation of 1.
    \item \textbf{Non-Gaussian Distribution}: For this scenario, we generated the dataset from Gaussian mixtures. One dataset combined Gaussian distributions with means of 0 and 5, while the other merged Gaussian distributions with means of 2 and 3, all of them with a standard deviation of 1. 
\end{itemize}
Other experiment settings were consistent with those detailed in the Computational Efficiency paragraph above.

\section{Comparison of Runtime of OT-CPCC on Real-world Datasets}
\begin{table}[!ht]
\centering
\caption{Comparison of the wall clock runtime of OT-methods on real datasets for training one epoch. The results are averaged for three random seeds.}
\label{tab:ot-wallclock}
\begin{tabular}{@{}cccccc@{}}
\toprule
\textbf{Dataset} & \textbf{Objective}               & \textbf{Per Epoch (s)} & \textbf{Dataset} & \textbf{Objective}               & \textbf{Per Epoch (s)} \\ \cmidrule(lr){1-3} \cmidrule(lr){4-6}
                 & Flat                             & 85.88 (3.69)           &                  & Flat                             & 147.67 (4.54)          \\
                 & $\ell_2$                               & 83.78 (9.80)           &                  & $\ell_2$                               & 149.39 (1.24)          \\
                 & FastFT                           & 85.21 (4.26)           &                  & FastFT                           & 132.01 (3.24)          \\
                 & \cellcolor[HTML]{FFFFFF}EMD      & 87.70 (2.43)           &                  & \cellcolor[HTML]{FFFFFF}EMD      & 234.68 (4.40)          \\
                 & \cellcolor[HTML]{FFFFFF}Sinkhorn & 85.65 (4.15)           &                  & \cellcolor[HTML]{FFFFFF}Sinkhorn & 165.69 (4.60)          \\
\multirow{-6}{*}{\begin{tabular}[c]{@{}c@{}}\textsl{CIFAR10}\\ (b=128)\end{tabular}} &
  \cellcolor[HTML]{FFFFFF}SWD &
  92.90 (4.65) &
  \multirow{-6}{*}{\begin{tabular}[c]{@{}c@{}}\textsl{INAT}\\ (b=1024)\end{tabular}} &
  \cellcolor[HTML]{FFFFFF}SWD &
  152.52 (8.90) \\ \bottomrule
\end{tabular}
\end{table}

We include the wall clock run times for training different CPCC methods over one epoch in Tab.\ref{tab:ot-wallclock}, using two distinct settings: CIFAR10 with a batch size of 128 and model training from scratch, and INAT with a batch size of 1024 and model fine-tuning from an ImageNet checkpoint. Despite the differences in settings, we observe that FastFT consistently runs significantly faster than exact EMD computation and outperforms the other two approximation methods, SWD and Sinkhorn. This aligns with our theoretical analysis in Tab.\ref{tab:ot_comparison} and observations on synthetic data in Fig.~\ref{fig:synthetic}. When comparing FastFT to $\ell_2$, we find that FastFT is not much slower, and in the case of INAT, it is even faster than $\ell_2$. Thus, we can confidently conclude that our proposed method, especially FastFT, is scalable. 

\begin{table}[!ht]
\caption{Comparison of the wall clock runtime of Bures-CPCC vs. OT-CPCC on real datasets for training one batch. The results are averaged for three random seeds.}
\centering
\begin{tabular}{@{}ccc@{}}
\toprule
\textbf{Dataset} & \textbf{Objective} & \textbf{Per Batch (s)} \\ \midrule
                 & $\ell_2$                 & 0.35 (0.18)            \\
CIFAR100         & EMD                & 1.94 (0.12)            \\
                 & Bures              & 33.77 (4.73)           \\ \bottomrule
\end{tabular}
\label{tab:bures-others}
\end{table}

Similar to $\ell_2$, one could argue for using another closed-form Wasserstein expression, the Bures-Wasserstein distance (as defined in Eq.\ref{eq:bures}), in CPCC optimization. However, Bures depends on the covariance term of the source and target distributions, which requires at least $O(nd^2)$ computation. As shown in Tab.\ref{tab:ot_comparison}, in $\ell_2$, calculating the mean of a class-conditioned cluster only requires $O(nd)$, and all other OT methods have a linear $d$ term in their time complexity. In CIFAR100, even with a large batch size of 1024, $n$ is approximately $1024/100 \approx 10$, assuming all classes are of equal size. Using ResNet18 as the backbone, $d = 512$, which is typically much larger than $n$. Therefore, the $d^2$ factor from computing the covariance matrix in Bures can make it prohibitively expensive in our case. To verify this, we computed the running time for CIFAR100 (batch size = 128) for different methods in Tab.\ref{tab:bures-others}, considering one batch update. The results are averaged over three random seeds.

As observed in Tab.\ref{tab:bures-others}, Bures-Wasserstein is significantly slower than both $\ell_2$ and EMD due to its quadratic dependence on $d$. Therefore, for scalability concerns, covariance-based Wasserstein metrics are not well-suited to our problem setup.

\end{document}